\newtheorem{theorem}{Theorem}
\newtheorem{definition}{Definition}
\newtheorem{lemma}{Lemma}
\newtheorem{assumption}{Assumption}
\newtheorem{example}{Example}
\newcommand{\bb}[1]{{\mathbb{#1}}}
\def\eqref#1{equation~\ref{#1}}
\def\1{\bm{1}}
\newcommand{\defeq}{\vcentcolon=}
\def\vmu{{\bm{\mu}}}
\def\vtheta{{\bm{\theta}}}
\def\va{{\bm{a}}}
\def\vb{{\bm{b}}}
\def\vx{{\bm{x}}}
\def\vy{{\bm{y}}}
\def\vz{{\bm{z}}}
\def\vomega{{\bm{\omega}}}
\def\vtheta{{\bm{\theta}}}
\def\vphi{{\bm{\phi}}}
\def\vmu{{\bm{\mu}}}
\def\Hw{{H_\vomega(\vx)}}
\def\Et{{E_\vtheta(\vx)}}
\def\HE{{\Hw + \Et}}
\newcommand{\at}[2][]{#1|_{#2}}
\def\otheta{{\vtheta^*}}
\def\oomega{{\vomega^*}}
\def\ophi{{\vphi^*}}
\def\vJ{{\bm{J}}}
\def\vK{{\bm{K}}}
\def\vP{{\bm{P}}}
\def\vQ{{\bm{Q}}}
\def\vS{{\bm{S}}}
\def\vU{{\bm{U}}}
\def\vI{{\bm{I}}}
\DeclareMathAlphabet{\mathsfit}{\encodingdefault}{\sfdefault}{m}{sl}
\SetMathAlphabet{\mathsfit}{bold}{\encodingdefault}{\sfdefault}{bx}{n}
\def\gD{{\mathcal{D}}}
\def\gH{{\mathcal{H}}}
\def\gL{{\mathcal{L}}}
\def\gT{{\mathcal{T}}}
\def\gX{{\mathcal{X}}}
\def\gZ{{\mathcal{Z}}}
\newcommand{\R}{\mathbb{R}}
\newcommand{\DF}{D_f}
\DeclareMathOperator*{\argmax}{arg\,max}
\icmltitlerunning{$f$-EBM: Training EBMs with $f$-Divergence Minimization}
\begin{document}

\twocolumn[
\icmltitle{%
Training Deep Energy-Based Models with $f$-Divergence Minimization
}

\begin{icmlauthorlist}
\icmlauthor{Lantao Yu}{st}
\icmlauthor{Yang Song}{st}
\icmlauthor{Jiaming Song}{st}
\icmlauthor{Stefano Ermon}{st}
\end{icmlauthorlist}

\icmlaffiliation{st}{Department of Computer Science, Stanford University, Stanford, CA 94305 USA}

\icmlkeywords{Machine Learning, ICML}

\vskip 0.3in
]

\icmlcorrespondingauthor{Lantao Yu}{lantaoyu@cs.stanford.edu}
\icmlcorrespondingauthor{Stefano Ermon}{ermon@cs.stanford.edu}
\printAffiliationsAndNotice{}  %

\begin{abstract}
Deep energy-based models (EBMs) are very flexible in distribution parametrization but computationally challenging because of the intractable partition function. They are typically trained via maximum likelihood, using contrastive divergence to approximate the gradient of the KL divergence between data and model distribution. While KL divergence has many desirable properties, other $f$-divergences have shown advantages in training implicit density generative models such as generative adversarial networks. In this paper, we propose a general variational framework termed $f$-EBM to train EBMs using any desired $f$-divergence. We introduce a corresponding optimization algorithm and prove its local convergence property with non-linear dynamical systems theory. Experimental results demonstrate the superiority of $f$-EBM over contrastive divergence, as well as the benefits of training EBMs using $f$-divergences other than KL.

\comment{
Deep energy-based models (EBMs) have received increasing attention due to their 
flexibility in distribution parametrization. Because of the intractable partition function, 
contrastive divergence has been the predominant approach for training EBMs, which essentially estimates the gradients of KL divergence between data and model distribution through MCMC sampling. 
However, in various application scenarios, users may have different preferences such as sharpness vs. diversity. With KL divergence being a special case, $f$-divergences provide a general family of discrepancy measures between two distributions, which have been extensively explored in conjunction with GANs for training implicit density generative models. Since explicit density generative models offer many advantages, there is an urgent need to investigate how to use other $f$-divergences to train EBMs. In this paper, we propose a general variational framework termed $f$-EBM, which enables us to use any $f$-divergence to train unnormalized EBMs. We further theoretically prove the local convergence property of the proposed algorithm with the theory of non-linear dynamical systems. Experimental results demonstrate the superiority of $f$-EBM over contrastive divergence, as well as the characteristics and benefits of different $f$-divergences for training EBMs.
}
\end{abstract}

\section{Introduction}
Learning deep generative models that can approximate complex distributions over high-dimensional data is an important problem in machine learning, with many applications such as image, speech, natural language generation \cite{radford2015unsupervised,oord2016wavenet,yu2017seqgan} and imitation learning~\cite{ho2016generative}.
To this end, two major branches of generative models have been widely studied: tractable density and implicit density generative models. To enable the use of maximum likelihood training, tractable density models have to use specialized architectures to build a normalized probability model. These include autoregressive models~\cite{larochelle2011neural,germain2015made,oord2016pixel,van2016conditional}, flow-based models~\cite{dinh2014nice,dinh2016density,kingma2018glow} and sum-product networks~\cite{poon2011sum}.
However, such a normalization requirement can hinder the flexibility and expressiveness. For example, 
flow-based models rely on invertible transformations with tractable Jacobian determinant, while sum-product networks rely on special graph structures (with sums and products as internal nodes) to obtain tractable density.
To overcome the limitations caused by the constraint of specifying a normalized explicit density, Generative Adversarial Networks (GANs) \cite{goodfellow2014generative} proposed to learn implicit density generative models within a minimax framework, where the generative model is a direct differentiable mapping from noise space to sample space. So far many variants of GANs have been proposed in order to minimize various discrepancy measures \cite{nowozin2016f,arjovsky2017wasserstein}. In particular, $f$-GANs \cite{nowozin2016f} minimize a variational representation of $f$-divergence between data and model distribution, which is a general family of divergences for measuring the discrepancies between two distributions.
Although appealing, a main drawback of GANs is the lack of explicit density, which is the central goal of density estimation and plays an indispensable role in applications such as anomaly detection, image denoising and inpainting. 

Deep energy-based models (EBMs) are promising to combine the best of both worlds, in the sense that EBMs allow both extremely flexible distribution parametrization and the access to an (unnormalized) explicit density. 
However, because of the intractable partition function (an integral over the sample space), so far within the family of $f$-divergences, only KL divergence proved to be tractable to optimize with methods such as contrastive divergence \cite{hinton2002training}, doubly dual embedding \cite{dai2018kernel} and adversarial dynamics embedding \cite{dai2019exponential}.
Since the progress of generative modeling research was mainly driven by the study on the properties and tractable optimization of various discrepancy measures, there is an urgent need to investigate the possibility of training EBMs with other $f$-divergences. Specifically, the choice of discrepancy measures embodies our preferences and has a significant influence on the learned model distribution (see Figure~\ref{fig:divergences} for illustration of the effect of some popular $f$-divergences on training EBMs in a model misspecification scenario). 
For example, it has been found that minimizing KL (Maximum Likelihood Estimation) is not directly correlated with sample quality \cite{theis2015note}, and in many application scenarios we need more flexibility in trading off mode collapse vs. mode coverage.
In this paper, we propose a new variational framework termed $f$-EBM to enable the use of any $f$-divergence for training EBMs. Our framework also naturally produces a density ratio estimation, which can be used for importance sampling and bias correction \cite{grover2019bias}. For example, we show that in conjunction with rejection sampling, the learned density ratio can be used to recover the data distribution even in a model misspecification scenario (see Figure~\ref{fig:divergences_rs} for illustration), which traditional methods for minimizing KL divergence such as contrastive divergence cannot do. Furthermore, with the theory of non-linear dynamical systems \cite{hassan1996khalil}, we establish a rigorous proof on the local convergence property of the proposed single-step gradient optimization algorithm for $f$-EBM. Finally, experimental results demonstrate the benefits of using various $f$-divergences to train EBMs, and more importantly, with some members in the $f$-divergence family as the training objectives (\emph{e.g.}, Jensen-Shannon, Squared Hellinger and Reverse KL), we are able to achieve significant sample quality improvement compared to contrastive divergence method on some commonly used image datasets.

\begin{figure*}[t]
\centering
\subfigure[KL]{
\includegraphics[height=1.2in]{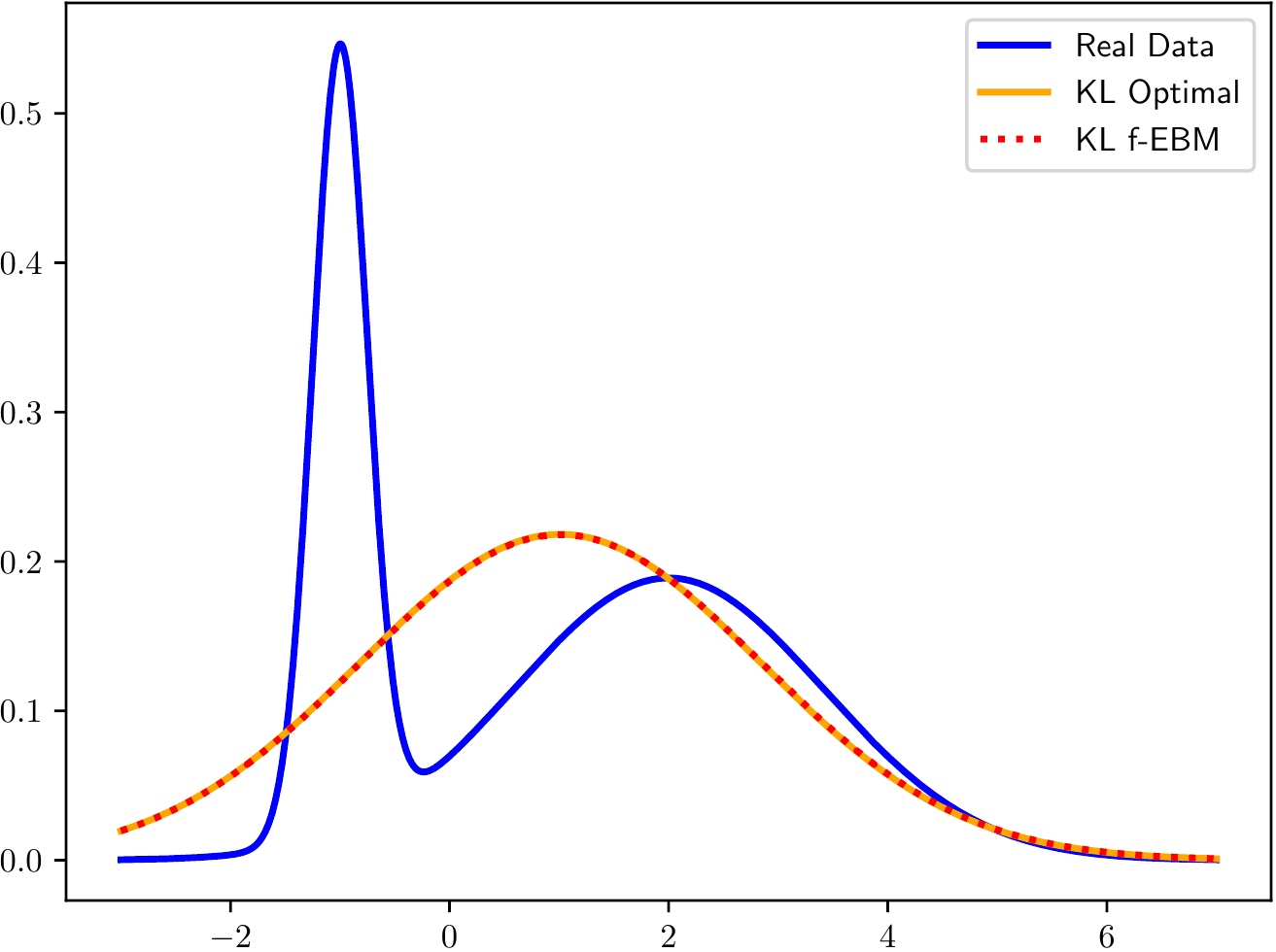}\label{fig:kl}}
\subfigure[Reverse KL]{
\includegraphics[height=1.2in]{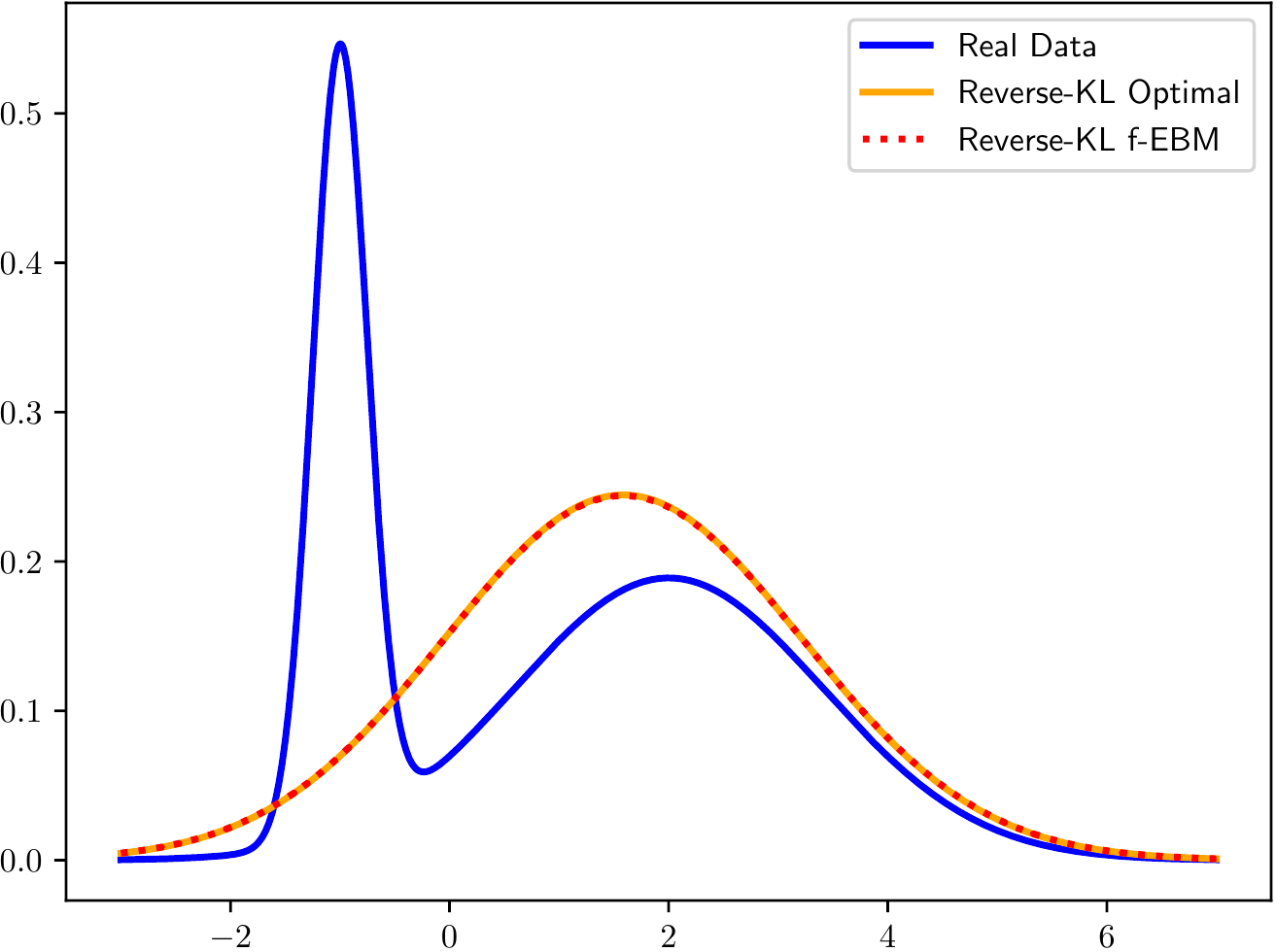}\label{fig:rev-kl}}
\subfigure[Jensen-Shannon]{
\includegraphics[height=1.2in]{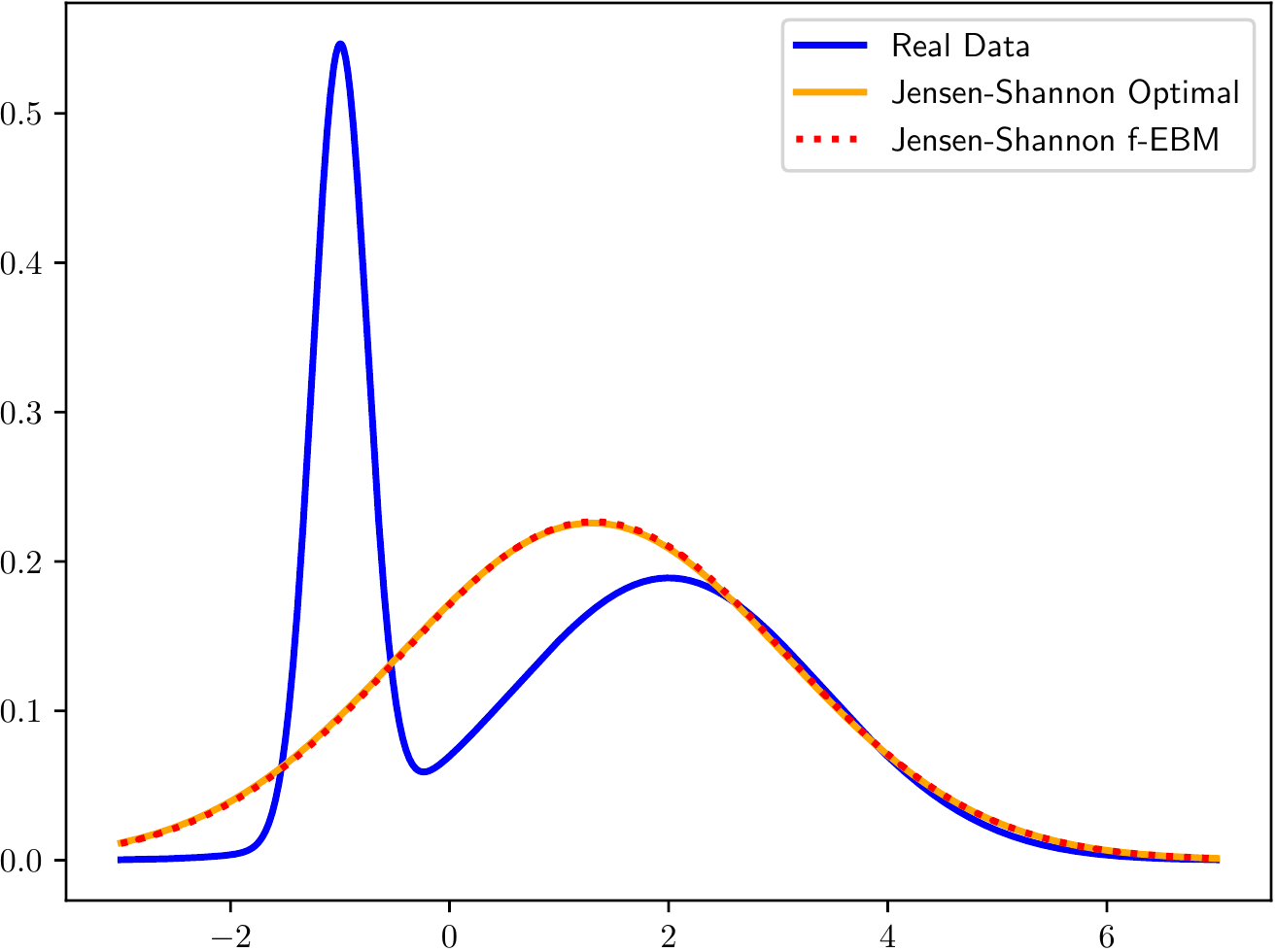}\label{fig:jensen-shannon}}
\subfigure[$\alpha$-Divergence ($\alpha=-1$)]{
\includegraphics[height=1.2in]{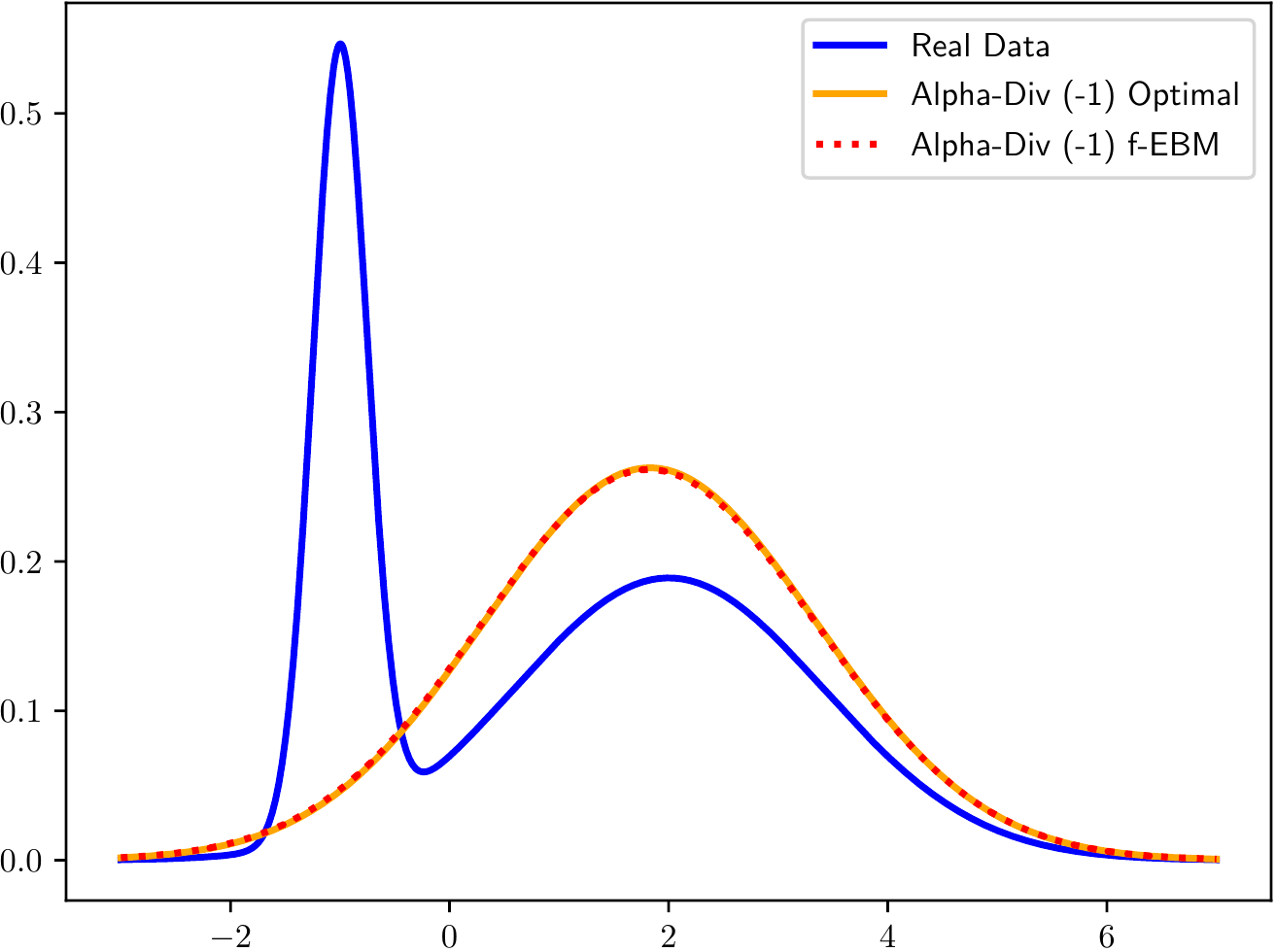}\label{fig:alpha-div}}
\vspace{-8pt}
\caption{The influences of different $f$-divergences on the training of EBMs. The blue solid line represents the real data distribution; the orange solid line represents the optimal model distribution under a certain discrepancy measure obtained by directly solving the minimization problem with the real data density; the red dashed line represents the model distribution learned by $f$-EBM.}
\vspace{-8pt}
\label{fig:divergences}
\end{figure*}
\section{Preliminaries}

\subsection{The $f$-Divergence Family}\label{sec:f-div-family}
Let $P$ and $Q$ denote two probability distributions with density functions $p$ and $q$ with respect to a base measure $\mathrm{d} \vx$ on domain $\mathcal{X}$. Suppose $P$ is absolutely continuous with respect to $Q$, denoted as $P \ll Q$ (\emph{i.e.}, the Radon-Nikodym derivative $\mathrm{d} P/\mathrm{d} Q$ exists). For any convex, lower-semicontinuous function $f: [0, +\infty) \to \mathbb{R}$ satisfying $f(1)=0$, the $f$-divergence between $P$ and $Q$ is defined as:
\begin{align}
    \DF(P\|Q) \defeq \int_\gX q(\vx) f\left(\frac{p(\vx)}{q(\vx)}\right) \mathrm{d} \vx
    = \mathbb{E}_{q(\vx)} \left[f\left(\frac{p(\vx)}{q(\vx)}\right)\right]. \nonumber
\end{align}
Many divergences are special cases obtained by choosing a suitable \emph{generator function} $f$. For example, $f(u) = u \log u$ and $f(u)=-\log u$ correspond to forward KL and reverse KL respectively (see Table 5 in \cite{nowozin2016f} for more examples). In the rest of this paper, we will consider the generator function $f$ to be strictly convex and continuously differentiable, and we will use $f'$ to denote the derivative of $f$.

\begin{definition}[Fenchel Duality]
For any convex, lower-semicontinuous function $f: \gX \to \bb{R} \cup \{+ \infty\}$ defined over a Banach space $\gX$, the Fenchel conjugate function of $f$, $f^*$ 
is defined over the dual space $\gX^*$ as:
\begin{align}
    f^*(\vx^*) \defeq \sup_{\vx \in \gX} \langle \vx^*, \vx \rangle - f(\vx),
\end{align}
where $\langle \cdot, \cdot \rangle$ is the duality paring between $\gX$ and $\gX^*$. For a finite dimensional space $\bb{R}^m$, the dual space is also $\bb{R}^m$ and the duality paring is the usual vector inner product.
\end{definition}
 The function $f^*$ is also convex and lower-semicontinuous and more importantly, we have $f^{**} = f$, which means we can represent $f$ through its conjugate function as:
\begin{align}
    f(\vx) = \sup_{\vx^* \in {\gX}^*} \langle \vx^*, \vx \rangle - f^*(\vx^*).
\end{align}
Based on Fenchel duality, \citet{nguyen2010estimating} proposed a general variational representation of $f$-divergences:

\begin{definition}
Define the subdifferential $\partial f(\vx)$ of a convex function $f: \bb{R}^m \to \bb{R}$ at a point $\vx \in \bb{R}^m$ as the set:
$
    \partial f(\vx) \defeq \{\vz \in \bb{R}^m | f(\vy) \geq f(\vx) + \langle \vz, \vy - \vx \rangle, \forall \vy \in \bb{R}^m\}
$. 
When $f$ is differentiable at $\vx$, $\partial f(\vx) = \{\nabla_\vx f(\vx)\}$. 
\end{definition}

\begin{lemma}[\citet{nguyen2010estimating}]\label{lemma:f-bound}
Let $P$ and $Q$ be two probability measures over the Borel $\sigma$-algebra on domain $\gX$ with densities $p$, $q$ and $P \ll Q$.
For any class of functions 
$\gT =\{T:\gX \rightarrow \bb{R}\}$
such that the subdifferential $\partial f(p/q)$ contains an element of $\gT$, 
we have
\begin{align}
    \DF(P \| Q) = \sup_{T \in \gT} \bb{E}_{p(\vx)} [T(\vx)] - \bb{E}_{q(\vx)}[f^*(T(\vx))],\label{eq:f-bound}
\end{align}
where the supreme is attained at $T^*(\vx) = f'\left(\frac{p(\vx)}{q(\vx)}\right)$.
\end{lemma}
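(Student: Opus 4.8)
The plan is to deduce both directions of the claimed identity~\eqref{eq:f-bound} from the pointwise Fenchel--Young inequality. Since $f$ is convex and lower-semicontinuous we have $f^{**}=f$, so for every $u\in[0,+\infty)$ and every $t\in\bb{R}$,
\[
    ut \;\le\; f(u) + f^*(t), \qquad \text{with equality}\iff t\in\partial f(u).
\]
For the inequality $\sup_{T\in\gT}\big(\bb{E}_{p}[T]-\bb{E}_{q}[f^*(T)]\big)\le\DF(P\|Q)$: fix $T\in\gT$, apply the above with $u=p(\vx)/q(\vx)$ and $t=T(\vx)$, and multiply by $q(\vx)\ge 0$ to get, $Q$-almost everywhere,
\[
    p(\vx)\,T(\vx) - q(\vx)\,f^*(T(\vx)) \;\le\; q(\vx)\,f\!\left(\frac{p(\vx)}{q(\vx)}\right).
\]
Integrating over $\gX$, and using $P\ll Q$ so that $\{q=0\}$ is $P$-null and contributes nothing to the left-hand side, yields $\bb{E}_{p(\vx)}[T(\vx)]-\bb{E}_{q(\vx)}[f^*(T(\vx))]\le\DF(P\|Q)$; now take the supremum over $T\in\gT$.

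For attainment, recall the hypothesis that $\partial f(p/q)$ contains an element of $\gT$; since $f$ is differentiable this element is $T^*(\vx)=f'(p(\vx)/q(\vx))$. Because equality in Fenchel--Young holds exactly when $t\in\partial f(u)$, the displayed inequality becomes an identity at ($Q$-almost) every $\vx$ when $T=T^*$, so integrating gives $\bb{E}_{p(\vx)}[T^*(\vx)]-\bb{E}_{q(\vx)}[f^*(T^*(\vx))]=\DF(P\|Q)$. Together with the previous bound, this shows the supremum is attained at $T^*$ and equals $\DF(P\|Q)$, which is exactly the statement.

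The algebraic content is therefore just the one-line Fenchel--Young inequality and its equality condition, and exhibiting the maximizer $T^*$ explicitly sidesteps any measurable-selection subtlety that would otherwise arise from interchanging the supremum with the integral. The part that requires care — and the only real obstacle — is the remaining measure-theoretic bookkeeping: one must check that $\bb{E}_{p}[T]$ and $\bb{E}_{q}[f^*(T)]$ are well-defined (valued in $[-\infty,+\infty)$) for the functions in play, in particular that $f^*\circ T^*$ is $Q$-integrable, which follows from $q\,(f^*\circ T^*) = p\,T^* - q\,f(p/q)$ together with finiteness of $\DF(P\|Q)$; one must also fix conventions on the set $\{q=0\}$ and on possible $+\infty$ values of $f^*$, and verify that $\vx\mapsto f'(p(\vx)/q(\vx))$ is measurable so that it is a legitimate member of a function class $\gT$. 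Each of these is routine once the pointwise picture above is in hand.
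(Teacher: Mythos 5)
Your proof is correct: the pointwise Fenchel--Young inequality with equality iff $t\in\partial f(u)$, applied at $u=p/q$ and integrated against $q$, is exactly the standard argument, and your handling of $\{q=0\}$ and of the maximizer $T^*=f'(p/q)$ is sound. The paper itself gives no proof of this lemma --- it simply cites \citet{nguyen2010estimating} --- and your argument is essentially the one in that reference, so there is nothing to reconcile.
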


\subsection{Energy-Based Generative Modeling}\label{sec:ebm-intro}
Suppose we are given a dataset consisting of \emph{i.i.d.} samples $\{\vx_i \}_{i=1}^N$ from an unknown data distribution $p(\vx)$, defined over a sample space $\gX \subset \mathbb{R}^m$. Our goal is to find a parametric approximation $q_\vtheta(\vx)$ to the data distribution, such that we can perform downstream inferences (\emph{e.g.}, density evaluation and  sampling). Specifically, we are interested in learning an (unnormalized) energy function $E_\vtheta(\vx)$ that defines the following normalized distribution:
\begin{align}
    q_\vtheta(\vx) = \exp(-E_\vtheta(\vx))/Z_{\vtheta},
\end{align}
where $Z_\vtheta = \int_\gX \exp(-E_\vtheta(\vx)) \mathrm{d} \vx$ 
is the normalization constant (also called the partition function). In this paper, unless otherwise stated, we will always use $p$ to denote the data distribution and $q_\vtheta$ to denote the model distribution. Moreover, we will assume $\exp(-E_\vtheta(\vx))$ is integrable over sample space $\gX$, \emph{i.e.}, $Z_\vtheta$ is finite. Because of the absence of the normalization requirement, energy-based distribution representation offers extreme modeling flexibility in the sense that we can use almost any model architecture that outputs a real number given an input as the energy function. 

While flexible, inference in EBMs is challenging because of the partition function, which is generally intractable to compute exactly. For example, it is non-trivial to sample from an EBM, which usually requires Markov Chain Monte Carlo (MCMC) \cite{robert2013monte} techniques.
As a gradient based MCMC method, Langevin dynamics \cite{neal2011mcmc,welling2011bayesian}
defines an efficient iterative sampling process, which asymptotically can produce samples from an energy-based distribution:
\begin{align}
    \Tilde{\vx}_t = \Tilde{\vx}_{t-1} - \frac{\epsilon}{2} \nabla_\vx E_\vtheta(\Tilde{\vx}_{t-1}) + \sqrt{\epsilon} \vz_t, \label{eq:langevin-dynamics}
\end{align}
where $\vz_t \sim \mathcal{N}(0, I)$. The distribution of $\Tilde{\vx}_T$ converges to the model distribution $q_\vtheta(\vx) \propto \exp(- E_\vtheta(\vx))$ when $\epsilon \to 0$ and $T \to \infty$ under some regularity conditions \cite{welling2011bayesian}. 
Although few bounds on the mixing time are known, in practice researchers found that using a relatively small $\epsilon$ and finite $T$ suffices to produce good samples and
so far various scalable techniques have been developed for the purpose of sampling from an EBM efficiently. 
For example, 
\citet{du2019implicit} proposed to use Langevin dynamics together with a sample replay buffer to reduce mixing time and improve sample diversity, which is a sampling strategy we employ in the experiments.

\comment{
\subsection{Training Implicit Density Generative Models with Variational $f$-Divergence Minimization}\label{sec:variational-f-div}
For the purpose of training implicit density generative models (also known as generative neural samplers which are deterministic and differentiable mappings from noise space to sample space)
, \citet{nowozin2016f} proposed the following $f$-GAN objective to minimize the variational representation of $\DF(P\|Q_\vtheta)$ in Equation~(\ref{eq:f-bound}):
\begin{align}
    \min_\vtheta \max_\vomega \bb{E}_{p(\vx)}[T_\vomega(\vx)] - \bb{E}_{p(\vz)}[f^*(T_\vomega(G_\vtheta(\vz)))],\label{eq:f-gan}
\end{align}
where $p(\vz)$ is a fixed noise distribution (typically Gaussian distribution); $G_\vtheta: \gZ \to \gX$ is a direct differentiable mapping from noise space to sample space and $T_\vomega: \gX \to \bb{R}$ is the variational function. Note that the original GAN objective \cite{goodfellow2014generative} is also a special case with $f(u) = u \log u - (u+1) \log (u + 1)$.
}

\subsection{Tackling Intractable Partition Function with Contrastive Divergence}\label{sec:preliminary-cd}
The predominant approach to training explicit density generative models is to minimize the KL divergence between the (empirical) data distribution and model distribution, which is a specific member within the $f$-divergence family. Minimizing KL is equivalent to the following maximum likelihood estimation (MLE) objective:
\begin{align}
    \min_\vtheta \gL_{\mathrm{MLE}}(\vtheta; p) = \min_\vtheta -\mathbb{E}_{p(\vx)}\left[\log q_\vtheta(\vx)\right]\label{eq:mle}
\end{align}

\begin{lemma}\label{lemma:cd}
Given a $\vtheta$-parametrized energy-based distribution $q_\vtheta(\vx) \propto \exp(- E_\vtheta(\vx))$. Suppose both $\exp(- E_\vtheta(\vx))$ and its partial derivative $\nabla_\vtheta \exp(- E_\vtheta(\vx))$ are continuous w.r.t. $\vtheta$ and $\vx$. With Leibniz integral rule, $\forall \vx \in \gX$, we have:
\vspace{-10pt}
\begin{equation}
    \nabla_\vtheta \log q_\vtheta(\vx) = - \nabla_\vtheta E_\vtheta(\vx) + \bb{E}_{q_\vtheta(\vx)} [\nabla_\vtheta E_\vtheta(\vx)].
    \label{eq:gradloglik}
\end{equation}
\end{lemma}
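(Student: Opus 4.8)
The plan is to expand the log-density explicitly and differentiate term by term, with the only real work being the justification for differentiating the partition function under the integral sign.

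First I would write $\log q_\vtheta(\vx) = -E_\vtheta(\vx) - \log Z_\vtheta$, where $Z_\vtheta = \int_\gX \exp(-E_\vtheta(\vx))\,\mathrm{d}\vx$ is finite (and strictly positive, since $\exp(-E_\vtheta(\vx)) > 0$) by the standing assumption of the paper. Taking $\nabla_\vtheta$ of both sides, the $-E_\vtheta(\vx)$ term contributes $-\nabla_\vtheta E_\vtheta(\vx)$ directly, and it remains to show that $\nabla_\vtheta \log Z_\vtheta = -\bb{E}_{q_\vtheta(\vx)}[\nabla_\vtheta E_\vtheta(\vx)]$.

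For the second term, I would apply the chain rule, $\nabla_\vtheta \log Z_\vtheta = \frac{1}{Z_\vtheta}\nabla_\vtheta Z_\vtheta = \frac{1}{Z_\vtheta}\nabla_\vtheta \int_\gX \exp(-E_\vtheta(\vx))\,\mathrm{d}\vx$, and then invoke the Leibniz integral rule (differentiation under the integral sign) to swap $\nabla_\vtheta$ and $\int$. This is exactly the step the hypothesis is designed to license: since $\exp(-E_\vtheta(\vx))$ and its partial derivative $\nabla_\vtheta\exp(-E_\vtheta(\vx)) = -\nabla_\vtheta E_\vtheta(\vx)\exp(-E_\vtheta(\vx))$ are continuous in $(\vtheta, \vx)$ (so that on compact neighborhoods a dominating integrable bound exists), the interchange is valid. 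This gives
\begin{align}
\nabla_\vtheta \log Z_\vtheta &= \frac{1}{Z_\vtheta}\int_\gX \nabla_\vtheta \exp(-E_\vtheta(\vx))\,\mathrm{d}\vx \nonumber \\
&= \frac{1}{Z_\vtheta}\int_\gX -\nabla_\vtheta E_\vtheta(\vx)\,\exp(-E_\vtheta(\vx))\,\mathrm{d}\vx \nonumber \\
&= -\int_\gX \nabla_\vtheta E_\vtheta(\vx)\, q_\vtheta(\vx)\,\mathrm{d}\vx = -\bb{E}_{q_\vtheta(\vx)}[\nabla_\vtheta E_\vtheta(\vx)]. \nonumber
\end{align}

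Substituting back yields $\nabla_\vtheta \log q_\vtheta(\vx) = -\nabla_\vtheta E_\vtheta(\vx) + \bb{E}_{q_\vtheta(\vx)}[\nabla_\vtheta E_\vtheta(\vx)]$, which is the claim. The main (indeed only) obstacle is the interchange of differentiation and integration; everything else is bookkeeping. I would state the precise form of the Leibniz/dominated-convergence hypothesis being used and note that the paper's continuity assumptions guarantee it locally in $\vtheta$, which suffices since the gradient is a local object.
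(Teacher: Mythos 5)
Your proof is correct and matches the standard argument: the paper itself only cites \citet{turner2005cd} for this lemma, and the identity you derive, $\nabla_\vtheta \log Z_\vtheta = -\bb{E}_{q_\vtheta(\vx)}[\nabla_\vtheta E_\vtheta(\vx)]$ via the Leibniz rule, is exactly the computation the paper carries out in its own appendix proofs (of Theorems~\ref{theorem:direct-febm} and \ref{theorem:theta-gradient}). Nothing is missing; your remark about the continuity hypotheses licensing the interchange of $\nabla_\vtheta$ and the integral is precisely the role those assumptions play in the paper.
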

\begin{proof}
See \citep{turner2005cd} for derivations.
\end{proof}

Because of the intractable partition function, we do not have access to the normalized density $q_\vtheta$ and cannot directly minimize the MLE objective. Fortunately, we can use Lemma~\ref{lemma:cd} to estimate the gradient 
of $\gL_{\mathrm{MLE}}(\vtheta; p)$,
, which gives rise to the contrastive divergence method \cite{hinton2002training}:
\begin{align}
    \nabla_\vtheta \gL_{\mathrm{MLE}}(\vtheta; p) &= - \bb{E}_{p(\vx)} \nabla_\vtheta \log q_\vtheta(\vx) \label{eq:contrastive-divergence}\\
    &= \bb{E}_{p(\vx)}[\nabla_\vtheta E_\vtheta(\vx)] - \bb{E}_{q_\vtheta(\vx)}[\nabla_\vtheta E_\vtheta(\vx)]. \nonumber
\end{align}
Intuitively this amounts to decreasing the energies of ``positive'' samples from $p$ and increasing the energies of ``negative samples'' from $q_\vtheta$.
As mentioned in Section~\ref{sec:ebm-intro}, MCMC techniques such as Langevin dynamics are needed in order to sample from $q_\vtheta$, which gives rise to the following surrogate gradient estimation:
\begin{align}
    \nabla_\vtheta \gL_{\mathrm{CD-}K}(\vtheta; p) = \bb{E}_{p(\vx)}[\nabla_\vtheta E_\vtheta(\vx)] - \bb{E}_{q_\vtheta^K(\vx)}[\nabla_\vtheta E_\vtheta(\vx)],\nonumber
\end{align}
where $q_\vtheta^K$ denotes the distribution after $K$ steps of MCMC transitions from an initial distribution (typically data distribution or uniform distribution), and Equation~(\ref{eq:contrastive-divergence}) corresponds to $\gL_{\mathrm{CD-}\infty}$. 

\section{Method}
In this section, we consider a more general training objective, which is minimizing the $f$-divergence between the (empirical) data distribution and a $\vtheta$-parametrized energy-based distribution:
\begin{align}
    \min_\vtheta \DF(p\|q_\vtheta) = \min_\vtheta \mathbb{E}_{q_\vtheta(\vx)} \left[f\left(\frac{p(\vx)}{q_\vtheta(\vx)}\right)\right]
\end{align}

\subsection{Challenges of Training EBMs with $f$-Divergences}\label{sec:challenges}
\subsubsection{Challenges of The Primal Form}
The reason for the predominance of KL divergence for training EBMs is the convenience and tractability of gradient estimation. Specifically, as discussed in Section~\ref{sec:preliminary-cd}, the gradient of KL divergence reduces to the expected gradient of the log-likelihood, which can be estimated using Lemma~\ref{lemma:cd}.
However, this is not generally applicable to other divergences, and as far as we know, within the $f$-divergence family, KL divergence is the only one that permits such a convenient form. 
For example, reverse KL is another member in the $f$-divergence family with a simple form:
\begin{align}
    D_{\mathrm{RevKL}}(p \| q_\vtheta) = \bb{E}_{q_\vtheta(\vx)} \left[\log \frac{q_\vtheta(\vx)}{p(\vx)}\right].
\end{align}
The gradient can be calculated as:
\begin{align}
    & \nabla_\vtheta D_{\mathrm{RevKL}}(p \| q_\vtheta) \nonumber\\
    =&~\bb{E}_{q_\vtheta(\vx)} \left[\nabla_\vtheta \log q_\vtheta(\vx) \left(\log \frac{q_\vtheta(\vx)}{p(\vx)} + 1\right)\right].\label{eq:revkl-grad}
\end{align}
Although we can still use Equation~(\ref{eq:gradloglik}) in Lemma~\ref{lemma:cd} to evaluate the gradient of the log-likelihood term ($\nabla_\vtheta \log q_\vtheta(\vx)$) in Equation~(\ref{eq:revkl-grad}), it is still infeasible to estimate because we do not know the density ratio $q_\vtheta(\vx) / p(\vx)$ (recall that we only have samples from $p$ and $q_\vtheta$). 

\subsubsection{Challenges of The Dual Form}\label{sec:challenge-f-gan}
The variational characterization from Lemma~\ref{lemma:f-bound} provides us a way to estimate an $f$-divergence only using samples from $p$ and $q_\vtheta$. Therefore we can instead minimize the variational representation of $f$-divergence by solving the following minimax problem:
\begin{align}
    \min_\vtheta \max_\vomega \bb{E}_{p(\vx)}[T_\vomega(\vx)] - \bb{E}_{q_\vtheta(\vx)}[f^*(T_\vomega(\vx))],\label{eq:f-gan}
\end{align}
where $T_\vomega: \gX \to \bb{R}$ is the variational function.
When $q_\vtheta$ is an implicit density generative model defined by a fixed noise distribution $\pi(\vz)$ and
a direct differentiable mapping $G_\vtheta: \gZ \to \gX$ from noise space to sample space, Equation~(\ref{eq:f-gan}) corresponds to the $f$-GAN framework \cite{nowozin2016f}, where the gradient \emph{w.r.t.} $\vtheta$ can be conveniently estimated using reparametrization:
\begin{align}
    - \nabla_\vtheta \bb{E}_{q_\vtheta(\vx)} [f^*(T_\vomega(\vx))] = -\bb{E}_{\pi(\vz)}[\nabla_\vtheta f^*(T_\vomega(G_\vtheta(\vz)))] \label{eqn:fgan}
\end{align}
However, it is challenging to apply this procedure to EBMs.
First of all, it is generally hard to find such a deterministic mapping $G_\vtheta$ to produce exact samples from a given EBM. Consequently, we have to use some approximate sampling process (such as Langevin dynamics in Equation~(\ref{eq:langevin-dynamics})) as a surrogate. 
Differentiating through this sampling process (a sequence of Markov chain transitions), as required for computing Equation~(\ref{eqn:fgan}), is computationally expensive. For example, we often need hundreds of steps in Langevin dynamics to produce a single sample\footnote{Note that when using reparametrization for gradient estimation, we cannot use a sample replay buffer to reduce the number of MCMC transition steps, because the initial distribution of the Markov chain $\pi(\vz)$ cannot depend on the model parameters $\vtheta$.}. Therefore, the same number of backward steps is required for gradient backpropagation, where each backward step further involves computing Hessian matrices whose sizes are proportional to the parameter and data dimension. 
This will lead to hundreds of times more memory consumption compared to using Langevin dynamics only for producing samples. A more detailed discussion is provided in Appendix~\ref{app:differetiate-langevin}.

When reparameterization is not possible, an alternative approach is to use the ``log derivative trick'', similar to the REINFORCE algorithm from reinforcement learning \cite{williams1992simple,sutton2000policy}.
Specifically, when the generative model is an EBM, the following theorem provides an unbiased estimation of the gradient:
\begin{theorem}\label{theorem:direct-febm}
For a $\vtheta$-parametrized energy-based distribution $q_\vtheta(\vx) \propto \exp(- E_\vtheta(\vx))$, under Assumption~\ref{assumption:continuous} in Appendix~\ref{app:proof-direct-febm}, the gradient of the variational representation of $f$-divergence (Equation~(\ref{eq:f-gan})) w.r.t. $\vtheta$ can be written as:
\begin{align*}
    - \nabla_\vtheta \bb{E}_{q_\vtheta(\vx)}[f^*(T_\vomega(\vx))] =  &~\bb{E}_{q_\vtheta}[\nabla_\vtheta E_\vtheta(\vx) \cdot f^*(T_\vomega(\vx))] - \\ 
    &~\bb{E}_{q_\vtheta}[\nabla_\vtheta E_\vtheta(\vx)] \cdot \bb{E}_{q_\vtheta}[f^*(T_\vomega(\vx))]
\end{align*}
When we can obtain i.i.d. samples from $q_\vtheta$, we can get an unbiased estimation of the gradient.
\end{theorem}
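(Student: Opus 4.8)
The plan is to differentiate the expectation $\bb{E}_{q_\vtheta(\vx)}[f^*(T_\vomega(\vx))] = \int_\gX q_\vtheta(\vx)\, f^*(T_\vomega(\vx))\,\mathrm{d}\vx$ under the integral sign, observing that the variational function $T_\vomega$ has no dependence on $\vtheta$, so only the density $q_\vtheta(\vx) = \exp(-E_\vtheta(\vx))/Z_\vtheta$ is differentiated. First I would invoke the Leibniz integral rule — justified by the regularity conditions in Assumption~\ref{assumption:continuous} in Appendix~\ref{app:proof-direct-febm}, which supply a local integrable dominating function for $\nabla_\vtheta\big(q_\vtheta(\vx) f^*(T_\vomega(\vx))\big)$ — to obtain $\nabla_\vtheta \bb{E}_{q_\vtheta}[f^*(T_\vomega(\vx))] = \int_\gX f^*(T_\vomega(\vx))\, \nabla_\vtheta q_\vtheta(\vx)\,\mathrm{d}\vx$.

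Next I would apply the log-derivative (``REINFORCE'') identity $\nabla_\vtheta q_\vtheta(\vx) = q_\vtheta(\vx)\, \nabla_\vtheta \log q_\vtheta(\vx)$ to rewrite the integral as the expectation $\bb{E}_{q_\vtheta}[f^*(T_\vomega(\vx))\, \nabla_\vtheta \log q_\vtheta(\vx)]$, and then substitute the expression for $\nabla_\vtheta \log q_\vtheta(\vx)$ supplied by Lemma~\ref{lemma:cd}, namely $\nabla_\vtheta \log q_\vtheta(\vx) = -\nabla_\vtheta E_\vtheta(\vx) + \bb{E}_{q_\vtheta}[\nabla_\vtheta E_\vtheta(\vx)]$. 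The second term is constant in $\vx$, so it factors out of the outer expectation, giving $\nabla_\vtheta \bb{E}_{q_\vtheta}[f^*(T_\vomega(\vx))] = -\bb{E}_{q_\vtheta}[\nabla_\vtheta E_\vtheta(\vx)\, f^*(T_\vomega(\vx))] + \bb{E}_{q_\vtheta}[\nabla_\vtheta E_\vtheta(\vx)]\cdot \bb{E}_{q_\vtheta}[f^*(T_\vomega(\vx))]$. Negating both sides yields exactly the claimed formula (which one recognizes as $\Cov_{q_\vtheta}\!\big(\nabla_\vtheta E_\vtheta(\vx),\, f^*(T_\vomega(\vx))\big)$). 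The closing remark about unbiased estimation is then immediate: the right-hand side is a difference of plain expectations under $q_\vtheta$, the product term being estimated from two independent batches of i.i.d. samples so as to avoid the bias of a squared-sample-mean.

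The main obstacle — essentially the only nontrivial point — is rigorously justifying the interchange of $\nabla_\vtheta$ and $\int_\gX$, i.e. checking the hypotheses of the Leibniz rule / dominated convergence for both the numerator integral and the partition function $Z_\vtheta$ (the latter being needed implicitly, since Lemma~\ref{lemma:cd} already absorbs $\nabla_\vtheta \log Z_\vtheta$). This is precisely what Assumption~\ref{assumption:continuous} is designed to handle, through continuity of $\exp(-E_\vtheta(\vx))$ and $\nabla_\vtheta \exp(-E_\vtheta(\vx))$ in $(\vtheta,\vx)$ together with a local integrable envelope. Everything after that is algebraic bookkeeping: the cancellation producing the covariance form requires no further analytic input.
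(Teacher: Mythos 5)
Your proposal matches the paper's proof essentially step for step: differentiate under the integral via the Leibniz rule under Assumption~\ref{assumption:continuous}, apply the log-derivative trick, use $\nabla_\vtheta \log q_\vtheta(\vx) = -\nabla_\vtheta E_\vtheta(\vx) + \bb{E}_{q_\vtheta}[\nabla_\vtheta E_\vtheta(\vx)]$ (which the paper re-derives inline by expanding $\nabla_\vtheta \log Z_\vtheta$, while you cite Lemma~\ref{lemma:cd} — the same identity), and handle the product-of-expectations term with two independent sample batches, exactly as in the paper's Lemma~\ref{lemma:product-of-expectations}. The argument is correct and there is no substantive difference from the paper's route.
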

\vspace{-5pt}
\begin{proof}
See Appendix~\ref{app:proof-direct-febm}.
\end{proof}
\vspace{-5pt}
Theorem~\ref{theorem:direct-febm} provides us a possible way to train EBMs with any $f$-divergence. 
Based on Equation~(\ref{eq:f-gan}), we can alternatively train $T_\vomega$ using samples from $p$ and $q_\vtheta$, and train $q_\vtheta$ using the gradient estimator in Theorem~\ref{theorem:direct-febm}.

However, in practice we found that this approach %
performs poorly, 
especially for high-dimensional data. Intuitively, this method resembles REINFORCE in the sense that $f^*(T_\vomega(\vx))$ specifies an adaptive reward function to guide the training of a stochastic policy $q_\vtheta$ through trial-and-error. Unlike contrastive divergence, here the energy function is never directly evaluated on the training data in the optimization process, since the expectation over $p(\vx)$ in Equation~(\ref{eq:f-gan}) only involves the variational function $T_\vomega$. 
Thus we cannot directly decrease the energies of the training data and increase the energies of the generated data (\emph{i.e.}, to make training data more likely). 
Therefore the entire supervision signal comes from $f^*(T_\vomega(\vx))$. Taking image domain as an example, starting from a random initialization of the parameters $\vtheta$, all the generated samples obtained by MCMC sampling are random noise.
Since it is unlikely for an EBM to generate a realistic image by random exploration, the reward signals from $f^*(T_\vomega(\vx))$ are always uninformative. More precisely, although the gradient estimator is theoretically unbiased, the variance is extremely large and it would require a prohibitive number of samples to work.
Consequently, in practice EBMs trained by this approach failed to generate reasonable images, as shown in Appendix~\ref{app:baseline-samples}. Even starting from a pretrained model learned by contrastive divergence, we empirically found that this approach still cannot provide effective training.

\subsection{$f$-EBM}\label{sec:f-ebm}
Inspired by the analysis of the challenges in Section~\ref{sec:challenges}, we propose to minimize a new variational representation of $f$-divergence between the data distribution and an energy-based distribution, which can mitigate the issues discussed above and induce an effective training scheme for EBMs.

\begin{theorem}\label{theorem:new-f-bound}
Let $P$ and $Q$ be probability measures over the Borel $\sigma$-algebra on domain $\gX$ with densities $p$, $q$ and $P \ll Q$. Additionally, let $q$ be an energy-based distribution, $q(\vx) = \exp(-E(\vx)) / Z_q$.
For any class of functions $\gH =\{H:\gX \rightarrow \bb{R}\}$ such that $\log\left(p \cdot Z_q\right) \in \gH$ and the expectations in the following equation are finite,
we have:
\begin{align*}
    \DF(P \| Q) = \sup_{H \in \gH} &~\bb{E}_{p(\vx)} [f'(\exp(H(\vx) + E(\vx)))] - \\ 
    &~\bb{E}_{q(\vx)}[f^*(f'(\exp(H(\vx) + E(\vx))))]
\end{align*}
where the supreme is attained at $H^*(\vx) = \log\left(p(\vx) \cdot Z_q\right)$.
\end{theorem}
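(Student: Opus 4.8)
The plan is to read the claimed identity as a \emph{reparametrization} of the Nguyen--Wainwright--Jordan variational representation of Lemma~\ref{lemma:f-bound}, in which the variational function $T$ is written in the special form $T(\vx) = f'(\exp(H(\vx)+E(\vx)))$. The whole argument is driven by one algebraic observation: since $q(\vx)=\exp(-E(\vx))/Z_q$ we have $\exp(E(\vx)) = 1/(q(\vx)Z_q)$, so for $H^*(\vx) = \log(p(\vx)Z_q)$ one gets $\exp(H^*(\vx)+E(\vx)) = p(\vx)/q(\vx)$. Hence, at $H=H^*$, the proposed parametrization recovers exactly the optimal variational function $T^*(\vx)=f'(p(\vx)/q(\vx))$ of Lemma~\ref{lemma:f-bound}, and more generally the objective in the theorem, for a given $H$, is precisely $\bb{E}_{p}[T_H] - \bb{E}_{q}[f^*(T_H)]$ with $T_H \defeq f'(\exp(H+E))$. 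So it suffices to prove two inequalities.

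For the upper bound $\sup_{H\in\gH}(\cdots)\le\DF(P\|Q)$ I would use only weak duality. The Fenchel--Young inequality $f(u)\ge u\,t - f^*(t)$ holds for all $t$; applying it pointwise with $u=p(\vx)/q(\vx)$ and $t=T_H(\vx)$ and integrating against $q$ gives $\DF(P\|Q)=\bb{E}_q[f(p/q)]\ge \bb{E}_q[(p/q)T_H - f^*(T_H)]$. Using $P\ll Q$ (so $p/q$ is well-defined $Q$-a.e.), the change of measure $\bb{E}_q[(p/q)T_H]=\bb{E}_p[T_H]$, and splitting the expectation — legitimate because the hypothesis guarantees $\bb{E}_p[T_H]$ and $\bb{E}_q[f^*(T_H)]$ are finite — yields $\DF(P\|Q)\ge \bb{E}_p[T_H]-\bb{E}_q[f^*(T_H)]$ for every $H\in\gH$. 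This step needs nothing about the map $H\mapsto T_H$ beyond measurability.

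For the matching lower bound and the attainment claim I would simply evaluate the objective at $H=H^*=\log(pZ_q)$, which lies in $\gH$ by assumption. By the algebraic identity above, $T_{H^*}=f'(p/q)$. Since $f$ is strictly convex and continuously differentiable, the Fenchel--Young inequality is tight at $t=f'(u)$, i.e.\ $f^*(f'(u)) = u\,f'(u) - f(u)$; therefore $\bb{E}_q[f^*(f'(p/q))] = \bb{E}_q[(p/q)f'(p/q)] - \bb{E}_q[f(p/q)] = \bb{E}_p[f'(p/q)] - \DF(P\|Q)$. Substituting back, $\bb{E}_p[f'(p/q)] - \bb{E}_q[f^*(f'(p/q))] = \DF(P\|Q)$, so the supremum is at least $\DF(P\|Q)$ and is achieved at $H^*$. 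Combined with the previous paragraph, this gives the equality and identifies the maximizer.

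None of the steps is hard; the only thing to be careful about is domain bookkeeping. One should check that $\exp(H(\vx)+E(\vx))$ always lies in $(0,+\infty)$, the domain of $f'$ (immediate, since $\exp>0$), and that $T_H$ stays in the effective domain of $f^*$ so that $f^*(T_H)$ is finite $q$-a.e.\ and the subtraction of expectations makes sense — both absorbed into the stated finiteness hypothesis (which, incidentally, also forces $\DF(P\|Q)<\infty$, so the computation in the last paragraph is consistent). An alternative packaging is to say directly that the substitution $T=f'\circ\exp\circ(H+E)$ sends $\gH$ into a function class still containing the subdifferential element $T^*=f'(p/q)$, so the two suprema coincide by Lemma~\ref{lemma:f-bound}; but writing out the two inequalities as above is cleaner and avoids checking that the reparametrized class literally meets that lemma's hypotheses.
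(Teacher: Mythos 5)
Your proof is correct, and it rests on the same core observation as the paper's: parametrize the variational function as $T_H = f'(\exp(H+E))$, so that $\exp(H+E)$ plays the role of a density-ratio estimate and $H^*=\log(p\,Z_q)$ gives $\exp(H^*+E)=p/q$, hence $T_{H^*}=f'(p/q)$. Where you diverge is in how the supremum identity is justified. The paper's proof is the route you call the ``alternative packaging'': it introduces the restricted class $\hat{\gT}=\{f'(r): r:\gX\to\R_{+}\}\subseteq\gT$, notes that $T^*=f'(p/q)\in\hat{\gT}$, invokes Lemma~\ref{lemma:f-bound} on $\hat{\gT}$, and then uses bijectivity of $f'$, $\exp$, and translation by $E$ to rewrite the supremum over $\hat{\gT}$ as a supremum over $\gH$ and to solve for $H^*$. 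You instead reprove the needed half-statements from scratch: the Fenchel--Young inequality gives weak duality (the objective never exceeds $\DF(P\|Q)$ for any $H$), and its tightness at $t=f'(u)$, i.e.\ $f^*(f'(u))=u\,f'(u)-f(u)$, shows the value $\DF(P\|Q)$ is attained at $H^*\in\gH$. Your version is more self-contained and more careful about the integrability bookkeeping (when the splitting of expectations is legitimate, and why the hypothesis forces $\DF(P\|Q)<\infty$), and it sidesteps having to verify that the reparametrized class satisfies the hypotheses of Lemma~\ref{lemma:f-bound}; the paper's version is shorter because it delegates exactly that content to the cited lemma. One shared caveat (present in the paper's argument as well, so not a gap on your side): for $H^*=\log(p\,Z_q)$ to be a real-valued element of $\gH$ and for $f'(p/q)$ to be defined, one implicitly needs $p>0$ on $\gX$ (or to restrict to $\mathrm{supp}(p)$), since $q=\exp(-E)/Z_q$ is strictly positive but $p$ need not be.
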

\vspace{-5pt}
\begin{proof}
See Appendix~\ref{app:proof-new-f-bound}.
\end{proof}
\vspace{-5pt}

As before, let us parametrize the energy function and variational function as $E_\vtheta$ and $H_\vomega$ respectively. Based on the new variational representation of $f$-divergence in Theorem~\ref{theorem:new-f-bound}, we propose the following $f$-EBM framework:
\begin{align}
    \min_\vtheta \max_\vomega &~\gL_{f\text{-EBM}}(\vtheta, \vomega;p)\nonumber\\
    =\min_\vtheta \max_\vomega &~\bb{E}_{p(\vx)} [f'(\exp(H_\vomega(\vx) + E_\vtheta(\vx)))] - \label{eq:minimax-f-EBM}\\
    &~\bb{E}_{q_\vtheta(\vx)}[f^*(f'(\exp(H_\vomega(\vx) + E_\vtheta(\vx))))]\nonumber
\end{align}

For a fixed generative model $q_\vtheta$, from Theorem~\ref{theorem:new-f-bound}, we know that given enough capacity and training time (\emph{i.e.}, in the non-parametric limit), the optimal solution of the inner maximization problem is:
\begin{align}
    H_{\vomega^*}(\vx) 
    = \log(p(\vx) \cdot Z_\vtheta)\label{eq:h-optimal}
\end{align}
Consequently, the density ratio $p(\vx) / q_\vtheta(\vx)$ between data distribution and model distribution can be recovered by\footnote{Different from other variational $f$-divergence minimization frameworks such as $f$-GAN, in $f$-EBM, the density ratio is recovered using both the energy function and the variational function.}:
\begin{align}
    &~\exp(H_{\vomega^*}(\vx) + E_\vtheta(\vx)) \label{eq:recover-density-ratio} \\ 
    =&~\exp(\log(p(\vx)Z_\vtheta)) / \exp(-E_\vtheta(\vx)) = p(\vx) / q_\vtheta(\vx) \nonumber
\end{align}
Plugging the optimal form of $H_\vomega$ into Equation~(\ref{eq:minimax-f-EBM}), from Theorem~\ref{theorem:new-f-bound}, we know that the minimax problem in Equation~(\ref{eq:minimax-f-EBM}) can be reformulated as:
\begin{align}
    \min_\vtheta \gL_{f\text{-EBM}}(\vtheta, \vomega^*; p)
    = \min_\vtheta \DF(p \| q_\vtheta).
\end{align}
Hence, the global optimum of the minimax game is achieved if and only if  $q_\vtheta = p$ and $H_\vomega = \log(p \cdot Z_\vtheta)$.

In this framework, the maximization over $\vomega$ closes the gap between the variational lower bound and the true value of $f$-divergence, while the minimization over $\vtheta$ improves the generative model based on the estimated value of the  $f$-divergence. Different from the baseline method described in Section~\ref{sec:challenges} (which is based on the original variational representation in Lemma~\ref{lemma:f-bound}), here both terms in the new variational representation of $f$-divergence (Equation~(\ref{eq:minimax-f-EBM})) 
have non-zero gradients \emph{w.r.t.} $\vtheta$. In other words, as in contrastive divergence, we are able to evaluate the energy function on both real data from $p$ and synthtetic data from $q_\vtheta$.
Therefore, the energy function can directly receive the supervision signal from the real data, which bypasses the random exploration problem of the previous formulation. 

\subsection{Tractable Optimization of $f$-EBM}
To optimize with respect to $\vomega$ in Equation~(\ref{eq:minimax-f-EBM}), we can simply use \emph{i.i.d.} samples from $p$ and $q_\vtheta$ to obtain an unbiased estimation of $\gL_{f\text{-EBM}}(\vtheta, \vomega;p)$ and then update $\vomega$ with the gradient of the estimated objective using gradient ascent, since the expectations are taken with respect to distributions that do not depend on $\vomega$.

However, it is non-trivial to optimize $\vtheta$ because it is hard to backpropagate gradients through the MCMC sampling process (as discussed in Section~\ref{sec:challenge-f-gan}) and in the second term of Equation~(\ref{eq:minimax-f-EBM}), both the sampling distribution and the function within the expectation depend on $\vtheta$. We derive an unbiased gradient estimator for optimizing $\vtheta$ in the following theorem.
\begin{theorem}\label{theorem:theta-gradient}
For a fixed $\vomega$, under Assumptions~\ref{assumption:continuous-1} and \ref{assumption:continuous-2} in Appendix~\ref{app:proof-theta-gradient}, the gradient of $\gL_{f\text{-EBM}}(\vtheta, \vomega;p)$ with respect to $\vtheta$ can be written as:
\begin{align}
    &\nabla_\vtheta \gL_{f\text{-EBM}}(\vtheta, \vomega;p) = \nonumber\\ 
    &~\bb{E}_{p(\vx)}[\nabla_\vtheta f'(\exp(f_\vomega(\vx) + E_\vtheta(x)))] + \label{eq:theta-gradient}\\
    &~ \bb{E}_{q_\vtheta(\vx)}[F_{\vtheta, \vomega}(\vx) \nabla_\vtheta E_\vtheta(\vx)] - \bb{E}_{q_\vtheta(\vx)}[\nabla_\vtheta F_{\vtheta, \vomega}(\vx)] - \nonumber\\
    &~ \bb{E}_{q_\vtheta(\vx)}[\nabla_\vtheta E_\vtheta(\vx)] \cdot \bb{E}_{q_\vtheta(\vx)}[F_{\vtheta, \vomega}(\vx)]\nonumber
\end{align}
where $F_{\vtheta,\vomega}(\vx) \defeq f^*(f'(\exp(H_\vomega(\vx) + E_\vtheta(\vx))))$. When we can obtain i.i.d. samples from $p$ and $q_\vtheta$, we can get an unbiased estimation of the gradient.
\end{theorem}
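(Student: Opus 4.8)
The plan is to differentiate the two terms of $\gL_{f\text{-EBM}}(\vtheta,\vomega;p)$ from Equation~(\ref{eq:minimax-f-EBM}) separately, because only the second one is genuinely delicate. Introduce the shorthand $F_{\vtheta,\vomega}(\vx) = f^*(f'(\exp(H_\vomega(\vx)+E_\vtheta(\vx))))$ and $g_{\vtheta,\vomega}(\vx) = f'(\exp(H_\vomega(\vx)+E_\vtheta(\vx)))$, so that $\gL_{f\text{-EBM}} = \bb{E}_{p(\vx)}[g_{\vtheta,\vomega}(\vx)] - \bb{E}_{q_\vtheta(\vx)}[F_{\vtheta,\vomega}(\vx)]$. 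For the first term the measure $p$ does not depend on $\vtheta$, so under the continuity/domination hypothesis (Assumption~\ref{assumption:continuous-1}) the Leibniz integral rule lets me pull $\nabla_\vtheta$ inside the expectation, producing $\bb{E}_{p(\vx)}[\nabla_\vtheta g_{\vtheta,\vomega}(\vx)]$, which is the first line of Equation~(\ref{eq:theta-gradient}).

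For the second term I would write it as $-\int_\gX q_\vtheta(\vx) F_{\vtheta,\vomega}(\vx)\,\mathrm{d}\vx$ and apply the product rule under the integral sign (justified, again, by the domination in Assumption~\ref{assumption:continuous-2}), splitting $\nabla_\vtheta$ into a piece hitting the density and a piece hitting $F_{\vtheta,\vomega}$. The second piece is immediately $-\bb{E}_{q_\vtheta(\vx)}[\nabla_\vtheta F_{\vtheta,\vomega}(\vx)]$; I would deliberately leave $\nabla_\vtheta F_{\vtheta,\vomega}$ unexpanded, since $F_{\vtheta,\vomega}$ depends on $\vtheta$ both explicitly through $E_\vtheta$ and through the composition $f^*\circ f'\circ\exp$. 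For the density piece I use the score-function identity $\nabla_\vtheta q_\vtheta(\vx) = q_\vtheta(\vx)\,\nabla_\vtheta\log q_\vtheta(\vx)$ together with Lemma~\ref{lemma:cd}, namely $\nabla_\vtheta\log q_\vtheta(\vx) = -\nabla_\vtheta E_\vtheta(\vx) + \bb{E}_{q_\vtheta}[\nabla_\vtheta E_\vtheta(\vx)]$ (equivalently $\nabla_\vtheta\log Z_\vtheta = -\bb{E}_{q_\vtheta}[\nabla_\vtheta E_\vtheta]$). Substituting yields $-\int \nabla_\vtheta q_\vtheta(\vx)\,F_{\vtheta,\vomega}(\vx)\,\mathrm{d}\vx = \bb{E}_{q_\vtheta}[F_{\vtheta,\vomega}(\vx)\nabla_\vtheta E_\vtheta(\vx)] - \bb{E}_{q_\vtheta}[\nabla_\vtheta E_\vtheta(\vx)]\cdot\bb{E}_{q_\vtheta}[F_{\vtheta,\vomega}(\vx)]$, which are exactly the remaining three terms. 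Adding the contributions gives the claimed identity, and because after this rewriting every expectation is against $p$ or $q_\vtheta$ with a $\vtheta$-independent sampling measure, replacing each expectation by an empirical average over i.i.d. draws gives an unbiased estimator.

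The main obstacle is making the two interchanges of $\nabla_\vtheta$ and $\int$ rigorous on the possibly unbounded domain $\gX\subseteq\mathbb{R}^m$: I need dominating functions, integrable and uniform in a neighborhood of $\vtheta$, for $\nabla_\vtheta[q_\vtheta(\vx)F_{\vtheta,\vomega}(\vx)]$ and for $\nabla_\vtheta g_{\vtheta,\vomega}(\vx)$, which is precisely the role of Assumptions~\ref{assumption:continuous-1} and \ref{assumption:continuous-2}; note that Lemma~\ref{lemma:cd} already builds in the analogous condition on $\exp(-E_\vtheta)$ and $\nabla_\vtheta\exp(-E_\vtheta)$, so the finiteness of $Z_\vtheta$ and differentiability of $\log Z_\vtheta$ come along. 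A secondary point worth checking is that the $\bb{E}_{q_\vtheta}[\nabla_\vtheta E_\vtheta]\cdot\bb{E}_{q_\vtheta}[F_{\vtheta,\vomega}]$ cross term is exactly the $\nabla_\vtheta\log Z_\vtheta$ contribution, so nothing is double counted; everything else is routine bookkeeping with the product rule.
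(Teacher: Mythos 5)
Your proof matches the paper's own argument essentially step for step: Leibniz rule for the term under $p$, then the product rule plus the score-function identity $\nabla_\vtheta\log q_\vtheta(\vx) = -\nabla_\vtheta E_\vtheta(\vx) - \nabla_\vtheta\log Z_\vtheta$ with $\nabla_\vtheta\log Z_\vtheta = -\bb{E}_{q_\vtheta}[\nabla_\vtheta E_\vtheta(\vx)]$ for the term under $q_\vtheta$, yielding exactly the four stated terms. The only detail the paper makes explicit that you gloss over is unbiased estimation of the product $\bb{E}_{q_\vtheta}[\nabla_\vtheta E_\vtheta(\vx)]\cdot\bb{E}_{q_\vtheta}[F_{\vtheta,\vomega}(\vx)]$, which requires two independent batches of samples (the paper's lemma on products of expectations); estimating both factors from the same batch is only consistent, not unbiased.
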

\vspace{-5pt}
\begin{proof}
See Appendix~\ref{app:proof-theta-gradient}.
\end{proof}
\vspace{-5pt}
Like adversarial training for implicit density generative models \cite{goodfellow2014generative,nowozin2016f}, in practice, optimizing $H_\vomega$ to completion in every inner loop of training is computationally prohibitive and would result in overfitting given finite samples. Motivated by the success of single-step alternative gradient updates for training generative adversarial networks, we propose a practical $f$-EBM algorithm, presented in Algorithm~\ref{alg:f-ebm}. In Appendix~\ref{app:implementation}, we also provide a simple PyTorch \cite{paszke2019pytorch} implementation of Steps 6-9 of Algorithm~\ref{alg:f-ebm}.

As in the training of GANs, although using single-step gradient updates is not generally guaranteed to solve the minimax problem, we empirically observed that such a practical approach enjoys good distribution matching performance and convergence properties (see Appendix~\ref{app:optimization-trajectory} for visualization of the optimization trajectories in a simple setting). This motivates us to conduct a theoretical study on the local convergence property of single-step $f$-EBM in the next section, and we leave the study of global convergence to future work.

\begin{algorithm}[t]
   \caption{Single-Step $f$-EBM. See code implementation in Appendix~\ref{app:implementation}.}
   \label{alg:f-ebm}
\begin{algorithmic}[1]
   \STATE {\bfseries Input:} Empirical data distribution $p_\text{data}$.
   \STATE Initialize energy function $E_\vtheta$ and variational function $H_\vomega$. 
   \REPEAT
   \STATE Draw a minibatch of samples $\gD_p$ from $p_\text{data}$.
   \STATE Draw a minibatch of samples $\gD_q$ from $q_\vtheta$ (\emph{e.g.}, using Langevin dynamics with a sample replay buffer).
   \STATE Estimate $\gL_{f\text{-EBM}}(\vtheta, \vomega;p)$ with $\gD_p$ and $\gD_q$.
   \STATE Perform SGD over $\vomega$ with $\nabla_\vomega \hat{\gL}_{f\text{-EBM}}(\vtheta, \vomega)$.
   \STATE Estimate Equation~(\ref{eq:theta-gradient}) with $\gD_p$ and $\gD_q$.
   \STATE Perform SGD over $\vtheta$ with $-\nabla_\vtheta \hat{\gL}_{f\text{-EBM}}(\vtheta, \vomega)$
   \UNTIL{Convergence}
   \STATE {\bfseries Output:} Learned energy-based model $q_\vtheta \propto \exp(-E_\vtheta)$ and density ratio estimator $\exp(H_\vomega + E_\vtheta)$.
\end{algorithmic}
\end{algorithm}

\section{Theoretical Analysis}
In this section, we conduct a theoretical study on the local convergence property of $f$-EBM, which shows that under proper conditions, the single-step $f$-EBM algorithm
(simultaneous gradient updates and alternative gradient updates) is locally exponentially stable around good equilibrium points. For readability, we defer rigorous statements of assumptions, theorems and proofs to Appendix~\ref{app:proof-local-convergence}.

The main technical tool we use is the non-linear dynamical systems theory \cite{hassan1996khalil}, which have been used to establish the local convergence property of GANs \cite{nagarajan2017gradient,mescheder2018training}. Specifically, the linearization theorem (Theorem~\ref{theorem:equilibrium-hurwitz} in Appendix~\ref{sec:non-linear-system}) states that the local convergence property can be analyzed by examining the spectrum of the Jacobian of the dynamical system $\dot \vphi = v(\vphi)$: if the Jacobian $\vJ = \partial v(\vphi)/ \partial \vphi \at{\vphi = \ophi}$ evaluated at an equilibrium point $\ophi$ is a Hurwitz matrix (\emph{i.e.}, all the eigenvalues of $\vJ$ have strictly negative real parts), then the system will converge to the equilibrium with a linear convergence rate within some neighborhood of $\ophi$. In other words, the equilibrium is locally exponentially stable (Definition~\ref{def:stability} in Appendix~\ref{sec:non-linear-system}).

To begin with, we first derive the Jacobian of the following differential equations\footnote{Following \cite{nagarajan2017gradient}, we focus on the analysis of continuous time ordinary differential equations, which implies similar results for discrete time updates when the learning rate is sufficiently small.}:
\begin{align}\label{eq:ode}
    \begin{pmatrix}
    \dot \vtheta\\
    \dot \vomega
    \end{pmatrix} = 
    \begin{pmatrix}
    - \nabla_\vtheta V(\vtheta, \vomega) \\
    \nabla_\vomega V(\vtheta, \vomega)
    \end{pmatrix}
\end{align}
where $V(\vtheta, \vomega)$ is a generalized definition of the minimax objective in Equation~(\ref{eq:minimax-f-EBM}) (see Equation~(\ref{eq:new-minimax-game}) in Appendix~\ref{sec:notation-setup}).
\begin{theorem}
For the dynamical system defined in Equation~(\ref{eq:new-minimax-game}) and the updates defined in Equation~(\ref{eq:ode}), under Assumption~\ref{assumption:realizability} in Appendix~\ref{sec:notation-setup}, the Jacobian at an equilibrium point ($\otheta,\oomega$) is:
\begin{align*}
    \vJ =&  
    \begin{bmatrix}
    - f''(1) \vK_{EE} & f''(1) \vK_{EH}\\
    - f''(1) \vK_{EH}^\top & -f''(1) \vK_{HH}
    \end{bmatrix}\\
    \vK_{EE} \defeq & (\bb{E}_{p(\vx)}[\nabla_\vtheta \Et \nabla_\vtheta^\top \Et] - \\
    &~~2\bb{E}_{p(\vx)}[\nabla_\vtheta \Et] \bb{E}_{p(\vx)}[\nabla_\vtheta^\top \Et])\at{\otheta}\\
    \vK_{EH} \defeq & (\bb{E}_{p(\vx)} [\nabla_\vtheta \Et]\bb{E}_{p(\vx)}[\nabla_\vomega^\top \Hw])\at{(\otheta, \oomega)}\\
    \vK_{HH} \defeq & (\bb{E}_{p(\vx)}[\nabla_\vomega \Hw \nabla_\vomega^\top \Hw])\at{\oomega}
\end{align*}
\end{theorem}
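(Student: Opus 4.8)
The ODE has vector field $v(\vtheta,\vomega)=(-\nabla_\vtheta V,\ \nabla_\vomega V)$, so its Jacobian is the block matrix $\vJ=\begin{pmatrix}-\nabla^2_{\vtheta\vtheta} V & -\nabla^2_{\vtheta\vomega} V\\ \nabla^2_{\vomega\vtheta} V & \nabla^2_{\vomega\vomega} V\end{pmatrix}$ evaluated at $(\otheta,\oomega)$. Since the mixed partials commute under the smoothness we assume, $\nabla^2_{\vomega\vtheta} V=(\nabla^2_{\vtheta\vomega} V)^\top$, so the two off-diagonal blocks are negative transposes of each other and it suffices to compute the three blocks $\nabla^2_{\vtheta\vtheta} V$, $\nabla^2_{\vtheta\vomega} V$ and $\nabla^2_{\vomega\vomega} V$ at the equilibrium. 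I will differentiate $V$ directly in $\vomega$ (both expectations in $\gL_{f\text{-EBM}}$ are over distributions independent of $\vomega$), and in $\vtheta$ I will start from the unbiased gradient expression of Theorem~\ref{theorem:theta-gradient} — which already accounts for the $\vtheta$-dependence of the sampling distribution $q_\vtheta$ in the second term — and differentiate it once more, applying the log-derivative identity $\nabla_\vtheta\log q_\vtheta=-\nabla_\vtheta\Et+\bb{E}_{q_\vtheta}[\nabla_\vtheta\Et]$ from Lemma~\ref{lemma:cd} whenever a $\vtheta$-dependent expectation is differentiated.

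\textbf{Equilibrium reductions.} Write $r(\vx):=\exp(H_\vomega(\vx)+E_\vtheta(\vx))$ and $F_{\vtheta,\vomega}:=f^*(f'(r))$. By Assumption~\ref{assumption:realizability}, at an equilibrium we have $q_{\otheta}=p$ and the variational optimum $H_{\oomega}=\log(p\cdot Z_{\otheta})$, hence $r\equiv 1$ on $\gX$ and $\bb{E}_p=\bb{E}_{q_{\otheta}}$ as operators; these two facts are what drive all the cancellations below. The chain rule gives $\nabla_\bullet r=r\,\nabla_\bullet(H_\vomega+E_\vtheta)$, and the needed scalar derivatives of the compositions at $r=1$ follow from $f(1)=0$ together with the Fenchel identity $(f^*)'(f'(u))=u$, hence $\frac{\mathrm{d}}{\mathrm{d} u}f^*(f'(u))=u\,f''(u)$ and $\frac{\mathrm{d}^2}{\mathrm{d} u^2}f^*(f'(u))=f''(u)+u\,f'''(u)$: in particular $F_{\otheta,\oomega}\equiv f'(1)$, all first-order chain factors reduce to $f''(1)$, and the second-order ones bring in $f'''(1)$.

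\textbf{Assembling the blocks.} Carrying out the two differentiations and substituting $r\equiv1$, $F\equiv f'(1)$, $q_{\otheta}=p$, the terms fall into three groups: (i) network-Hessian terms ($\nabla^2_{\vtheta\vtheta}\Et$ in the $E$-block, $\nabla^2_{\vomega\vomega}\Hw$ in the $H$-block), which enter once via the $\bb{E}_p$ term and once — with the opposite sign, after using Theorem~\ref{theorem:theta-gradient} and Lemma~\ref{lemma:cd} — via the $\bb{E}_{q_\vtheta}$ term, and cancel because $q_{\otheta}=p$; (ii) gradient outer-product terms $\nabla_\vtheta\Et\nabla_\vtheta^\top\Et$, $\nabla_\vtheta\Et\nabla_\vomega^\top\Hw$, $\nabla_\vomega\Hw\nabla_\vomega^\top\Hw$, whose coefficients are affine combinations of $f''(1)$ and $f'''(1)$ in which the $f'''(1)$ pieces cancel, leaving exactly $f''(1)$; and (iii) the score-function/control-variate cross terms of the form $\bb{E}_p[\nabla_\vtheta\Et]\,\bb{E}_p[\,\cdot\,]$, which combine to give the $-2\,\bb{E}_p[\nabla_\vtheta\Et]\bb{E}_p[\nabla_\vtheta^\top\Et]$ appearing in $\vK_{EE}$ (a product-of-expectations term is produced both by differentiating the control-variate term of Theorem~\ref{theorem:theta-gradient} and by differentiating the $1/Z_\vtheta$ normalization via Lemma~\ref{lemma:cd}) and likewise yield the product-of-expectations form of $\vK_{EH}$ (there the outer-product and $f'''(1)$ contributions cancel entirely). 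The net result is $\nabla^2_{\vtheta\vtheta} V=f''(1)\vK_{EE}$, $\nabla^2_{\vtheta\vomega} V=-f''(1)\vK_{EH}$, $\nabla^2_{\vomega\vomega} V=-f''(1)\vK_{HH}$; plugging these into the block matrix above gives the stated $\vJ$. The interchanges of differentiation and integration used along the way are licensed by Assumptions~\ref{assumption:continuous-1}--\ref{assumption:continuous-2} and the finiteness hypotheses of Theorem~\ref{theorem:new-f-bound}.

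\textbf{Main obstacle.} The genuinely laborious part is the $\nabla^2_{\vtheta\vtheta} V$ block. Because $q_\vtheta$ is both the sampling distribution and appears inside the integrand of the second term of $\gL_{f\text{-EBM}}$, its first $\vtheta$-gradient is already the four-term expression of Theorem~\ref{theorem:theta-gradient}, and differentiating it again generates many terms: second-order score (``score-squared'') terms, $\nabla^2_{\vtheta\vtheta}\Et$ terms coming both from $\nabla_\vtheta F_{\vtheta,\vomega}$ and from $F_{\vtheta,\vomega}\nabla_\vtheta\Et$, and the derivatives of the control variate $\bb{E}_{q_\vtheta}[\nabla_\vtheta\Et]\,\bb{E}_{q_\vtheta}[F_{\vtheta,\vomega}]$. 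The work is in organizing these and checking that, at the equilibrium, every $\nabla^2_{\vtheta\vtheta}\Et$ term and every $f'''(1)$-weighted outer product cancels, leaving only $f''(1)\vK_{EE}$. The $\vomega$-only block is by comparison a short direct computation, and the mixed block is of the same (easier) flavor. A tidier bookkeeping alternative is to Taylor-expand $V(\otheta+s\vu,\ \oomega+s\vv)$ to second order in $s$ and read off the quadratic form, but the number of terms to track is essentially unchanged.
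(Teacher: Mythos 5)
Your proposal is correct and follows essentially the same route as the paper: compute the three Hessian blocks of $V$ directly, use the score/log-partition identity $\nabla_\vtheta \log Z_\vtheta = -\bb{E}_{q_\vtheta}[\nabla_\vtheta E_\vtheta]$, reduce at the equilibrium via Assumption~\ref{assumption:realizability} (so that $q_{\otheta}=p$ and $H_{\oomega}+E_{\otheta}\equiv 0$), and exploit the Fenchel identity $(f^*)'(f'(u))=u$ so that the network-Hessian and $f'''(1)$ contributions cancel, leaving the $f''(1)$-weighted blocks $\vK_{EE},\vK_{EH},\vK_{HH}$. The only differences are bookkeeping choices (you differentiate in the ratio $r=\exp(H_\vomega+E_\vtheta)$ and start the $\vtheta$-block from the expectation-form gradient of Theorem~\ref{theorem:theta-gradient}, whereas the paper works with $A(u)=f'(\exp(u))$, $B(u)=f^*(f'(\exp(u)))-f^*(f'(1))$ and its Theorem~\ref{theorem:AB}), which lead to the same cancellations and the same final Jacobian.
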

Now, we present the main theoretical result:
\begin{theorem}\label{theorem:convergence}
The dynamical system defined in Equation~(\ref{eq:new-minimax-game}) and the updates defined in Equation~(\ref{eq:ode}) is locally exponentially stable with respect to an equilibrium point ($\otheta, \oomega$) when the Assumptions~\ref{assumption:realizability},~\ref{assumption:energy-function},~\ref{assumption:full-rank} hold for ($\otheta, \oomega$). Let $\overline{\lambda}(\cdot)$ and $\underline{\lambda}(\cdot)$ denote the largest and smallest eigenvalues of a non-zero positive semi-definite matrix. The rate of convergence is governed only by the eigenvalues $\lambda$ of the Jacobian $\vJ$ of the system at the equilibrium point, with a strictly negative real part upper bounded as:
\begin{itemize}
\small
    \item When $\mathrm{Im}(\lambda) = 0$, 
    \begin{align*}
    &\mathrm{Re}(\lambda) <  \frac{-f''(1) \underline{\lambda}(\vK_{HH}) \underline{\lambda}(\vK_{EH} \vK_{EH}^\top)}{\underline{\lambda}(\vK_{HH}) \overline{\lambda}(\vK_{EE}, \vK_{HH}) + \underline{\lambda}(\vK_{EH} \vK_{EH}^\top)} < 0\\
    &\text{where}~\overline{\lambda}(\vK_{EE}, \vK_{HH}) \defeq \max(\overline{\lambda}(\vK_{EE}), \overline{\lambda}(\vK_{HH}))
    \end{align*}
    \item When $\mathrm{Im}(\lambda) \neq 0$,
    \begin{align*}
    \mathrm{Re}(\lambda) \leq -\frac{f''(1)}{2} (\underline{\lambda}(\vK_{EE}) +\underline{\lambda}(\vK_{HH})) < 0
    \end{align*}
\end{itemize}
\end{theorem}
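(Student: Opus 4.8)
The plan is to reduce the statement to a spectral analysis of the Jacobian $\vJ$ and then estimate that spectrum via a symmetric/antisymmetric decomposition. By the linearization theorem (Theorem~\ref{theorem:equilibrium-hurwitz}), local exponential stability (Definition~\ref{def:stability}) holds as soon as $\vJ$ is Hurwitz, and the achievable rate is controlled by $\max_\lambda \mathrm{Re}(\lambda)$ over the spectrum of $\vJ$; so it suffices to show that every eigenvalue $\mu$ of
\begin{align*}
\vM \defeq \begin{bmatrix} \vK_{EE} & -\vK_{EH}\\ \vK_{EH}^\top & \vK_{HH} \end{bmatrix}
\end{align*}
has $\mathrm{Re}(\mu) > 0$, with the two case-dependent lower bounds. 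Indeed, the preceding theorem gives $\vJ = -f''(1)\,\vM$ with $f''(1) > 0$, so $\mathrm{eig}(\vJ) = -f''(1)\,\mathrm{eig}(\vM)$. The structural point is that the symmetric part of $\vM$ is the block-diagonal matrix $\mathrm{diag}(\vK_{EE},\vK_{HH})$, which is positive semidefinite — $\vK_{HH}$ is a Gram matrix, and $\vK_{EE}$ is PSD under Assumption~\ref{assumption:energy-function} (after fixing the additive gauge of the energy function) — while the antisymmetric part is built from the coupling block $\vK_{EH}$.

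For the qualitative (Hurwitz) part, fix a unit eigenvector $\vv = (\vv_E,\vv_H)$ of $\vM$ with eigenvalue $\mu$; from $\vv^*\vM\vv = \mu$ and the split I get $\mathrm{Re}(\mu) = \vv_E^*\vK_{EE}\vv_E + \vv_H^*\vK_{HH}\vv_H \ge 0$ and $\mathrm{Im}(\mu) = -2\,\mathrm{Im}(\vv_E^*\vK_{EH}\vv_H)$. If $\mathrm{Re}(\mu) = 0$, positive semidefiniteness of the symmetric part forces $\vK_{EE}\vv_E = \vK_{HH}\vv_H = \mathbf{0}$; substituting into the two block rows of $\vM\vv = \mu\vv$ (now with $\mu$ purely imaginary) and invoking Assumption~\ref{assumption:full-rank} — full rank of $\vK_{HH}$ together with the nondegenerate coupling $\mathrm{null}(\vK_{EE})\cap\mathrm{null}(\vK_{EH}^\top) = \{\mathbf{0}\}$ — yields $\vv = \mathbf{0}$, a contradiction. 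Hence $\mathrm{Re}(\mu) > 0$ for all eigenvalues, $\vJ$ is Hurwitz, and the system is locally exponentially stable.

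For the quantitative bounds I would follow the dichotomy in the statement. If $\mathrm{Im}(\lambda) \ne 0$ (equivalently $\mathrm{Im}(\mu)\ne0$): writing the per-block Hermitian identities $\vv_E^*\vK_{EE}\vv_E - \vv_E^*\vK_{EH}\vv_H = \mu\|\vv_E\|^2$ and $(\vv_E^*\vK_{EH}\vv_H)^* + \vv_H^*\vK_{HH}\vv_H = \mu\|\vv_H\|^2$ and subtracting, the left-hand side is real, so $\mathrm{Im}(\mu)\ne0$ forces $\|\vv_E\|^2 = \|\vv_H\|^2 = \tfrac12$; then $\mathrm{Re}(\mu) = \vv_E^*\vK_{EE}\vv_E + \vv_H^*\vK_{HH}\vv_H \ge \tfrac12\bigl(\underline{\lambda}(\vK_{EE}) + \underline{\lambda}(\vK_{HH})\bigr)$, which after multiplying by $-f''(1)$ is exactly the claimed bound. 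If $\mathrm{Im}(\lambda) = 0$: then $\mu = \sigma$ is real with a real eigenvector, and I may assume $\sigma < \underline{\lambda}(\vK_{HH})$ since otherwise $\sigma$ already exceeds the claimed threshold (which is strictly below $\underline{\lambda}(\vK_{HH})$). Then $\vK_{HH}-\sigma\vI \succ \mathbf{0}$, and eliminating $\vv_H = -(\vK_{HH}-\sigma\vI)^{-1}\vK_{EH}^\top\vv_E$ reduces the eigen-equation to $\bigl(\vK_{EE} + c(\sigma)\,\vK_{EH}\vK_{EH}^\top - \sigma\vI\bigr)\vv_E = \mathbf{0}$ with $\vv_E\ne\mathbf{0}$ and $c(\sigma) \ge 1/\overline{\lambda}(\vK_{HH})$, using that $\vK_{EH}$ is a rank-one outer product so $\vK_{EH}(\vK_{HH}-\sigma\vI)^{-1}\vK_{EH}^\top$ is a positive multiple of $\vK_{EH}\vK_{EH}^\top$. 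Combining the Rayleigh quotient of this reduced equation with $\vK_{EE}\succeq\mathbf{0}$, $\vK_{EE},\vK_{HH}\preceq\overline{\lambda}(\vK_{EE},\vK_{HH})\vI$, and the lower bound on the $\vK_{EH}\vK_{EH}^\top$-contribution in terms of $\underline{\lambda}(\vK_{EH}\vK_{EH}^\top)$, I would arrive at a scalar inequality in $\sigma$ whose solution set is precisely $\sigma > \frac{\underline{\lambda}(\vK_{HH})\,\underline{\lambda}(\vK_{EH}\vK_{EH}^\top)}{\underline{\lambda}(\vK_{HH})\,\overline{\lambda}(\vK_{EE},\vK_{HH}) + \underline{\lambda}(\vK_{EH}\vK_{EH}^\top)}$; multiplying by $-f''(1)$ gives the stated bound, and since it holds for every eigenvalue the rate is $\max_\lambda\mathrm{Re}(\lambda) < 0$.

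The main obstacle is the quantitative bound in the real-eigenvalue case. Because $\vK_{EH}$ is an outer product of two expectation vectors, hence rank one, the Schur complement used to eliminate $\vv_H$ is highly degenerate: one must track carefully in which (essentially one-dimensional) direction $\vv_E$ is forced to lie and how the null spaces of the PSD blocks $\vK_{EE}$ and $\vK_{HH}$ interact with it, and then squeeze the resulting scalar inequality down to the exact harmonic-type threshold. Relatedly, confirming that $\vK_{EE}$ is positive semidefinite on the relevant subspace — despite the factor $-2$ in its definition — requires using the reparametrization/normalization content of Assumption~\ref{assumption:energy-function}. By comparison, the Hurwitz conclusion and the $\mathrm{Im}(\lambda)\ne0$ bound are comparatively routine once the symmetric/antisymmetric decomposition is in place.
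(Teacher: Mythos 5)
Your proposal is correct, and for the overall reduction (take the Jacobian $\vJ=-f''(1)\,\vM$ from the preceding theorem, then bound the spectrum of a matrix whose symmetric part is $\mathrm{diag}(\vK_{EE},\vK_{HH})$ and whose coupling is $\vK_{EH}$) it follows the same path as the paper; your Hurwitz argument and your $\mathrm{Im}(\lambda)\neq 0$ bound (imaginary-part identity forcing $\|\vv_E\|^2=\|\vv_H\|^2=\tfrac12$, then a Rayleigh bound on the symmetric part) are exactly the paper's argument, merely written with Hermitian forms on a complex eigenvector instead of split real/imaginary components. Where you genuinely diverge is the real-eigenvalue case: the paper proves a standalone matrix result (Theorem~\ref{theorem:new-eigenvalue-bound}, generalizing Lemma G.2 of \citet{nagarajan2017gradient} to a PSD upper-left block) by squaring the two block equations, invoking the auxiliary Lemma~\ref{lemma:eigenvalue-bound}, and case-splitting on $\lambda_1$ relative to $-\underline{\lambda}(\vQ)$, which lands exactly on the stated harmonic threshold; you instead eliminate $\vv_H$ via a Schur complement (after the harmless reduction to $\sigma<\underline{\lambda}(\vK_{HH})$) and take a Rayleigh quotient. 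Your route does close, and more easily than you fear: the rank-one structure of $\vK_{EH}$ and the null-space bookkeeping you flag as the main obstacle are unnecessary, because $(\vK_{HH}-\sigma\vI)^{-1}\succeq \overline{\lambda}(\vK_{HH})^{-1}\vI$ (using $\sigma\geq 0$ from the Hurwitz part) already gives $\sigma\|\vv_E\|^2 \geq \vv_E^\top\vK_{EE}\vv_E + \overline{\lambda}(\vK_{HH})^{-1}\,\vv_E^\top\vK_{EH}\vK_{EH}^\top\vv_E \geq \overline{\lambda}(\vK_{HH})^{-1}\underline{\lambda}(\vK_{EH}\vK_{EH}^\top)\|\vv_E\|^2$, hence $\sigma\geq \underline{\lambda}(\vK_{EH}\vK_{EH}^\top)/\overline{\lambda}(\vK_{HH})$, which strictly exceeds the theorem's threshold since $\overline{\lambda}(\vK_{EE},\vK_{HH})\geq\overline{\lambda}(\vK_{HH})$ and $\underline{\lambda}(\vK_{EH}\vK_{EH}^\top)>0$; so $\mathrm{Re}(\lambda)=-f''(1)\sigma$ satisfies the stated strict bound. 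Two small corrections: your claim that the reduced scalar inequality has solution set \emph{precisely} the stated threshold is an overstatement (what falls out of your elimination is a different, in fact slightly sharper, $\vK_{EE}$-free bound that merely implies the stated one), and no ``gauge fixing'' is needed for $\vK_{EE}\succeq 0$ since Assumption~\ref{assumption:energy-function} posits it directly. In terms of trade-offs, the paper's lemma is self-contained, reusable for GAN-type saddle dynamics, and reproduces the exact constants quoted in the theorem, while your Schur-complement argument is shorter, avoids the auxiliary lemma and the squaring manipulations, and yields a marginally tighter rate estimate in the real-eigenvalue case.
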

\vspace{-5pt}
Note that these theoretical results hold for any $f$-divergence with strictly convex and continuously differentiable $f$. All the proofs for this section can be found in Appendix~\ref{app:proof-local-convergence}.
\section{Related Work}
As a generalization to maximum likelihood estimation (MLE), methods based on variational $f$-divergence minimization have been proposed for training implicit generative models \cite{nowozin2016f, mohamed2016learning} and latent variable models \cite{zhang2018training}. Although using general $f$-divergences to train deep EBMs is novel, 
researchers have developed related techniques for learning unnormalized statistical models. \citet{hinton2002training} proposed contrastive divergence (CD) to enable maximum likelihood training for EBMs. Inspired by persistent CD \cite{tieleman2008training} which propogates Markov chains throughout training, recently \citet{du2019implicit} proposed techniques to scale CD to high-dimensional data domains, while \citet{nijkamp2019learning} proposed to learn non-convergent non-persistent short-run MCMC as an efficient alternative to CD for maximum likelihood training. 

Based on the primal-dual view of MLE, doubly dual embedding \cite{dai2018kernel} and adversarial dynamics embedding \cite{dai2019exponential} were proposed, which introduce a dual sampler to avoid the computation of the partition function.
\citet{li2019relieve} proposed a black-box algorithm to perform maximum likelihood training for Markov random field (MRF), which employs two variational distributions to approximately infer the latent variables and estimate the partition function of an MRF.
Noise contrastive estimation (NCE) \cite{gutmann2012noise} train EBMs by performing non-linear logistic regression to classify real data and artificial data from a noise distribution.
Theoretically, \citet{riou2018noise} proved that when the number of artificial data points approaches infinity, NCE is asymptotically equivalent to MLE. Similar to EBMs, diffusion probabilistic models proposed by \cite{sohl2015deep} define the model distribution as a result of a parametric diffusion process starting from a simple known distribution, which is also trained using MLE. 

In this work, to generalize maximum likelihood training of EBMs, we propose a new variational framework, which enables us to train EBMs using any desired $f$-divergence.
\section{Experiments}
\subsection{Fitting Univariate Mixture of Gaussians}\label{sec:experiment-gaussian}
\paragraph{Setup.} 
To understand the effects of different divergences for training EBMs, inspired by \cite{minka2005divergence,nowozin2016f}, we first investigate a model misspecification scenario. Specifically, we use an EBM with a quadratic energy function (corresponding to a Gaussian distribution):
\begin{align}
    E_{\mu, \sigma}(x) = \frac{(x - \mu)^2}{2\sigma^2},~q_{\mu, \sigma}(x) \propto \exp(-E_{\mu, \sigma}(x))\label{eq:gaussian}
\end{align}
to fit a mixture of Gaussians (see Figure~\ref{fig:divergences} for illustration). Here $\mu$ and $\sigma$ are two trainable parameters. Although for energy function in Equation~(\ref{eq:gaussian}), the partition function is actually tractable ($Z_{\mu, \sigma} = \sqrt{2 \pi \sigma^2}$), for experimental purposes, we will treat it as a general EBM and do not leverage this tractability. For the implementation of the variational function $H_\vomega$, we use a fully-connected neural network with two hidden layers (each having 64 hidden units) and $\texttt{tanh}$ activation functions. To optimize the $f$-EBM objective, we use single-step gradient updates (Algorithm~\ref{alg:f-ebm}) to alternatively train the energy function and the variational function, with a learning rate of $0.01$ and a batch size of $1000$.

\paragraph{Parameter Learning Results.}
Because we know the real data density, we can numerically solve the $f$-divergence minimization problem and get the ground-truth solution ($\mu^*, \sigma^*$) induced by the chosen $f$-divergence. Then, we test whether single-step $f$-EBM algorithm can learn good parameters ($\hat{\mu}, \hat{\sigma}$) that are close to the desired solution. As shown in Figure~\ref{fig:divergences}, and Table~\ref{tab:gaussian-result} in Appendix~\ref{app:gaussian-parameter-learning}, single step $f$-EBM is capable of learning model distributions that closely match the desired solutions for various $f$-divergences.
Furthermore, although all $f$-divergences are valid objectives (since they are minimized if and only if the model distribution exactly matches the data distribution), in practice the choice of discrepancy measures has a significant influence on the learned distribution because of model misspecification.

\begin{figure*}[t]
\centering
\subfigure[KL]{
\includegraphics[height=1.2in]{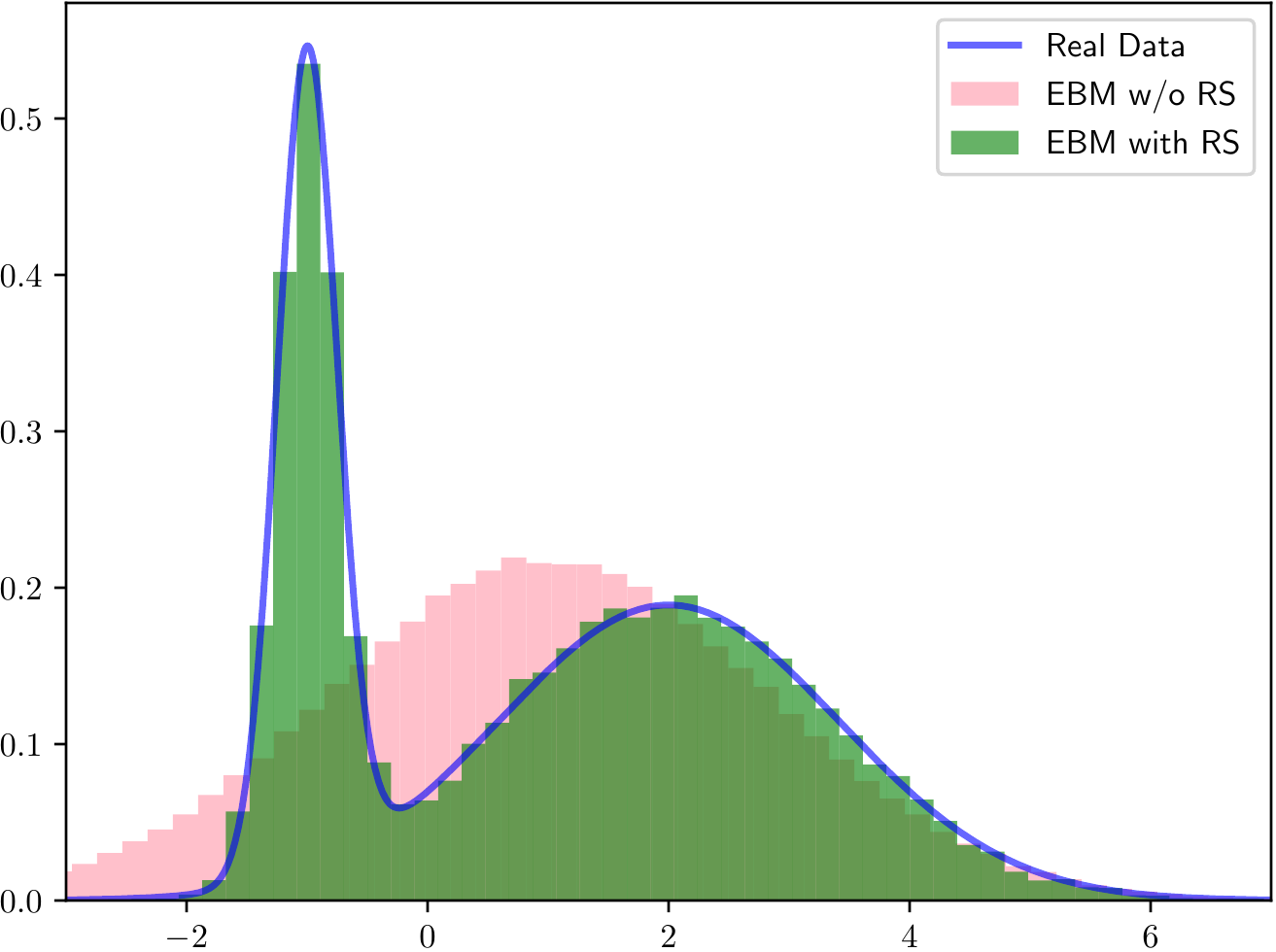}\label{fig:kl_rs}}
\subfigure[Reverse KL]{
\includegraphics[height=1.2in]{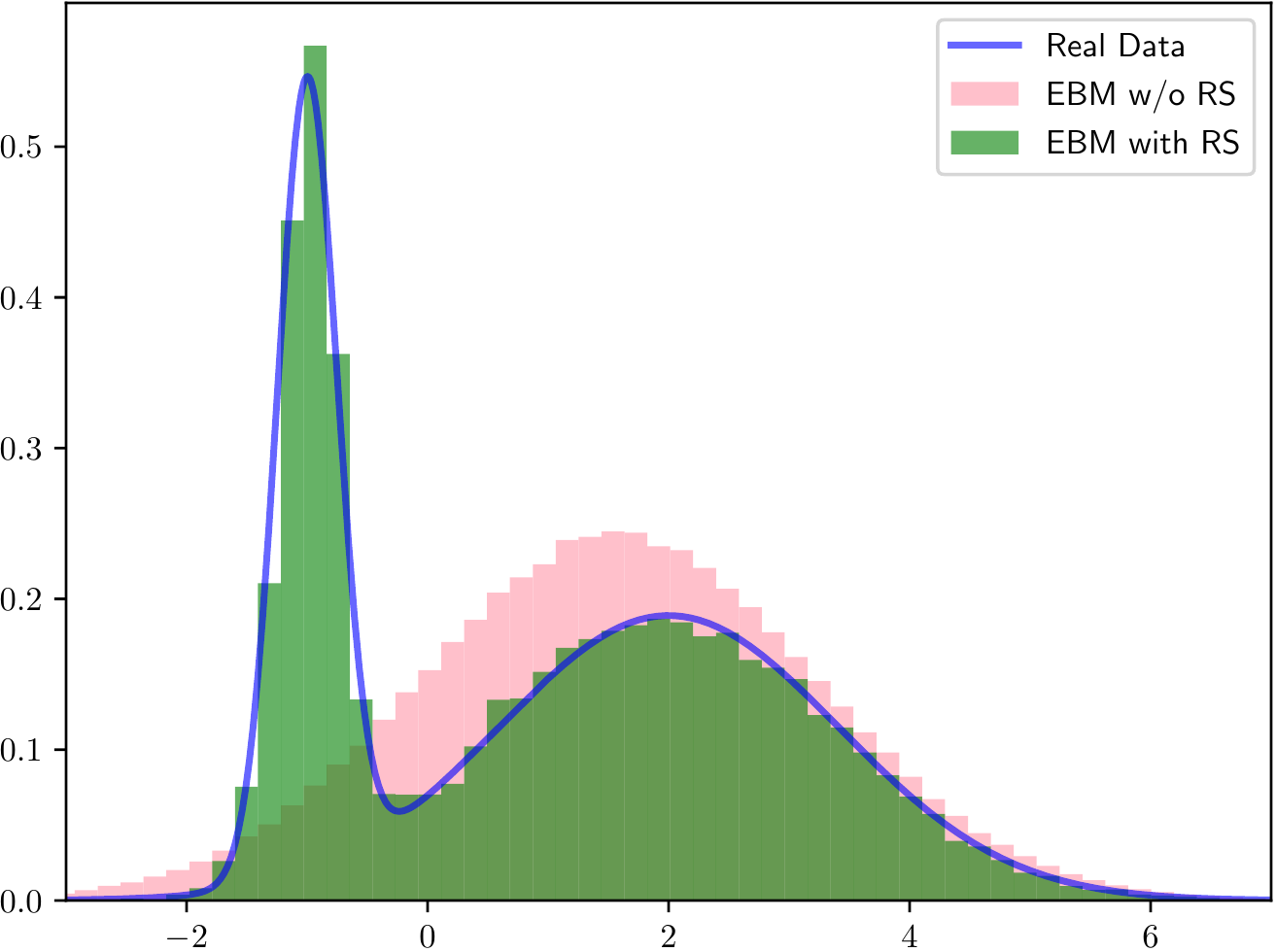}\label{fig:rev-kl_rs}}
\subfigure[Jensen-Shannon]{
\includegraphics[height=1.2in]{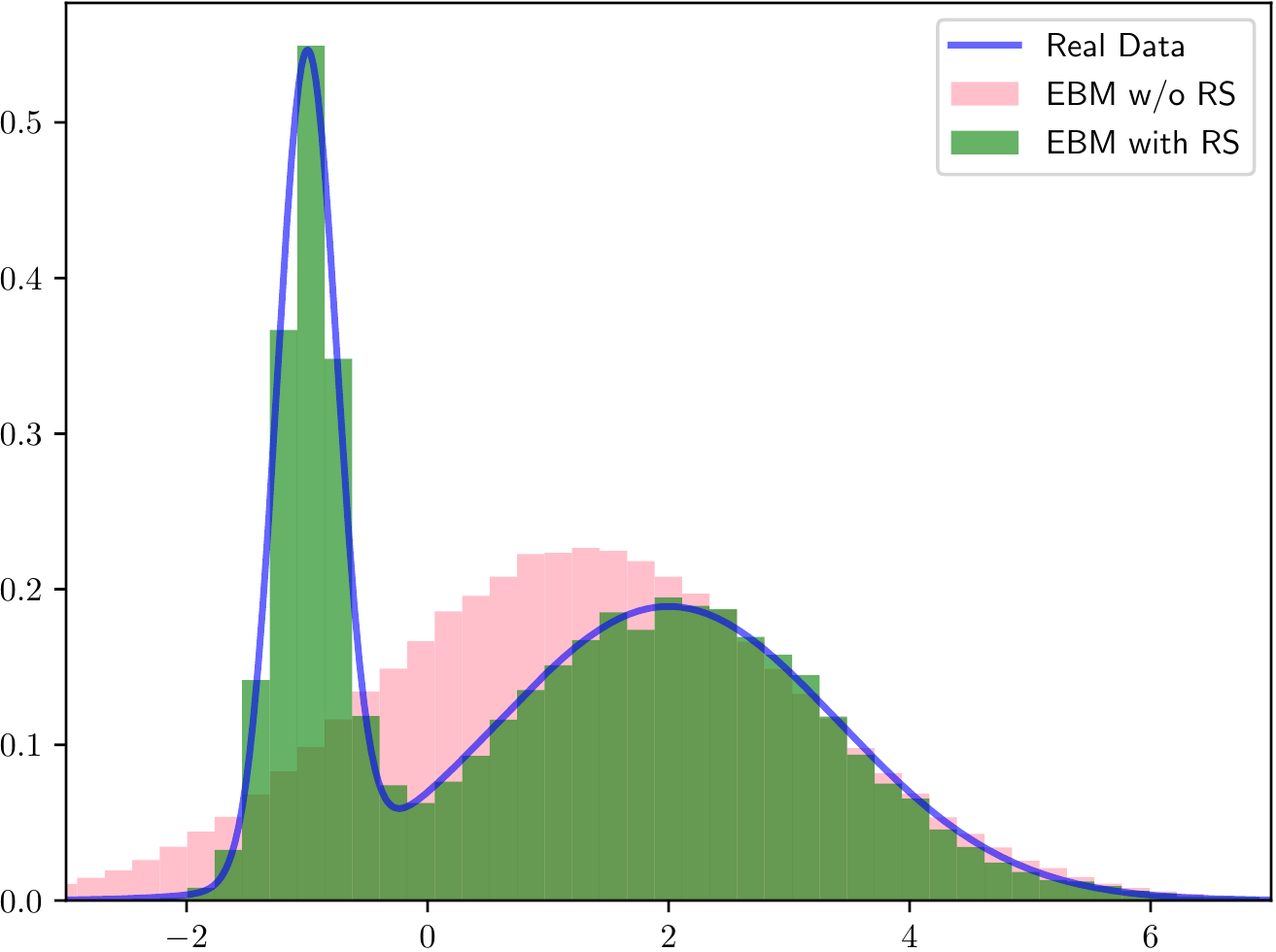}\label{fig:jensen-shannon_rs}}
\subfigure[$\alpha$-Divergence ($\alpha=-1$)]{
\includegraphics[height=1.2in]{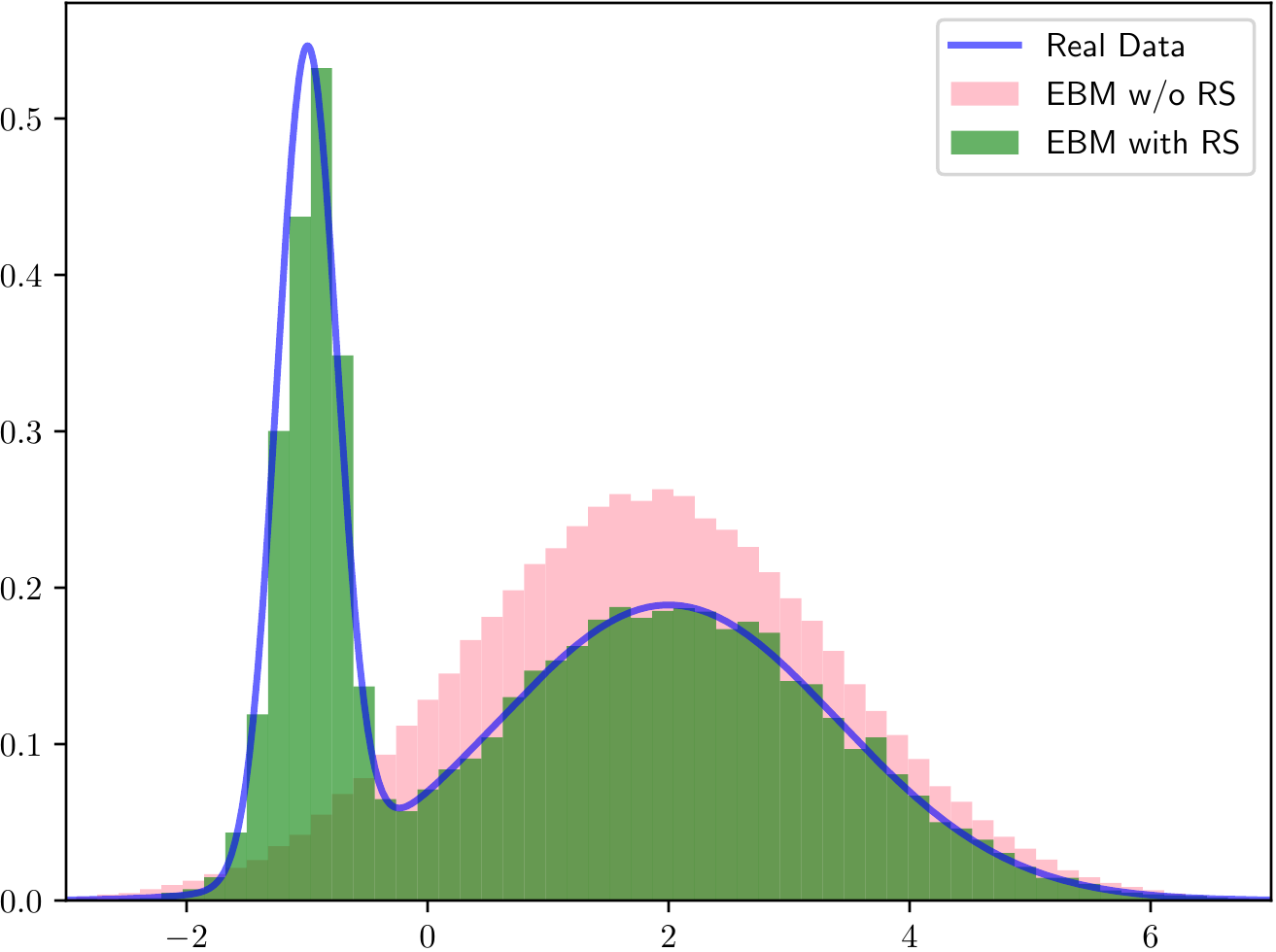}\label{fig:alpha-div_rs}}
\vspace{-5pt}
\caption{The effect of rejection sampling using learned density ratio ($\exp(H_\vomega(x) + E_\vtheta(x))$). The blue line represents the desired real data distribution. The green and pink histograms represent the generated samples with and without rejection sampling respectively.}
\vspace{-5pt}
\label{fig:divergences_rs}
\end{figure*}
\paragraph{Density Ratio Estimation \& Bias Correction.} As shown in Equation~(\ref{eq:recover-density-ratio}), our framework additionally produces a density ratio estimator $\exp(H_\vomega + E_\vtheta)$. First, we visualize the density ratio estimation results in Figure~\ref{fig:divergences_dr} in Appendix~\ref{app:density-ratio}, from which we can see that the estimated density ratio is accurate in most areas of the data support, except in the low-density regions where very few training data points come from.
Furthermore, as discussed in \cite{grover2019bias}, 
when the learned generative model produces biased statistics relative to the real data distribution, it is possible to
correct the bias using estimated density ratio in conjunction with importance sampling or rejection sampling. With the goal of sampling from real data distribution $p$, we employ rejection sampling \cite{halton1970retrospective,azadi2018discriminator}, where we accept samples from the EBM with a probability proportional to the density ratio. As shown in Figure~\ref{fig:divergences_rs}, using the learned density ratio, we are able to recover the real data distribution even in such a model misspecification scenario.

\paragraph{Convergence Results.} From the optimization trajectories shown in Figure~\ref{fig:divergences_grad} in Appendix~\ref{app:optimization-trajectory}, we empirically found that, using single-step alternative gradient updates, $f$-EBM is able to converge to the desired solution starting from different random initializations.

\subsection{Modeling Natural Images}
In this section, we demonstrate that $f$-EBMs can produce high-quality images for various choices of $f$-divergences\footnote{Our implementation of $f$-EBM can be found at: \url{https://github.com/ermongroup/f-EBM}}. Furthermore, we show that energy functions trained by $f$-EBMs can learn meaningful image representations for image denoising and inpainting tasks.

\paragraph{Setup.} We conduct experiments with two commonly used image datasets, CelebA \cite{liu2015faceattributes} and CIFAR-10 \cite{krizhevsky2009learning}. Since the performance is sensitive to the model architectures, for fair comparisons, we use the same architecture and training hyper-parameters for $f$-EBMs and the contrastive divergence baseline \cite{du2019implicit}. See Appendix~\ref{app:training-details} for more details. 

\paragraph{Image Generation.} For qualitative evaluation, we show uncurated samples from $f$-EBMs for CIFAR-10 and CelebA in Appendix~\ref{app:cifar10-samples} and \ref{app:celeba-samples}. These high quality samples demonstrate that the $f$-EBM framework is effective for training EBMs under a variety of discrepancy measures. For quantitative evaluation, we report Inception \cite{salimans2016improved} and FID \cite{heusel2017gans} scores for CIFAR-10 in Table~\ref{tab:cifar10-generation}. For CelebA dataset, since different preprocessing may lead to numbers that are not directly comparable, and \citet{heusel2017gans} show that Inception score is not a valid metric for CelebA, we only report the FID score for EBMs used in our experiments in Table~\ref{tab:celeba-generation}. Table~\ref{tab:cifar10-generation} and \ref{tab:celeba-generation} suggest that we can achieve a significant sample quality improvement with $f$-EBMs using $f$-divergences such as Jensen-Shannon, Squared Hellinger and Reverse KL. Moreover, the performance of $f$-EBM with KL divergence is similar to contrastive divergence, whose underlying discrepancy measure is also KL. To illustrate the Langevin dynamics sampling process, we show how samples evolve from random noise in Appendix~\ref{app:langevin-cifar10} and \ref{app:langevin-celeba}. For generalization, we show nearest neighbor images of our samples in the training set in Appendix~\ref{app:nearest-neighbor}, from which we know that our model is not simply memorizing training images.

\paragraph{Image Denoising \& Inpainting.} In Appendix~\ref{app:inpainting} and \ref{app:denoising}, we show that energy functions learned by $f$-EBMs can be used for image inpainting and denoising tasks, where we use Langevin dynamics with the gradients of the learned energy functions to refine the images.

\begin{table}[htb]
\vspace{-5pt}
\caption{Inception and FID scores for CIFAR-10 conditional generation. We compare with results reported by 
DCGAN \cite{radford2015unsupervised,wang2016learning}, 
Improved GAN \cite{salimans2016improved}, 
Fisher GAN \cite{mroueh2017fisher}, 
ACGAN \cite{odena2017conditional}, 
WGAN-GP \cite{gulrajani2017improved}, 
SNGAN \cite{miyato2018spectral}, 
Contrastive Divergence \cite{du2019implicit}.
}
\begin{center}
\scalebox{0.9}{
\begin{tabular}{l|lcc}
\toprule
 & Method & Inception & FID\\
\midrule
\multirow{6}{*}{GANs} 
& DCGAN & 6.58 & -\\
& Improved GAN & 8.09 & -\\
& Fisher GAN & 8.16 & -\\
& ACGAN & 8.25 & -\\
& WGAN-GP & 8.42 & -\\
& SNGAN & 8.60 & 17.50\\
\midrule
\multirow{5}{*}{EBMs} & Contrastive Divergence (KL) & 8.30 & 37.90\\
& $f$-EBM (KL) & 8.11 $\pm$ .06 & 37.36\\
& $f$-EBM (Reverse KL) & 8.49 $\pm$ .09 & 33.25\\
& $f$-EBM (Squared Hellinger) & 8.57 $\pm$ .08 & 32.19\\
& $f$-EBM (Jensen Shannon) & \textbf{8.61} $\pm$ .06 & \textbf{30.86}\\
\bottomrule
\end{tabular}
}
\end{center}
\vspace{-5pt}
\label{tab:cifar10-generation}
\end{table}

\begin{table}[htb]
\vspace{-10pt}
\caption{FID for CelebA ($32 \times 32$) unconditional generation.
}
\begin{center}
\scalebox{0.9}{
\begin{tabular}{l|lcc}
\toprule
 & Method & FID\\
\midrule
\multirow{5}{*}{EBMs} & Contrastive Divergence (KL) & 35.97\\
& $f$-EBM (KL) & 37.41\\
& $f$-EBM (Reverse KL) & 26.75\\
& $f$-EBM (Jensen Shannon) & 26.53\\
& $f$-EBM (Squared Hellinger) & \textbf{23.67}\\
\bottomrule
\end{tabular}
}
\end{center}
\vspace{-15pt}
\label{tab:celeba-generation}
\end{table}

\section{Discussion and Future Work}
In this paper, based on a new variational representation of $f$-divergences, we propose a general framework termed $f$-EBM, which enables us to train deep energy-based models using various descrepancy measures within the $f$-divergence family. We demonstrate the effectiveness of $f$-EBM both theoretically and empirically. Experimental results on CIFAR-10 and CelebA datasets show that our framework can achieve significant sample quality improvement compared to the predominant contrastive divergence method.
For future works, there are some other possible ways that may be potentially useful for training EBMs with $f$-divergences. First, to directly tackle the computational issues of the gradient reparametrization method in Section~\ref{sec:challenge-f-gan}, we may redefine the generative process as the combination of a proposal distribution and MCMC sampling so that we can jointly train the proposal model and the energy function, and we only need to backpropagate through a few MCMC steps \cite{lawson2019energy}. We may also define the generative process as MCMC sampling in the latent space together with a decoder that transforms latent codes to samples \cite{girolami2011riemann,kumar2019maximum}, such that the dimension of the latent space is much smaller than the sample space and computing Hessians is more efficient. Since these methods rely on careful design of additional components such as a proposal or a decoder, we leave them as interesting avenues for future works.

\section*{Acknowledgments}
Toyota Research Institute (``TRI'') provided funds to assist the authors with their research but this article solely reflects the opinions and conclusions of its authors and not TRI or any other Toyota entity. This research was also supported by NSF (\#1651565, \#1522054, \#1733686), ONR (N00014-19-1-2145), AFOSR (FA9550-19-1-0024).
\bibliography{ref}

\begin{thebibliography}{58}
\providecommand{\natexlab}[1]{#1}
\providecommand{\url}[1]{\texttt{#1}}
\expandafter\ifx\csname urlstyle\endcsname\relax
  \providecommand{\doi}[1]{doi: #1}\else
  \providecommand{\doi}{doi: \begingroup \urlstyle{rm}\Url}\fi

\bibitem[Abadi et~al.(2016)Abadi, Barham, Chen, Chen, Davis, Dean, Devin,
  Ghemawat, Irving, Isard, et~al.]{abadi2016tensorflow}
Abadi, M., Barham, P., Chen, J., Chen, Z., Davis, A., Dean, J., Devin, M.,
  Ghemawat, S., Irving, G., Isard, M., et~al.
\newblock Tensorflow: A system for large-scale machine learning.
\newblock In \emph{12th $\{$USENIX$\}$ Symposium on Operating Systems Design
  and Implementation ($\{$OSDI$\}$ 16)}, pp.\  265--283, 2016.

\bibitem[Arjovsky et~al.(2017)Arjovsky, Chintala, and
  Bottou]{arjovsky2017wasserstein}
Arjovsky, M., Chintala, S., and Bottou, L.
\newblock Wasserstein gan.
\newblock \emph{arXiv preprint arXiv:1701.07875}, 2017.

\bibitem[Azadi et~al.(2018)Azadi, Olsson, Darrell, Goodfellow, and
  Odena]{azadi2018discriminator}
Azadi, S., Olsson, C., Darrell, T., Goodfellow, I., and Odena, A.
\newblock Discriminator rejection sampling.
\newblock \emph{arXiv preprint arXiv:1810.06758}, 2018.

\bibitem[Dai et~al.(2018)Dai, Dai, Gretton, Song, Schuurmans, and
  He]{dai2018kernel}
Dai, B., Dai, H., Gretton, A., Song, L., Schuurmans, D., and He, N.
\newblock Kernel exponential family estimation via doubly dual embedding.
\newblock \emph{arXiv preprint arXiv:1811.02228}, 2018.

\bibitem[Dai et~al.(2019)Dai, Liu, Dai, He, Gretton, Song, and
  Schuurmans]{dai2019exponential}
Dai, B., Liu, Z., Dai, H., He, N., Gretton, A., Song, L., and Schuurmans, D.
\newblock Exponential family estimation via adversarial dynamics embedding.
\newblock \emph{arXiv preprint arXiv:1904.12083}, 2019.

\bibitem[Dinh et~al.(2014)Dinh, Krueger, and Bengio]{dinh2014nice}
Dinh, L., Krueger, D., and Bengio, Y.
\newblock Nice: Non-linear independent components estimation.
\newblock \emph{arXiv preprint arXiv:1410.8516}, 2014.

\bibitem[Dinh et~al.(2016)Dinh, Sohl-Dickstein, and Bengio]{dinh2016density}
Dinh, L., Sohl-Dickstein, J., and Bengio, S.
\newblock Density estimation using real nvp.
\newblock \emph{arXiv preprint arXiv:1605.08803}, 2016.

\bibitem[Du \& Mordatch(2019)Du and Mordatch]{du2019implicit}
Du, Y. and Mordatch, I.
\newblock Implicit generation and modeling with energy based models.
\newblock In \emph{Advances in Neural Information Processing Systems 32}, pp.\
  3603--3613, 2019.

\bibitem[Germain et~al.(2015)Germain, Gregor, Murray, and
  Larochelle]{germain2015made}
Germain, M., Gregor, K., Murray, I., and Larochelle, H.
\newblock Made: Masked autoencoder for distribution estimation.
\newblock In \emph{International Conference on Machine Learning}, pp.\
  881--889, 2015.

\bibitem[Girolami \& Calderhead(2011)Girolami and
  Calderhead]{girolami2011riemann}
Girolami, M. and Calderhead, B.
\newblock Riemann manifold langevin and hamiltonian monte carlo methods.
\newblock \emph{Journal of the Royal Statistical Society: Series B (Statistical
  Methodology)}, 73\penalty0 (2):\penalty0 123--214, 2011.

\bibitem[Goodfellow et~al.(2014)Goodfellow, Pouget-Abadie, Mirza, Xu,
  Warde-Farley, Ozair, Courville, and Bengio]{goodfellow2014generative}
Goodfellow, I., Pouget-Abadie, J., Mirza, M., Xu, B., Warde-Farley, D., Ozair,
  S., Courville, A., and Bengio, Y.
\newblock Generative adversarial nets.
\newblock In \emph{Advances in neural information processing systems}, pp.\
  2672--2680, 2014.

\bibitem[Goodfellow et~al.(2016)Goodfellow, Bengio, and
  Courville]{Goodfellow-et-al-2016}
Goodfellow, I., Bengio, Y., and Courville, A.
\newblock \emph{Deep Learning}.
\newblock MIT Press, 2016.
\newblock \url{http://www.deeplearningbook.org}.

\bibitem[Grover et~al.(2019)Grover, Song, Agarwal, Tran, Kapoor, Horvitz, and
  Ermon]{grover2019bias}
Grover, A., Song, J., Agarwal, A., Tran, K., Kapoor, A., Horvitz, E., and
  Ermon, S.
\newblock Bias correction of learned generative models using likelihood-free
  importance weighting.
\newblock \emph{arXiv preprint arXiv:1906.09531}, 2019.

\bibitem[Gulrajani et~al.(2017)Gulrajani, Ahmed, Arjovsky, Dumoulin, and
  Courville]{gulrajani2017improved}
Gulrajani, I., Ahmed, F., Arjovsky, M., Dumoulin, V., and Courville, A.~C.
\newblock Improved training of wasserstein gans.
\newblock In \emph{Advances in neural information processing systems}, pp.\
  5767--5777, 2017.

\bibitem[Gutmann \& Hyv{\"a}rinen(2012)Gutmann and
  Hyv{\"a}rinen]{gutmann2012noise}
Gutmann, M.~U. and Hyv{\"a}rinen, A.
\newblock Noise-contrastive estimation of unnormalized statistical models, with
  applications to natural image statistics.
\newblock \emph{Journal of Machine Learning Research}, 13\penalty0
  (Feb):\penalty0 307--361, 2012.

\bibitem[Halton(1970)]{halton1970retrospective}
Halton, J.~H.
\newblock A retrospective and prospective survey of the monte carlo method.
\newblock \emph{Siam review}, 12\penalty0 (1):\penalty0 1--63, 1970.

\bibitem[Hassan(1996)]{hassan1996khalil}
Hassan, K.
\newblock Khalil, nonlinear systems.
\newblock \emph{Prentice-Hall, Inc., New Jersey}, 1996.

\bibitem[He et~al.(2016)He, Zhang, Ren, and Sun]{he2016deep}
He, K., Zhang, X., Ren, S., and Sun, J.
\newblock Deep residual learning for image recognition.
\newblock In \emph{Proceedings of the IEEE conference on computer vision and
  pattern recognition}, pp.\  770--778, 2016.

\bibitem[Heusel et~al.(2017)Heusel, Ramsauer, Unterthiner, Nessler, and
  Hochreiter]{heusel2017gans}
Heusel, M., Ramsauer, H., Unterthiner, T., Nessler, B., and Hochreiter, S.
\newblock Gans trained by a two time-scale update rule converge to a local nash
  equilibrium.
\newblock In \emph{Advances in neural information processing systems}, pp.\
  6626--6637, 2017.

\bibitem[Hinton(2002)]{hinton2002training}
Hinton, G.~E.
\newblock Training products of experts by minimizing contrastive divergence.
\newblock \emph{Neural computation}, 14\penalty0 (8):\penalty0 1771--1800,
  2002.

\bibitem[Ho \& Ermon(2016)Ho and Ermon]{ho2016generative}
Ho, J. and Ermon, S.
\newblock Generative adversarial imitation learning.
\newblock In \emph{Advances in neural information processing systems}, pp.\
  4565--4573, 2016.

\bibitem[Kingma \& Dhariwal(2018)Kingma and Dhariwal]{kingma2018glow}
Kingma, D.~P. and Dhariwal, P.
\newblock Glow: Generative flow with invertible 1x1 convolutions.
\newblock In \emph{Advances in Neural Information Processing Systems}, pp.\
  10215--10224, 2018.

\bibitem[Krizhevsky et~al.(2009)Krizhevsky, Hinton,
  et~al.]{krizhevsky2009learning}
Krizhevsky, A., Hinton, G., et~al.
\newblock Learning multiple layers of features from tiny images.
\newblock 2009.

\bibitem[Kumar et~al.(2019)Kumar, Goyal, Courville, and
  Bengio]{kumar2019maximum}
Kumar, R., Goyal, A., Courville, A., and Bengio, Y.
\newblock Maximum entropy generators for energy-based models.
\newblock \emph{arXiv preprint arXiv:1901.08508}, 2019.

\bibitem[Larochelle \& Murray(2011)Larochelle and Murray]{larochelle2011neural}
Larochelle, H. and Murray, I.
\newblock The neural autoregressive distribution estimator.
\newblock In \emph{Proceedings of the Fourteenth International Conference on
  Artificial Intelligence and Statistics}, pp.\  29--37, 2011.

\bibitem[Lawson et~al.(2019)Lawson, Tucker, Dai, and
  Ranganath]{lawson2019energy}
Lawson, J., Tucker, G., Dai, B., and Ranganath, R.
\newblock Energy-inspired models: Learning with sampler-induced distributions.
\newblock In \emph{Advances in Neural Information Processing Systems}, pp.\
  8499--8511, 2019.

\bibitem[Li et~al.(2019)Li, Du, Xu, Welling, Zhu, and Zhang]{li2019relieve}
Li, C., Du, C., Xu, K., Welling, M., Zhu, J., and Zhang, B.
\newblock To relieve your headache of training an mrf, take advil.
\newblock \emph{arXiv preprint arXiv:1901.08400}, 2019.

\bibitem[Liu et~al.(2015)Liu, Luo, Wang, and Tang]{liu2015faceattributes}
Liu, Z., Luo, P., Wang, X., and Tang, X.
\newblock Deep learning face attributes in the wild.
\newblock In \emph{Proceedings of International Conference on Computer Vision
  (ICCV)}, December 2015.

\bibitem[Mescheder et~al.(2018)Mescheder, Geiger, and
  Nowozin]{mescheder2018training}
Mescheder, L., Geiger, A., and Nowozin, S.
\newblock Which training methods for gans do actually converge?
\newblock \emph{arXiv preprint arXiv:1801.04406}, 2018.

\bibitem[Minka et~al.(2005)]{minka2005divergence}
Minka, T. et~al.
\newblock Divergence measures and message passing.
\newblock Technical report, Technical report, Microsoft Research, 2005.

\bibitem[Miyato et~al.(2018)Miyato, Kataoka, Koyama, and
  Yoshida]{miyato2018spectral}
Miyato, T., Kataoka, T., Koyama, M., and Yoshida, Y.
\newblock Spectral normalization for generative adversarial networks.
\newblock \emph{arXiv preprint arXiv:1802.05957}, 2018.

\bibitem[Mohamed \& Lakshminarayanan(2016)Mohamed and
  Lakshminarayanan]{mohamed2016learning}
Mohamed, S. and Lakshminarayanan, B.
\newblock Learning in implicit generative models.
\newblock \emph{arXiv preprint arXiv:1610.03483}, 2016.

\bibitem[Mroueh \& Sercu(2017)Mroueh and Sercu]{mroueh2017fisher}
Mroueh, Y. and Sercu, T.
\newblock Fisher gan.
\newblock In \emph{Advances in Neural Information Processing Systems}, pp.\
  2513--2523, 2017.

\bibitem[Nagarajan \& Kolter(2017)Nagarajan and Kolter]{nagarajan2017gradient}
Nagarajan, V. and Kolter, J.~Z.
\newblock Gradient descent gan optimization is locally stable.
\newblock In \emph{Advances in Neural Information Processing Systems}, pp.\
  5585--5595, 2017.

\bibitem[Neal et~al.(2011)]{neal2011mcmc}
Neal, R.~M. et~al.
\newblock Mcmc using hamiltonian dynamics.
\newblock \emph{Handbook of markov chain monte carlo}, 2\penalty0
  (11):\penalty0 2, 2011.

\bibitem[Nguyen et~al.(2010)Nguyen, Wainwright, and
  Jordan]{nguyen2010estimating}
Nguyen, X., Wainwright, M.~J., and Jordan, M.~I.
\newblock Estimating divergence functionals and the likelihood ratio by convex
  risk minimization.
\newblock \emph{IEEE Transactions on Information Theory}, 56\penalty0
  (11):\penalty0 5847--5861, 2010.

\bibitem[Nijkamp et~al.(2019)Nijkamp, Hill, Zhu, and Wu]{nijkamp2019learning}
Nijkamp, E., Hill, M., Zhu, S.-C., and Wu, Y.~N.
\newblock Learning non-convergent non-persistent short-run mcmc toward
  energy-based model.
\newblock In \emph{Advances in Neural Information Processing Systems}, pp.\
  5233--5243, 2019.

\bibitem[Nowozin et~al.(2016)Nowozin, Cseke, and Tomioka]{nowozin2016f}
Nowozin, S., Cseke, B., and Tomioka, R.
\newblock f-gan: Training generative neural samplers using variational
  divergence minimization.
\newblock In \emph{Advances in neural information processing systems}, pp.\
  271--279, 2016.

\bibitem[Odena et~al.(2017)Odena, Olah, and Shlens]{odena2017conditional}
Odena, A., Olah, C., and Shlens, J.
\newblock Conditional image synthesis with auxiliary classifier gans.
\newblock In \emph{Proceedings of the 34th International Conference on Machine
  Learning-Volume 70}, pp.\  2642--2651. JMLR. org, 2017.

\bibitem[Oord et~al.(2016{\natexlab{a}})Oord, Dieleman, Zen, Simonyan, Vinyals,
  Graves, Kalchbrenner, Senior, and Kavukcuoglu]{oord2016wavenet}
Oord, A. v.~d., Dieleman, S., Zen, H., Simonyan, K., Vinyals, O., Graves, A.,
  Kalchbrenner, N., Senior, A., and Kavukcuoglu, K.
\newblock Wavenet: A generative model for raw audio.
\newblock \emph{arXiv preprint arXiv:1609.03499}, 2016{\natexlab{a}}.

\bibitem[Oord et~al.(2016{\natexlab{b}})Oord, Kalchbrenner, and
  Kavukcuoglu]{oord2016pixel}
Oord, A. v.~d., Kalchbrenner, N., and Kavukcuoglu, K.
\newblock Pixel recurrent neural networks.
\newblock \emph{arXiv preprint arXiv:1601.06759}, 2016{\natexlab{b}}.

\bibitem[Paszke et~al.(2019)Paszke, Gross, Massa, Lerer, Bradbury, Chanan,
  Killeen, Lin, Gimelshein, Antiga, et~al.]{paszke2019pytorch}
Paszke, A., Gross, S., Massa, F., Lerer, A., Bradbury, J., Chanan, G., Killeen,
  T., Lin, Z., Gimelshein, N., Antiga, L., et~al.
\newblock Pytorch: An imperative style, high-performance deep learning library.
\newblock In \emph{Advances in Neural Information Processing Systems}, pp.\
  8024--8035, 2019.

\bibitem[Poon \& Domingos(2011)Poon and Domingos]{poon2011sum}
Poon, H. and Domingos, P.
\newblock Sum-product networks: A new deep architecture.
\newblock In \emph{2011 IEEE International Conference on Computer Vision
  Workshops (ICCV Workshops)}, pp.\  689--690. IEEE, 2011.

\bibitem[Radford et~al.(2015)Radford, Metz, and
  Chintala]{radford2015unsupervised}
Radford, A., Metz, L., and Chintala, S.
\newblock Unsupervised representation learning with deep convolutional
  generative adversarial networks.
\newblock \emph{arXiv preprint arXiv:1511.06434}, 2015.

\bibitem[Riou-Durand \& Chopin(2018)Riou-Durand and Chopin]{riou2018noise}
Riou-Durand, L. and Chopin, N.
\newblock Noise contrastive estimation: asymptotics, comparison with mc-mle.
\newblock \emph{arXiv preprint arXiv:1801.10381}, 2018.

\bibitem[Robert \& Casella(2013)Robert and Casella]{robert2013monte}
Robert, C. and Casella, G.
\newblock \emph{Monte Carlo statistical methods}.
\newblock Springer Science \& Business Media, 2013.

\bibitem[Salimans et~al.(2016)Salimans, Goodfellow, Zaremba, Cheung, Radford,
  and Chen]{salimans2016improved}
Salimans, T., Goodfellow, I., Zaremba, W., Cheung, V., Radford, A., and Chen,
  X.
\newblock Improved techniques for training gans.
\newblock In \emph{Advances in neural information processing systems}, pp.\
  2234--2242, 2016.

\bibitem[Sohl-Dickstein et~al.(2015)Sohl-Dickstein, Weiss, Maheswaranathan, and
  Ganguli]{sohl2015deep}
Sohl-Dickstein, J., Weiss, E.~A., Maheswaranathan, N., and Ganguli, S.
\newblock Deep unsupervised learning using nonequilibrium thermodynamics.
\newblock \emph{arXiv preprint arXiv:1503.03585}, 2015.

\bibitem[Sutton et~al.(2000)Sutton, McAllester, Singh, and
  Mansour]{sutton2000policy}
Sutton, R.~S., McAllester, D.~A., Singh, S.~P., and Mansour, Y.
\newblock Policy gradient methods for reinforcement learning with function
  approximation.
\newblock In \emph{Advances in neural information processing systems}, pp.\
  1057--1063, 2000.

\bibitem[Theis et~al.(2015)Theis, Oord, and Bethge]{theis2015note}
Theis, L., Oord, A. v.~d., and Bethge, M.
\newblock A note on the evaluation of generative models.
\newblock \emph{arXiv preprint arXiv:1511.01844}, 2015.

\bibitem[Tieleman(2008)]{tieleman2008training}
Tieleman, T.
\newblock Training restricted boltzmann machines using approximations to the
  likelihood gradient.
\newblock In \emph{Proceedings of the 25th international conference on Machine
  learning}, pp.\  1064--1071, 2008.

\bibitem[Turner(2005)]{turner2005cd}
Turner, R.
\newblock Cd notes.
\newblock 2005.

\bibitem[Van~den Oord et~al.(2016)Van~den Oord, Kalchbrenner, Espeholt,
  Vinyals, Graves, et~al.]{van2016conditional}
Van~den Oord, A., Kalchbrenner, N., Espeholt, L., Vinyals, O., Graves, A.,
  et~al.
\newblock Conditional image generation with pixelcnn decoders.
\newblock In \emph{Advances in neural information processing systems}, pp.\
  4790--4798, 2016.

\bibitem[Wang \& Liu(2016)Wang and Liu]{wang2016learning}
Wang, D. and Liu, Q.
\newblock Learning to draw samples: With application to amortized mle for
  generative adversarial learning.
\newblock \emph{arXiv preprint arXiv:1611.01722}, 2016.

\bibitem[Welling \& Teh(2011)Welling and Teh]{welling2011bayesian}
Welling, M. and Teh, Y.~W.
\newblock Bayesian learning via stochastic gradient langevin dynamics.
\newblock In \emph{Proceedings of the 28th international conference on machine
  learning (ICML-11)}, pp.\  681--688, 2011.

\bibitem[Williams(1992)]{williams1992simple}
Williams, R.~J.
\newblock Simple statistical gradient-following algorithms for connectionist
  reinforcement learning.
\newblock \emph{Machine learning}, 8\penalty0 (3-4):\penalty0 229--256, 1992.

\bibitem[Yu et~al.(2017)Yu, Zhang, Wang, and Yu]{yu2017seqgan}
Yu, L., Zhang, W., Wang, J., and Yu, Y.
\newblock Seqgan: Sequence generative adversarial nets with policy gradient.
\newblock In \emph{Thirty-First AAAI Conference on Artificial Intelligence},
  2017.

\bibitem[Zhang et~al.(2018)Zhang, Bird, Habib, Xu, and
  Barber]{zhang2018training}
Zhang, M., Bird, T., Habib, R., Xu, T., and Barber, D.
\newblock Training generative latent models by variational f-divergence
  minimization.
\newblock 2018.

\end{thebibliography}
\bibliographystyle{icml2020}

\appendix
\onecolumn
\section{Proof of Theorem~\ref{theorem:direct-febm}}\label{app:proof-direct-febm}
\addtocounter{theorem}{-5}
\begin{assumption}\label{assumption:continuous}
The function $q_\vtheta(\vx) f^*(T_\vomega(\vx))$ and its partial derivative $\nabla_\vtheta q_\vtheta(\vx) f^*(T_\vomega(\vx))$ are continuous w.r.t. $\vtheta$ and $\vx$.
\end{assumption}
\begin{theorem}
For a $\vtheta$-parametrized energy-based model $q_\vtheta(\vx) = \frac{\exp(- E_\vtheta(\vx))}{Z_\vtheta}$, under Assumption~\ref{assumption:continuous}, the gradient of the variational representation of $f$-divergence (Equation~(\ref{eq:f-bound})) w.r.t. $\vtheta$ can be written as:
\begin{align*}
    - \nabla_\vtheta \bb{E}_{q_\vtheta(\vx)}[f^*(T_\vomega(\vx))] =  \bb{E}_{q_\vtheta(\vx)}[\nabla_\vtheta E_\vtheta(\vx) f^*(T_\vomega(\vx))] - \bb{E}_{q_\vtheta(\vx)}[\nabla_\vtheta E_\vtheta(\vx)] \cdot \bb{E}_{q_\vtheta(\vx)}[f^*(T_\vomega(\vx))]
\end{align*}
When we can obtain i.i.d. samples from $q_\vtheta$, we can get an unbiased estimation of the gradient.
\end{theorem}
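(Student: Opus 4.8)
The plan is to differentiate $\bb{E}_{q_\vtheta(\vx)}[f^*(T_\vomega(\vx))] = \int_\gX q_\vtheta(\vx) f^*(T_\vomega(\vx)) \, \mathrm{d}\vx$ directly, exchanging the gradient and the integral (which Assumption~\ref{assumption:continuous} justifies via the Leibniz integral rule, exactly as in Lemma~\ref{lemma:cd}), and then rewrite $\nabla_\vtheta q_\vtheta(\vx)$ using the log-derivative trick $\nabla_\vtheta q_\vtheta(\vx) = q_\vtheta(\vx) \nabla_\vtheta \log q_\vtheta(\vx)$. Since $T_\vomega$ does not depend on $\vtheta$, the only $\vtheta$-dependence inside the integral is through $q_\vtheta(\vx)$, so this gives
\begin{align*}
-\nabla_\vtheta \bb{E}_{q_\vtheta(\vx)}[f^*(T_\vomega(\vx))] = -\bb{E}_{q_\vtheta(\vx)}\big[\nabla_\vtheta \log q_\vtheta(\vx) \cdot f^*(T_\vomega(\vx))\big].
\end{align*}

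Next I would substitute the expression for $\nabla_\vtheta \log q_\vtheta(\vx)$ from Lemma~\ref{lemma:cd}, namely $\nabla_\vtheta \log q_\vtheta(\vx) = -\nabla_\vtheta E_\vtheta(\vx) + \bb{E}_{q_\vtheta(\vx')}[\nabla_\vtheta E_\vtheta(\vx')]$. Plugging this in and using linearity of expectation, the first term becomes $\bb{E}_{q_\vtheta(\vx)}[\nabla_\vtheta E_\vtheta(\vx) f^*(T_\vomega(\vx))]$, and the second term becomes $-\bb{E}_{q_\vtheta(\vx')}[\nabla_\vtheta E_\vtheta(\vx')] \cdot \bb{E}_{q_\vtheta(\vx)}[f^*(T_\vomega(\vx))]$, since the inner expectation is a constant (independent of $\vx$) that factors out. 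This yields exactly the claimed identity. The final sentence about unbiased estimation is immediate: both terms are expectations under $q_\vtheta$, so averaging over i.i.d. samples gives an unbiased Monte Carlo estimate (treating the product of two independent sample-averages in the second term, or using two independent minibatches, to preserve unbiasedness).

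The main technical obstacle is the justification of differentiating under the integral sign: one needs a dominating integrable function for $\nabla_\vtheta \big(q_\vtheta(\vx) f^*(T_\vomega(\vx))\big)$ uniformly in a neighborhood of $\vtheta$. This is precisely what Assumption~\ref{assumption:continuous} is designed to provide (continuity of the integrand and its $\vtheta$-partial derivative, typically combined with compactness or integrability conditions on $\gX$), and the argument parallels the derivation of Lemma~\ref{lemma:cd} cited from \citet{turner2005cd}; I would invoke it in the same way rather than reprove it. Everything else is routine algebra with the log-derivative trick and linearity of expectation.
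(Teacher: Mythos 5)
Your proposal is correct and follows essentially the same route as the paper's proof: differentiate under the integral via the Leibniz rule (Assumption~\ref{assumption:continuous}), apply the log-derivative trick, use $\nabla_\vtheta \log q_\vtheta(\vx) = -\nabla_\vtheta E_\vtheta(\vx) + \bb{E}_{q_\vtheta}[\nabla_\vtheta E_\vtheta(\vx)]$ (which the paper derives inline by computing $\nabla_\vtheta \log Z_\vtheta$, and you obtain by citing Lemma~\ref{lemma:cd} — the same identity), and conclude by linearity of expectation. Your remark on using two independent sample averages for the product term matches the paper's Lemma~\ref{lemma:product-of-expectations}, so no gap remains.
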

\begin{proof}
Under Assumption~\ref{assumption:continuous}, with Leibniz integral rule, we have:
\begin{align*}
    - \nabla_\vtheta \bb{E}_{q_\vtheta(\vx)}[f^*(T_\vomega(\vx))] &=
    - \int \nabla_\vtheta q_\vtheta(\vx) f^*(T_\vomega(\vx)) \mathrm{d} \vx\\
    &= - \int q_\vtheta(\vx) \frac{\nabla_\vtheta q_\vtheta(\vx)}{q_\vtheta(\vx)} f^*(T_\vomega(\vx)) \mathrm{d} \vx\\
    &= - \int q_\vtheta(\vx) \nabla_\vtheta \log q_\vtheta(\vx) f^*(T_\vomega(\vx)) \mathrm{d} \vx\\
    &= - \int q_\vtheta(\vx) \left[- \nabla_\vtheta E_\vtheta(\vx) - \nabla_\vtheta \log Z_\vtheta \right] f^*(T_\vomega(\vx)) \mathrm{d} \vx\\
    &= \int q_\vtheta(\vx) \nabla_\vtheta E_\vtheta(\vx) f^*(T_\vomega(\vx)) \mathrm{d} \vx + \nabla_\vtheta \log Z_\vtheta \cdot  \int q_\vtheta(\vx) f^*(T_\vomega(\vx)) \mathrm{d} \vx\\
    &= \int q_\vtheta(\vx) \nabla_\vtheta E_\vtheta(\vx) f^*(T_\vomega(\vx)) \mathrm{d} \vx + \frac{\int \exp(- E_\vtheta(\vx)) (- \nabla_\vtheta E_\vtheta(\vx)) \mathrm{d} \vx}{Z_\vtheta} \cdot  \int q_\vtheta(\vx) f^*(T_\vomega(\vx)) \mathrm{d} \vx\\
    &= \bb{E}_{q_\vtheta(\vx)}[\nabla_\vtheta E_\vtheta(\vx) f^*(T_\vomega(\vx))] - \bb{E}_{q_\vtheta(\vx)}[\nabla_\vtheta E_\vtheta(\vx)] \cdot \bb{E}_{q_\vtheta(\vx)}[f^*(T_\vomega(\vx))]
\end{align*}
We can simply use $i.i.d.$ samples from $q_\vtheta$ to get an unbiased estimation of the first term, and we introduce the following lemma for estimating the second term $\bb{E}_{q_\vtheta(\vx)}[\nabla_\vtheta E_\vtheta(\vx)] \cdot \bb{E}_{q_\vtheta(\vx)}[f^*(T_\vomega(\vx))]$.
\begin{lemma}\label{lemma:product-of-expectations}
Let $P$ denote a probability distribution over sample space $\gX$ and $g, h: \gX \to \bb{R}$ are functions. The following empirical estimator is unbiased with respect to $\bb{E}_P[g(X)] \cdot \bb{E}_P[h(X)]$:
\begin{align}
    \frac{1}{nm} \sum_{i=1}^{n} g(x_i) \cdot \sum_{i=1}^{m} h(y_j)
\end{align}
where $\vx_{1:n} \sim P$, $\vy_{1:m} \sim P$ are two independent sets of \textit{i.i.d.} samples from $P$.
\end{lemma}
\begin{proof}
Let $X_{1:n}$ and $Y_{1:m}$ denote the random variables corresponding to the sampling process for $\vx_{1:n}$, $\vy_{1:m}$ respectively. Let $P_n$ and $P_m$ denote the probability distribution for $X_{1:n}$ and $Y_{1:m}$. Then we have: 
\begin{align}
   & \ \bb{E}_{P_n} \bb{E}_{P_m}\left[\frac{1}{nm} \sum_{i=1}^{n} g(X_i) \cdot \sum_{i=1}^{m} h(Y_j)\right] \\
  = & \ \bb{E}_{P_n} \left[\frac{1}{n} \sum_{i=1}^{n} g(X_i) \cdot \bb{E}_{P_m}\left[\frac{1}{m} \sum_{i=1}^{m} h(Y_j)\right]\right] \\
  = & \ \bb{E}_{P_n} \left[\frac{1}{n} \sum_{i=1}^{n} g(X_i) \cdot \bb{E}_P[h(X)]\right] \\
  = & \ \bb{E}_P[g(X)] \cdot \bb{E}_P[h(X)]
\end{align}
where the first equality comes from the law of total expectation and the fact that $X_{1:n}$ and $Y_{1:m}$ are independent. Therefore, the estimator is unbiased.
\end{proof}
Lemma~\ref{lemma:product-of-expectations} shows that we can use $i.i.d.$ samples from $q_\vtheta$ to get an unbiased estimation of the second term.
\end{proof}

\section{Proof of Theorem~\ref{theorem:new-f-bound}}\label{app:proof-new-f-bound}
\begin{theorem}
Let $P$ and $Q$ be two probability measures over the Borel $\sigma$-algebra on domain $\gX$ with densities $p$, $q$ and $P \ll Q$. Additionally, let $q$ be an energy-based distribution, $q(\vx) = \exp(-E(\vx)) / Z_q$.
For any class of functions $\gH$ mapping from $\gX$ to $\bb{R}$ such that the expectations in the following equation are finite and $\forall \vx \in \gX$,
$\gH$ contains an element $H(\vx) = \log\left(p(\vx) Z_q\right)$, then we have
\begin{align*}
    \DF(P \| Q) = \sup_{H \in \gH} &~\bb{E}_{p(\vx)} [f'(\exp(H(\vx) + E(\vx)))] - \\ 
    &~\bb{E}_{q(\vx)}[f^*(f'(\exp(H(\vx) + E(\vx))))]
\end{align*}
where the supreme is attained at $H^*(\vx) = \log\left(p(\vx) Z_q\right)$.
\end{theorem}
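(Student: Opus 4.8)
The plan is to derive the new variational formula from the classical one in Lemma~\ref{lemma:f-bound} by a nonlinear reparametrization of the maximizing function. For every $H \in \gH$ set
\begin{align*}
T_H(\vx) \defeq f'\!\bigl(\exp(H(\vx) + E(\vx))\bigr),
\end{align*}
and let $\gT \defeq \{T_H : H \in \gH\}$ be the resulting class of functions $\gX \to \bb{R}$; this is well defined because $\exp(\cdot)$ always takes values in $(0,\infty) \subseteq [0,\infty)$, the domain of $f'$. Substituting $T = T_H$ into the objective of Lemma~\ref{lemma:f-bound} reproduces, term by term, the objective in the statement, so that
\begin{align*}
\sup_{H \in \gH} \Bigl( \bb{E}_{p(\vx)}[f'(\exp(H(\vx)+E(\vx)))] - \bb{E}_{q(\vx)}[f^*(f'(\exp(H(\vx)+E(\vx))))] \Bigr) = \sup_{T \in \gT} \bb{E}_{p(\vx)}[T(\vx)] - \bb{E}_{q(\vx)}[f^*(T(\vx))].
\end{align*}

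It remains to check that $\gT$ satisfies the hypothesis of Lemma~\ref{lemma:f-bound}, i.e.\ that it contains an element of $\partial f(p/q)$. Since $f$ is strictly convex and continuously differentiable, $\partial f(p(\vx)/q(\vx)) = \{ f'(p(\vx)/q(\vx)) \}$, so it suffices to exhibit an $H \in \gH$ with $T_H = f'(p/q)$. Using $q(\vx) = \exp(-E(\vx))/Z_q$, equivalently $\exp(E(\vx)) = 1/(q(\vx) Z_q)$, the canonical choice $H^*(\vx) = \log(p(\vx) Z_q)$ --- which lies in $\gH$ by assumption --- gives
\begin{align*}
\exp(H^*(\vx) + E(\vx)) = p(\vx) Z_q \cdot \frac{1}{q(\vx) Z_q} = \frac{p(\vx)}{q(\vx)},
\end{align*}
hence $T_{H^*}(\vx) = f'(p(\vx)/q(\vx))$ is exactly the required selection from $\partial f(p/q)$. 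Applying Lemma~\ref{lemma:f-bound} then yields $\sup_{T \in \gT} \bb{E}_{p(\vx)}[T(\vx)] - \bb{E}_{q(\vx)}[f^*(T(\vx))] = \DF(P\|Q)$, with the supremum attained at $T^* = f'(p/q) = T_{H^*}$; transporting this back along the reparametrization shows the supremum in the statement equals $\DF(P\|Q)$ and is attained at $H^* = \log(p Z_q)$. One may additionally note that $f'$ is strictly increasing and $\exp$ is injective, so $H \mapsto T_H$ is injective and $H^*$ is the unique maximizer.

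I do not expect a deep obstacle: the reparametrization $T = f'(\exp(H+E))$ is engineered precisely so that the two objectives agree and so that the Nguyen optimizer $f'(p/q)$ is produced by the specific function $\log(p Z_q)$, which is where the hypothesis $\log(p Z_q) \in \gH$ is consumed. The points that need a little care are purely bookkeeping: verifying that $\exp(H+E)$ always lands in the domain $[0,\infty)$ where $f'$ and $f^* \circ f'$ are evaluated (automatic, since $\exp > 0$), and that the finiteness and measurability conditions needed to invoke Lemma~\ref{lemma:f-bound} are in force (they are, since $P \ll Q$ and the relevant expectations are assumed finite). It is also worth recording explicitly that the inequality $\bb{E}_{p(\vx)}[T(\vx)] - \bb{E}_{q(\vx)}[f^*(T(\vx))] \le \DF(P\|Q)$ holds for every measurable $T$ by Fenchel--Young, hence for every $H \in \gH$, so the only nontrivial content is the attainment, which the membership $H^* \in \gH$ supplies.
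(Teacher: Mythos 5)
Your proof is correct and follows essentially the same route as the paper's: both reduce the statement to Lemma~\ref{lemma:f-bound} via the reparametrization $T(\vx) = f'(\exp(H(\vx)+E(\vx)))$ and use the assumption $\log(p \cdot Z_q) \in \gH$ to show the Nguyen optimizer $f'(p/q)$ is realized, hence the supremum is attained at $H^*(\vx)=\log(p(\vx)Z_q)$. Your version is if anything slightly tidier, since you verify the hypothesis of Lemma~\ref{lemma:f-bound} directly for the image class $\{T_H : H \in \gH\}$ rather than passing through the paper's intermediate class $\hat{\gT}$ of all functions of the form $f'(r)$ with $r>0$.
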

\begin{proof}
First, we define a function class $\hat{\gT} = \{f'(r) | r: \gX \to \R_{+}\}$. Since $\hat{\gT}$ is a subset of $\gT$ (in Lemma~\ref{lemma:f-bound}) and the optimal $T^\star = f'(p / q)$ is an element of $\hat{\gT}$, from Lemma~\ref{lemma:f-bound}, we have:
\begin{align}
    \DF(P \| Q) = \sup_{T \in \hat{\gT}} \bb{E}_{p(\vx)} [T(\vx)] - \bb{E}_{q(\vx)}[f^*(T(\vx))].
\end{align}
Without loss of generality, we can reparametrize all functions $T \in \hat{\gT}$ with $H$ such that $T(\vx) = f'(\exp(H(\vx) + E(\vx)))$ for all $\vx \in \gX$, since the transformations $f'(\cdot)$, $\exp(\cdot)$ and addition by $E(\vx)$ are all bijections (note that $f$ is strictly convex and differentiable). The optimal $H^\star(\vx)$ then satisfies:
\begin{align*}
    \forall \vx \in \gX,~f'(\exp(H^\star(\vx) + E(\vx))) = T^\star(\vx) = f'(p(\vx) / q(\vx))
\end{align*}
From the bijectivity of $f'(\cdot)$, we have:
\begin{align*}
     H^\star(\vx) & = \log p(\vx) - \log q(\vx) - E(\vx) \\
     & = \log p(\vx) + E(\vx) + \log Z_q - E(\vx) = \log (p(\vx) Z_q),
\end{align*}
\end{proof}

\section{Proof of Theorem~\ref{theorem:theta-gradient}}\label{app:proof-theta-gradient}
\begin{assumption}\label{assumption:continuous-1}
The function $p(\vx) f'(\exp(H_\vomega(\vx) + E_\vtheta(\vx)))$ and its partial derivative $\nabla_\vtheta f'(\exp(H_\vomega(\vx) + E_\vtheta(\vx)))$ are continuous w.r.t. $\vtheta$ and $\vx$.
\end{assumption}
\begin{assumption}\label{assumption:continuous-2}
The function $q_\vtheta(\vx) f^*(f'(\exp(H_\vomega(\vx) + E_\vtheta(\vx))))$ and its partial derivative $\nabla_\vtheta q_\vtheta(\vx) f^*(f'(\exp(H_\vomega(\vx) + E_\vtheta(\vx))))$ are continuous w.r.t. $\vtheta$ and $\vx$.
\end{assumption}

\begin{theorem}
For a $\vtheta$-parametrized energy-based model $q_\vtheta(\vx) = \frac{\exp(- E_\vtheta(\vx))}{Z_\vtheta}$ and a fixed $\vomega$, under Assumptions~\ref{assumption:continuous-1} and \ref{assumption:continuous-2}, the gradient of $\gL_{f\text{-EBM}}(\vtheta, \vomega)$ (the objective in Equation~(\ref{eq:minimax-f-EBM})) with respect to $\vtheta$ can be written as:
\begin{align*}
    \nabla_\vtheta \gL_{f\text{-EBM}}(\vtheta, \vomega) = &~\bb{E}_{p(\vx)}[\nabla_\vtheta f'(\exp(f_\vomega(\vx) + E_\vtheta(x)))] + \bb{E}_{q_\vtheta(\vx)}[F_{\vtheta, \vomega}(\vx) \nabla_\vtheta E_\vtheta(\vx)] - \\
    &~\bb{E}_{q_\vtheta(\vx)}[\nabla_\vtheta F_{\vtheta, \vomega}(\vx)] - \bb{E}_{q_\vtheta(\vx)}[\nabla_\vtheta E_\vtheta(\vx)] \cdot \bb{E}_{q_\vtheta(\vx)}[F_{\vtheta, \vomega}(\vx)]
\end{align*}
where $F_{\vtheta,\vomega}(\vx)=f^*(f'(\exp(H_\vomega(\vx) + E_\vtheta(\vx))))$. When we can obtain i.i.d. samples from $p$ and $q_\vtheta$, we can get an unbiased estimation of the gradient.
\end{theorem}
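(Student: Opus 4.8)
The plan is to differentiate the two summands of $\gL_{f\text{-EBM}}(\vtheta,\vomega)$ separately. The first term, $\bb{E}_{p(\vx)}[f'(\exp(H_\vomega(\vx)+E_\vtheta(\vx)))]$, is an integral against the $\vtheta$-independent density $p$, so under Assumption~\ref{assumption:continuous-1} the Leibniz integral rule lets me push $\nabla_\vtheta$ inside the expectation, producing $\bb{E}_{p(\vx)}[\nabla_\vtheta f'(\exp(H_\vomega(\vx)+E_\vtheta(\vx)))]$ directly. The second term, $-\bb{E}_{q_\vtheta(\vx)}[F_{\vtheta,\vomega}(\vx)] = -\int q_\vtheta(\vx) F_{\vtheta,\vomega}(\vx)\,\mathrm{d}\vx$, is the delicate one, since both the sampling density $q_\vtheta$ and the integrand $F_{\vtheta,\vomega}$ carry $\vtheta$-dependence; here I would again invoke Leibniz (justified by Assumption~\ref{assumption:continuous-2}) to move $\nabla_\vtheta$ under the integral and then apply the product rule $\nabla_\vtheta(q_\vtheta F_{\vtheta,\vomega}) = (\nabla_\vtheta q_\vtheta)F_{\vtheta,\vomega} + q_\vtheta\nabla_\vtheta F_{\vtheta,\vomega}$.

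For the contribution involving $\nabla_\vtheta q_\vtheta$, I would use the log-derivative identity $\nabla_\vtheta q_\vtheta(\vx) = q_\vtheta(\vx)\nabla_\vtheta\log q_\vtheta(\vx)$ together with Lemma~\ref{lemma:cd}, which gives $\nabla_\vtheta\log q_\vtheta(\vx) = -\nabla_\vtheta E_\vtheta(\vx) + \bb{E}_{q_\vtheta(\vx)}[\nabla_\vtheta E_\vtheta(\vx)]$ (equivalently, $\nabla_\vtheta\log Z_\vtheta = -\bb{E}_{q_\vtheta}[\nabla_\vtheta E_\vtheta]$). Substituting this and distributing the integral yields $\bb{E}_{q_\vtheta}[F_{\vtheta,\vomega}(\vx)\nabla_\vtheta E_\vtheta(\vx)] - \bb{E}_{q_\vtheta}[\nabla_\vtheta E_\vtheta(\vx)]\cdot\bb{E}_{q_\vtheta}[F_{\vtheta,\vomega}(\vx)]$, while the $q_\vtheta\nabla_\vtheta F_{\vtheta,\vomega}$ contribution is simply $-\bb{E}_{q_\vtheta}[\nabla_\vtheta F_{\vtheta,\vomega}(\vx)]$. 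Adding the first-term contribution then recovers exactly the claimed four-term expression for $\nabla_\vtheta\gL_{f\text{-EBM}}(\vtheta,\vomega)$.

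For the unbiasedness claim, the first three terms are ordinary expectations under $p$ or $q_\vtheta$, so a single minibatch yields an unbiased Monte Carlo estimate; the last term is a product of two expectations under $q_\vtheta$, which I would handle exactly as in the proof of Theorem~\ref{theorem:direct-febm}, invoking Lemma~\ref{lemma:product-of-expectations} with two independent i.i.d. sample sets drawn from $q_\vtheta$ to obtain an unbiased estimate of $\bb{E}_{q_\vtheta}[\nabla_\vtheta E_\vtheta]\cdot\bb{E}_{q_\vtheta}[F_{\vtheta,\vomega}]$.

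The main obstacle is the measure-theoretic justification for interchanging differentiation and integration in the second term: unlike the first term, the integration measure itself is $q_\vtheta$ and the integrand $F_{\vtheta,\vomega}$ is a nested composition $f^*\circ f'\circ\exp$ of the energy and variational outputs, so one must verify the dominating-function hypotheses of the Leibniz rule — which is precisely why Assumptions~\ref{assumption:continuous-1} and~\ref{assumption:continuous-2} are imposed; once those are in place the rest is bookkeeping with the log-derivative trick. A secondary subtlety is tracking the two distinct channels through which $\vtheta$ enters the second term (through $Z_\vtheta$ inside $q_\vtheta$, and through $E_\vtheta$ inside both $q_\vtheta$ and $F_{\vtheta,\vomega}$), so that the $\nabla_\vtheta\log Z_\vtheta$ term is correctly identified with $-\bb{E}_{q_\vtheta}[\nabla_\vtheta E_\vtheta]$ and not double-counted against the $\nabla_\vtheta F_{\vtheta,\vomega}$ term.
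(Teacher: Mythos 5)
Your proposal is correct and follows essentially the same route as the paper's proof: Leibniz rule on each term, product rule plus the log-derivative identity $\nabla_\vtheta q_\vtheta = q_\vtheta(-\nabla_\vtheta E_\vtheta - \nabla_\vtheta \log Z_\vtheta)$ with $\nabla_\vtheta \log Z_\vtheta = -\bb{E}_{q_\vtheta}[\nabla_\vtheta E_\vtheta]$ for the $q_\vtheta$-expectation, and Lemma~\ref{lemma:product-of-expectations} with two independent sample sets for the unbiasedness of the product-of-expectations term. The only cosmetic difference is that you cite Lemma~\ref{lemma:cd} for the score identity where the paper re-derives it inline.
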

\begin{proof}
Under Assumption \ref{assumption:continuous-1}, with Leibniz integral rule, for the first term in Equation~(\ref{eq:minimax-f-EBM}), we have:
\begin{align*}
    \nabla_\vtheta \bb{E}_{p(\vx)}[f'(\exp(H_\vomega(\vx) + E_\vtheta(\vx)))] = \int p(\vx) \nabla_\vtheta f'(\exp(H_\vomega(\vx) + E_\vtheta(\vx))) \mathrm{d} \vx = \bb{E}_{p(\vx)}[\nabla_\vtheta f'(\exp(f_\vomega(\vx) + E_\vtheta(x)))]
\end{align*}
For notational simplicity, let us use $F_{\vtheta,\vomega}(\vx)$ to denote $f^*(f'(\exp(H_\vomega(\vx) + E_\vtheta(\vx))))$. Under Assumption \ref{assumption:continuous-2}, with Leibniz integral rule, for the second term, we have:
\begin{align*}
    &- \nabla_\vtheta \bb{E}_{q_\vtheta(\vx)} [f^*(f'(\exp(H_\vomega(\vx) + E_\vtheta(\vx))))]\\
    =&- \int \nabla_\vtheta (q_\vtheta(\vx) F_{\vtheta, \vomega}(\vx)) \mathrm{d} \vx\\
    =&- \int F_{\vtheta, \vomega}(\vx) \nabla_\vtheta q_\vtheta(\vx) \mathrm{d} \vx - \int q_\vtheta(\vx) \nabla_\vtheta F_{\vtheta, \vomega}(\vx) \mathrm{d} \vx\\
    =&- \int q_\vtheta(\vx) F_{\vtheta, \vomega}(\vx) \frac{\nabla_\vtheta q_\vtheta(\vx)}{q_\vtheta(\vx)} \mathrm{d} \vx - \int q_\vtheta(\vx) \nabla_\vtheta F_{\vtheta, \vomega}(\vx) \mathrm{d} \vx\\
    =&- \int q_\vtheta(\vx) F_{\vtheta,\vomega}(\vx) \nabla_\vtheta \log q_\vtheta(\vx) \mathrm{d} \vx - \int q_\vtheta(\vx) \nabla_\vtheta F_{\vtheta, \vomega}(\vx) \mathrm{d} \vx\\
    =&- \int q_\vtheta(\vx) F_{\vtheta,\vomega}(\vx) \left[- \nabla_\vtheta E_\vtheta(\vx) - \nabla_\vtheta \log Z_\vtheta \right] \mathrm{d} \vx - \int q_\vtheta(\vx) \nabla_\vtheta F_{\vtheta, \vomega}(\vx) \mathrm{d} \vx\\
    =& \int q_\vtheta(\vx) F_{\vtheta, \vomega}(\vx) \nabla_\vtheta E_\vtheta(\vx) \mathrm{d} \vx - \int q_\vtheta(\vx) \nabla_\vtheta F_{\vtheta, \vomega}(\vx) \mathrm{d} \vx + \nabla_\vtheta \log Z_\vtheta \cdot \int q_\vtheta(\vx) F_{\vtheta, \vomega}(\vx) \mathrm{d} \vx\\
    =& \int q_\vtheta(\vx) F_{\vtheta, \vomega}(\vx) \nabla_\vtheta E_\vtheta(\vx) \mathrm{d} \vx - \int q_\vtheta(\vx) \nabla_\vtheta F_{\vtheta, \vomega}(\vx) \mathrm{d} \vx + \frac{\int \exp(- E_\vtheta(\vx)) (- \nabla_\vtheta E_\vtheta(\vx)) \mathrm{d} \vx}{Z_\vtheta} \cdot \int q_\vtheta(\vx) F_{\vtheta, \vomega}(\vx) \mathrm{d} \vx\\
    =&~\bb{E}_{q_\vtheta(\vx)}[F_{\vtheta, \vomega}(\vx) \nabla_\vtheta E_\vtheta(\vx)] - \bb{E}_{q_\vtheta(\vx)}[\nabla_\vtheta F_{\vtheta, \vomega}(\vx)] - \bb{E}_{q_\vtheta(\vx)}[\nabla_\vtheta E_\vtheta(\vx)] \cdot \bb{E}_{q_\vtheta(\vx)}[F_{\vtheta, \vomega}(\vx)]
\end{align*}
Similar to the proof in Appendix~\ref{app:proof-direct-febm}, we can use \emph{i.i.d.} samples from $p$ and $q_\vtheta$ to get an unbiased estimation of the gradient.
\end{proof}

\section{Proof for the Local Convergence of $f$-EBM}\label{app:proof-local-convergence}
\subsection{Non-linear Dynamical Systems}\label{sec:non-linear-system}
In this section, we present a brief introduction of non-linear dynamical system theory \cite{hassan1996khalil}. For a comprehensive description, please refer to \cite{hassan1996khalil,nagarajan2017gradient}. We further generalize some of the theories which will be useful for establishing the local convergence property of $f$-EBM later.

Consider a system consisting of variables $\vphi \in \Phi \subseteq \bb{R}^n$ whose time derivative is defined by the vector field $v(\vphi)$:
\begin{align}
    \dot \vphi = v(\vphi) \label{eq:system}
\end{align}
where $v: \Phi \to \bb{R}^n$ is a locally Lipschitz mapping from a domain $\Phi$ into $\bb{R}^n$.

Suppose $\ophi$ is an equilibrium point of the system in Equation~(\ref{eq:system}), \emph{i.e.}, $v(\ophi) = 0$. Let $\vphi_t$ denote the state of the system at time $t$. To begin with, we introduce the following definition to characterize the stability of $\ophi$:
\begin{definition}[Definition 4.1 in \cite{hassan1996khalil}] \label{def:stability}
The equilibrium point $\ophi$ for the system defined in Equation~(\ref{eq:system}) is
\begin{itemize}
    \item stable if for each $\epsilon > 0$, there is $\delta = \delta(\epsilon) > 0$ such that $$\|\vphi_0 - \ophi\| < \delta \Longrightarrow \forall t \geq 0,~ \|\vphi_t - \ophi\| < \epsilon.$$
    \item unstable if not stable.
    \item asymptotically stable if it is stable and $\delta > 0$ can be chosen such that $$\|\vphi_0 - \ophi\| < \delta \Longrightarrow \lim_{t \to \infty} \vphi_t = \ophi.$$
    \item exponentially stable if it is asymptotically stable and $\delta, k, \lambda > 0$ can be chosen such that:
    $$\|\vphi_0 - \ophi\| < \delta \Longrightarrow \|\vphi_t\| \leq k \|\vphi_0\| \exp(-\lambda t).$$
\end{itemize}
\end{definition}
The system is stable if for any value of $\epsilon$, we can find a value of $\delta$ (possibly dependent on $\epsilon$), such that a trajectory starting in a $\delta$ neighborhood of the equilibrium point will never leave the $\epsilon$ neighborhood of the equilibrium point. However, such a system may either converge to the equilibrium point or orbit within the $\epsilon$ ball. By contrast, asymptotic stability is a stronger notion of stability in the sense that trajectories starting in a $\delta$ neighborhood of the equilibrium will converge to the equilibrium point in the limit $t \to \infty$. Moreover, if $\ophi$ is asymptotically stable, we call the algorithm obtained by iteratively applying the updates in Equation~(\ref{eq:system}) \emph{locally convergent} to $\ophi$. If $\ophi$ is exponentially stable, we call the corresponding algorithm \emph{linearly convergent} to $\ophi$.

Now we introduce the following theorem, which is central for studying the asymptotic stability of a system:
\begin{theorem}[Theorem 4.15 in \cite{hassan1996khalil}]\label{theorem:equilibrium-hurwitz}
Let $\ophi$ be an equilibrium point for the non-linear system
\begin{align}
    \dot \vphi = v(\vphi)\label{eq:theorem-system}
\end{align}
where $v: \Phi \to \bb{R}^n$ is continuously differentiable and $\Phi$ is a neighborhood of $\ophi$. Let $\vJ$ be the Jacobian of the system in Equation~(\ref{eq:theorem-system}) at the equilibrium point:
\begin{align}
    \vJ = \frac{\partial v(\vphi)}{\partial \vphi}\at[\Bigg]{\vphi = \ophi}
\end{align}
Then, we have:
\begin{itemize}
    \item The equilibrium point $\ophi$ is asymptotically stable and  exponentially stable if $\vJ$ is a Hurwitz matrix, i.e., $\mathrm{Re}(\lambda) < 0$ for all eigenvalues $\lambda$ of $\vJ$.
    \item The equilibrium point $\ophi$ is unstable if $\mathrm{Re}(\lambda) > 0$ for one or more of the eigenvalues of $\vJ$. 
\end{itemize}
\end{theorem}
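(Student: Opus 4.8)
The plan is to prove both claims by Lyapunov's indirect (linearization) method. First I would translate the equilibrium to the origin by the substitution $\vphi \mapsto \vphi - \ophi$, so that $v(\vzero) = \vzero$, and write the vector field as its linear part plus a remainder,
\begin{align*}
    \dot\vphi = \vJ \vphi + g(\vphi), \qquad g(\vphi) \defeq v(\vphi) - \vJ \vphi.
\end{align*}
Because $v$ is continuously differentiable with $\vJ = \partial v/\partial\vphi\at{\vzero}$, Taylor's theorem gives $\|g(\vphi)\|/\|\vphi\| \to 0$ as $\|\vphi\| \to 0$; equivalently, for every $\gamma > 0$ there is an $r > 0$ with $\|g(\vphi)\| \le \gamma \|\vphi\|$ whenever $\|\vphi\| < r$. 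This quadratic control of the remainder is exactly what lets the linear term dominate locally.

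For the stability claim, assume $\vJ$ is Hurwitz. The key auxiliary fact is that the Lyapunov equation $\vP \vJ + \vJ^\top \vP = -\vI$ then has a unique symmetric positive-definite solution, which I would exhibit explicitly as $\vP = \int_0^\infty e^{\vJ^\top t} e^{\vJ t}\,\mathrm{d} t$; the integral converges because Hurwitzness yields an exponential bound $\|e^{\vJ t}\| \le k\, e^{-\alpha t}$ with $\alpha > 0$, and differentiating $e^{\vJ^\top t}e^{\vJ t}$ and integrating verifies the identity. Taking $V(\vphi) = \vphi^\top \vP \vphi$ and differentiating along trajectories,
\begin{align*}
    \dot V = \vphi^\top(\vP\vJ + \vJ^\top\vP)\vphi + 2\vphi^\top \vP\, g(\vphi) = -\|\vphi\|^2 + 2\vphi^\top\vP\, g(\vphi).
\end{align*}
Bounding the cross term by $2\|\vP\|\,\gamma\|\vphi\|^2$ on $\|\vphi\| < r$ and choosing $\gamma < 1/(2\|\vP\|)$ makes $\dot V \le -c\|\vphi\|^2$ for a constant $c > 0$ on a neighborhood of the origin, so $\dot V$ is negative definite and the origin is asymptotically stable. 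Exponential stability then follows by combining $\underline{\lambda}(\vP)\|\vphi\|^2 \le V \le \overline{\lambda}(\vP)\|\vphi\|^2$ with $\dot V \le -(c/\overline{\lambda}(\vP))\,V$ and a comparison (Gr\"onwall) argument, yielding $\|\vphi_t\| \le k'\|\vphi_0\|\exp(-\lambda t)$ as required.

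For the instability claim, suppose $\vJ$ has an eigenvalue with strictly positive real part. I would pass to coordinates in which $\vJ$ is block-diagonal, $\vJ = \mathrm{diag}(\vJ_u, \vJ_s)$ with $\vphi = (\vy, \vz)$, where $\vJ_u$ collects the eigenvalues with positive real part and $\vJ_s$ the remaining ones. Since $-\vJ_u$ and $\vJ_s$ are both Hurwitz, I can solve $\vP_u \vJ_u + \vJ_u^\top\vP_u = \vI$ and $\vP_s \vJ_s + \vJ_s^\top\vP_s = -\vI$ for positive-definite $\vP_u, \vP_s$ by the same integral construction, and set the indefinite function $V(\vphi) = \vy^\top\vP_u \vy - \vz^\top\vP_s\vz$. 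Its linear-part derivative is $\|\vy\|^2 + \|\vz\|^2 = \|\vphi\|^2$, so after absorbing the higher-order remainder, $V$ is positive on a cone reaching the origin with $\dot V > 0$ there; Chetaev's instability theorem then shows that trajectories starting arbitrarily close to the origin must leave a fixed neighborhood.

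The main obstacle is the instability half: unlike the stability argument, it requires the invariant-subspace decomposition of $\vJ$ and an indefinite Chetaev-type function rather than a positive-definite Lyapunov function, and one must carefully check that the higher-order remainder $g$ does not destroy the sign of $\dot V$ on the relevant cone near the origin. A secondary technical point, needed in both halves, is the converse Lyapunov fact that Hurwitzness guarantees a positive-definite solution of the corresponding Lyapunov equation, which itself rests on the exponential decay estimate for $e^{\vJ t}$.
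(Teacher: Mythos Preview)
The paper does not actually prove this theorem: its ``proof'' consists solely of a citation to Khalil's textbook (Theorems~4.7, 4.15 and Corollary~4.3 there). Your sketch is precisely the standard Lyapunov indirect-method argument one finds in that reference, so in substance you are reproducing the cited proof rather than taking a different route.

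That said, there is one genuine gap in your instability argument. You split $\vJ = \mathrm{diag}(\vJ_u,\vJ_s)$ with $\vJ_u$ collecting the eigenvalues of strictly positive real part and $\vJ_s$ the rest, and then assert that $\vJ_s$ is Hurwitz so that $\vP_s \vJ_s + \vJ_s^\top \vP_s = -\vI$ admits a positive-definite solution. But the theorem hypothesis only guarantees \emph{some} eigenvalue with positive real part; the remaining ones may lie on the imaginary axis, in which case $\vJ_s$ is not Hurwitz and the Lyapunov equation for $\vP_s$ has no solution (indeed, if $i\omega$ is an eigenvalue of the real matrix $\vJ_s$ then so is $-i\omega$, and their sum is zero, which obstructs solvability). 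The standard repair, which is what Khalil does, is to shift: pick $\alpha>0$ strictly between $0$ and the smallest positive real part among the unstable eigenvalues, and solve Lyapunov equations for $\vJ_u-\alpha\vI$ and $\vJ_s-\alpha\vI$ instead (the latter is now Hurwitz). The resulting indefinite $V$ then satisfies $\dot V \ge 2\alpha V$ near the origin after absorbing the remainder, and Chetaev's theorem applies as you intended. With this correction your plan is complete and matches the textbook proof the paper defers to.
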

\begin{proof}
See proof for Theorem 4.7, Theorem 4.15 and Corollary 4.3 in \cite{hassan1996khalil}.
\end{proof}
With Theorem~\ref{theorem:equilibrium-hurwitz}, the stability of an equilibrium point can be analyzed by examining if all the eigenvalues of the Jacobian $v'(\vphi)\at{\vphi=\ophi}$ have strictly negative real part.

In the following, we will use $\lambda_\mathrm{max}(\cdot)$ and $\lambda_\mathrm{min}(\cdot)$ to denote the largest and smallest eigenvalues of a non-zero positive semi-definite matrix. 
Now we introduce the following theorem to upper bound the real part of the eigenvalues of a matrix with a specific form:
\begin{theorem}\label{theorem:eigenvalue-bound}
Suppose $\vJ \in \bb{R}^{(m+n)\times(m+n)}$ is of the following form:
\begin{align*}
    \vJ = 
    \begin{bmatrix}
    \bm{0} & \vP\\
    -\vP^\top & -\vQ
    \end{bmatrix}
\end{align*}
where $\vQ \in \bb{R}^{n \times n}$ is a symmetric real positive definite matrix and $\vP^\top \in \bb{R}^{n \times m}$ is a full column rank matrix. Then, for every eigenvalue $\lambda$ of $\vJ$, $\mathrm{Re}(\lambda) < 0$. More precisely, we have:
\begin{itemize}
    \item When $\mathrm{Im}(\lambda) = 0$, 
    $$\mathrm{Re}(\lambda) \leq -\frac{\lambda_\mathrm{min}(\vQ) \lambda_\mathrm{min}(\vP \vP^\top)}{\lambda_\mathrm{min}(\vQ)\lambda_\mathrm{max}(\vQ) + \lambda_\mathrm{min} (\vP \vP^\top)}$$
    \item When $\mathrm{Im}(\lambda) \neq 0$, $$\mathrm{Re}(\lambda) \leq - \frac{\lambda_\mathrm{min}(\vQ)}{2}$$
\end{itemize}
\end{theorem}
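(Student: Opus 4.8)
The plan is to analyze the eigenvalues of the block matrix $\vJ$ directly via the eigenvalue equation $\vJ \begin{pmatrix} \vu \\ \vw \end{pmatrix} = \lambda \begin{pmatrix} \vu \\ \vw \end{pmatrix}$, where I partition the eigenvector $(\vu, \vw)$ with $\vu \in \bb{C}^m$, $\vw \in \bb{C}^n$. Reading off the blocks gives $\vP \vw = \lambda \vu$ and $-\vP^\top \vu - \vQ \vw = \lambda \vw$. The first observation is that $\vu \neq \vzero$: if $\vu = \vzero$ then $\vP\vw = \vzero$, and since $\vP^\top$ has full column rank (so $\vP$ has full row rank, hence trivial kernel when $m \le n$ — actually I should be careful: full column rank of $\vP^\top \in \bb{R}^{n\times m}$ means $\mathrm{rank} = m$, so $\vP \in \bb{R}^{m \times n}$ has rank $m$ and $\ker \vP^\top$ is trivial, while $\vP$ may have a kernel), the second equation would force $\vQ \vw = -\lambda \vw$, but $\vQ \succ 0$ has positive real eigenvalues while we will see $\mathrm{Re}(\lambda) \le 0$ contradiction is not immediate — instead the cleanest route is to take the Hermitian inner product of the equations with the eigenvector components.

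The main computation: left-multiply $-\vP^\top \vu - \vQ\vw = \lambda \vw$ by $\vw^*$ and left-multiply $\vP\vw = \lambda \vu$ by $\vu^*$, then combine. From $\vu^* \vP \vw = \lambda \|\vu\|^2$ and $-\vw^*\vP^\top \vu - \vw^*\vQ\vw = \lambda\|\vw\|^2$, note $\vw^* \vP^\top \vu = \overline{\vu^* \vP \vw} = \bar\lambda \|\vu\|^2$ (using that $\vu^*\vP\vw$ is a scalar and $(\vP^\top)^* = \vP^\top$ since $\vP$ is real), so the second equation becomes $-\bar\lambda\|\vu\|^2 - \vw^*\vQ\vw = \lambda\|\vw\|^2$. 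Taking real parts: $-2\mathrm{Re}(\lambda)\|\vu\|^2 = \vw^*\vQ\vw + 2\mathrm{Re}(\lambda)\cdot 0$... let me redo — rearranging, $\lambda\|\vw\|^2 + \bar\lambda\|\vu\|^2 = -\vw^*\vQ\vw$, and taking real parts gives $\mathrm{Re}(\lambda)(\|\vw\|^2 + \|\vu\|^2) = -\vw^*\vQ\vw \le 0$. If $\vw = \vzero$ then $\vP\vw = \vzero = \lambda\vu$ forces $\lambda = 0$ (since $\vu\neq\vzero$), but then $-\vP^\top\vu = \vzero$ contradicts full column rank of $\vP^\top$; hence $\vw\neq\vzero$ and, since $\vQ \succ 0$, $\vw^*\vQ\vw > 0$, so $\mathrm{Re}(\lambda) < 0$ strictly. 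This is the qualitative (Hurwitz) conclusion.

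For the quantitative bounds I would split on $\mathrm{Im}(\lambda)$. When $\mathrm{Im}(\lambda) = 0$: take imaginary parts of $\lambda\|\vw\|^2 + \bar\lambda\|\vu\|^2 = -\vw^*\vQ\vw$ and of the $\vP$-equation to relate $\|\vu\|$ and $\|\vw\|$ (with $\lambda$ real, $\|\vu\|^2 = \vu^*\vu$, and $\lambda^2\|\vu\|^2 = \|\vP\vw\|^2 \ge \lambda_{\min}(\vP\vP^\top)\|\vw\|^2$ on the relevant subspace — here I need $\vw$ to lie in the row space of $\vP$, which follows since $\vw^*\vP^\top\vu = \bar\lambda\|\vu\|^2 \ne 0$ unless $\lambda = 0$ which is excluded); then from $-\lambda(\|\vw\|^2+\|\vu\|^2) = \vw^*\vQ\vw \le \lambda_{\max}(\vQ)\|\vw\|^2$ combined with $\|\vu\|^2 \ge \lambda_{\min}(\vP\vP^\top)\|\vw\|^2/\lambda^2$, solve the resulting inequality for $-\lambda = |\mathrm{Re}(\lambda)|$ to obtain the stated bound $\frac{\lambda_{\min}(\vQ)\lambda_{\min}(\vP\vP^\top)}{\lambda_{\min}(\vQ)\lambda_{\max}(\vQ) + \lambda_{\min}(\vP\vP^\top)}$; this is essentially an optimization over the ratio $\|\vu\|^2/\|\vw\|^2$. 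When $\mathrm{Im}(\lambda)\neq 0$: take imaginary parts of $\lambda\|\vw\|^2 + \bar\lambda\|\vu\|^2 = -\vw^*\vQ\vw$ (right side real) to get $\mathrm{Im}(\lambda)(\|\vw\|^2 - \|\vu\|^2) = 0$, hence $\|\vu\| = \|\vw\|$; substituting into the real-part identity gives $\mathrm{Re}(\lambda)\cdot 2\|\vw\|^2 = -\vw^*\vQ\vw \le -\lambda_{\min}(\vQ)\|\vw\|^2$, i.e.\ $\mathrm{Re}(\lambda) \le -\lambda_{\min}(\vQ)/2$.

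The main obstacle I anticipate is the real-eigenvalue case: unlike the complex case where $\|\vu\| = \|\vw\|$ drops out for free, here one must carefully track which subspace $\vw$ lives in so that $\|\vP\vw\|^2 \ge \lambda_{\min}(\vP\vP^\top)\|\vw\|^2$ is legitimate (the smallest \emph{nonzero} eigenvalue), and then solve a two-variable inequality optimally to land exactly on the claimed constant rather than a looser bound. Everything else — the structural facts about $\ker\vP^\top$, the manipulation of Hermitian forms — is routine linear algebra once the eigenvector equations are written down.
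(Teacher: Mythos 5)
Your contraction of the two block equations with the eigenvector is sound as far as the qualitative claim and the complex case go: the identity $\lambda\|\vw\|^2+\bar\lambda\|\vu\|^2=-\vw^*\vQ\vw$, the exclusion of $\vw=\vzero$ via full column rank of $\vP^\top$, and the $\mathrm{Im}(\lambda)\neq 0$ bound $\mathrm{Re}(\lambda)\le-\lambda_\mathrm{min}(\vQ)/2$ are all correct, and this is essentially the same strategy as the paper's (which cites Lemma G.2 of Nagarajan and Kolter for this statement and writes out the argument only for the generalization with $-\vS$ in the top-left block, using real/imaginary parts of the eigenvector instead of Hermitian forms).

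The real-eigenvalue case, however, has a genuine gap, and it is exactly the step you flagged. The inequality $\|\vP\vw\|^2=\vw^*\vP^\top\vP\vw\ge\lambda_\mathrm{min}(\vP\vP^\top)\|\vw\|^2$ is only valid when $\vw$ is orthogonal to $\ker\vP$, and your justification does not give that: $\vw^*\vP^\top\vu\neq 0$ only shows $\vP\vw\neq\vzero$, i.e.\ that $\vw$ is not entirely inside $\ker\vP$, not that it has no component there; when $m<n$ the ratio $\|\vP\vw\|^2/\|\vw\|^2$ can be arbitrarily smaller than $\lambda_\mathrm{min}(\vP\vP^\top)$. Moreover $\vw^*\vP^\top\vu$ can vanish with $\lambda\neq 0$, because eigenvectors with $\vu=\vzero$ exist whenever $\ker\vP$ contains an eigenvector of $\vQ$ (e.g.\ $m=1$, $n=2$, $\vP=(1\ 0)$, $\vQ$ diagonal gives the eigenpair $\lambda=-q_2$, $\vu=0$, $\vw\in\ker\vP$), and perturbing $\vQ$ off-diagonal produces eigenvectors whose $\vw$ mixes both subspaces, so your key inequality fails there. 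The repair is to bound the other Gram form, as the paper does: square the second block equation $-\vP^\top\vu=(\lambda\vI+\vQ)\vw$ to get $\vu^*\vP\vP^\top\vu=\lambda^2\|\vw\|^2+2\lambda\,\vw^*\vQ\vw+\vw^*\vQ^2\vw$, and use $\vu^*\vP\vP^\top\vu\ge\lambda_\mathrm{min}(\vP\vP^\top)\|\vu\|^2$, which holds unconditionally because full column rank of $\vP^\top$ makes $\vP\vP^\top\in\bb{R}^{m\times m}$ positive definite; then lower-bound $\|\vu\|^2$ from $-\lambda=\vw^*\vQ\vw\ge\lambda_\mathrm{min}(\vQ)\|\vw\|^2$ under the normalization $\|\vu\|^2+\|\vw\|^2=1$, control $\vw^*\vQ^2\vw$ by $\lambda_\mathrm{max}(\vQ)\,\vw^*\vQ\vw$, and dispose separately of the subcase $\lambda\le-\lambda_\mathrm{min}(\vQ)$ (which covers the $\vu=\vzero$ eigenvectors and already satisfies the claimed bound, since the claimed constant is smaller than $\lambda_\mathrm{min}(\vQ)$). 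Without switching from $\vP^\top\vP$ to $\vP\vP^\top$ in this way, the stated bound for $\mathrm{Im}(\lambda)=0$ does not follow from your argument.
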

\begin{proof}
See Lemma G.2 in \cite{nagarajan2017gradient}.
\end{proof}
This theorem is useful for proving the local convergence of GANs. To prove the stability of $f$-EBM in the following sections, we need the following generalized theorem:
\begin{theorem}\label{theorem:new-eigenvalue-bound}
Suppose $\vJ \in \bb{R}^{(m+n)\times(m+n)}$ is of the following form:
\begin{align*}
    \vJ = 
    \begin{bmatrix}
    -\vS & \vP\\
    -\vP^\top & -\vQ
    \end{bmatrix}
\end{align*}
where $\vS \in \bb{R}^{m \times m}$ is a symmetric real positive semi-definite matrix, $\vQ \in \bb{R}^{n \times n}$ is a symmetric real positive definite matrix, $\vP^\top \in \bb{R}^{n \times m}$ is a full column rank matrix. Then, for every eigenvalue $\lambda$ of $\vJ$, $\mathrm{Re}(\lambda) < 0$. More precisely, we have:
\begin{itemize}
    \item When $\mathrm{Im}(\lambda) = 0$,
    \begin{align*}
    \lambda_1 < - \frac{\lambda_\mathrm{min}(\vQ) \lambda_\mathrm{min}(\vP \vP^\top)}{\lambda_\mathrm{min}(\vQ) \lambda_\mathrm{max}(\vS, \vQ) + \lambda_\mathrm{min}(\vP \vP^\top)} < 0
    \end{align*}
    where $\lambda_\mathrm{max}(\vS, \vQ) = \max(\lambda_\mathrm{max}(\vS), \lambda_\mathrm{max}(\vQ))$.
    \item When $\mathrm{Im}(\lambda) \neq 0$,
    \begin{align*}
    \lambda_1 \leq -\frac{\lambda_\mathrm{min}(\vS) + \lambda_\mathrm{min}(\vQ)}{2} < 0
    \end{align*}
\end{itemize}
\end{theorem}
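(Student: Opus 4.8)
The plan is to generalize the argument for Theorem~\ref{theorem:eigenvalue-bound} (Lemma G.2 in \cite{nagarajan2017gradient}), which handles the case $\vS = \bm{0}$, to an arbitrary symmetric positive semi-definite $\vS$. Let $\lambda = a + bi$ be an eigenvalue of $\vJ$ with (possibly complex) eigenvector $\begin{pmatrix}\vu \\ \vv\end{pmatrix} \neq \bm{0}$, normalized so that $\vu^*\vu + \vv^*\vv = 1$. Writing out the eigenvalue equation $\vJ \begin{pmatrix}\vu\\\vv\end{pmatrix} = \lambda\begin{pmatrix}\vu\\\vv\end{pmatrix}$ gives the two block equations $-\vS\vu + \vP\vv = \lambda \vu$ and $-\vP^\top \vu - \vQ\vv = \lambda\vv$. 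Left-multiplying the first by $\vu^*$ and the second by $\vv^*$, then adding the conjugate-transpose relations, the standard trick is to take real parts: one finds $\mathrm{Re}(\lambda) = -\,\vu^*\vS\vu - \vv^*\vQ\vv \le 0$, using that $\vS, \vQ$ are Hermitian PSD and that the cross terms $\vu^*\vP\vv$ and $-\vv^*\vP^\top\vu$ have opposite-signed contributions that cancel in the real part. This already gives $\mathrm{Re}(\lambda) \le 0$; the work is to make it \emph{strict} and to get the quantitative bounds.

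For strictness and the bounds, I would split on whether $\mathrm{Im}(\lambda) = 0$. If $b \neq 0$: subtract the two scalar identities (rather than add) to isolate $\vv^*\vP^\top\vu$, take imaginary parts to control $\|\vu\|$ and $\|\vv\|$ against each other, and conclude that if $\vu = \bm{0}$ then the first block equation forces $\vP\vv = \bm{0}$, hence $\vv = \bm{0}$ by full column rank of $\vP^\top$, a contradiction; so $\vu \neq \bm{0}$. A symmetric argument (using that $\lambda = 0$ is excluded because $\vJ$ is invertible — which follows since $\vQ \succ 0$ and $\vP$ has full column rank, as in \cite{nagarajan2017gradient}) shows $\vv \neq \bm{0}$ as well when $b = 0$. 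Then from $\mathrm{Re}(\lambda) = -\vu^*\vS\vu - \vv^*\vQ\vv$ and $\vv \neq \bm 0$ we get $\mathrm{Re}(\lambda) \le -\underline\lambda(\vQ)\|\vv\|^2 < 0$ when $b\ne 0$; combining with the analogous $\vu$-side estimate and $\|\vu\|^2 + \|\vv\|^2 = 1$ yields $\mathrm{Re}(\lambda) \le -\tfrac12(\underline\lambda(\vS) + \underline\lambda(\vQ))$ after the same averaging step used in \cite{nagarajan2017gradient} (the bound there is $-\underline\lambda(\vQ)/2$; here each of the two unit-summing pieces contributes its own floor).

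The real-eigenvalue case ($b = 0$) is where the $\vS$ term genuinely changes the calculation and is the main obstacle. Here $\vu, \vv$ may be taken real. From $-\vP^\top\vu - \vQ\vv = \lambda\vv$ we solve $\vv = -(\vQ + \lambda\vI)^{-1}\vP^\top \vu$ (noting $\vQ + \lambda\vI$ is invertible for $\lambda > -\underline\lambda(\vQ)$, and we already know $\lambda < 0$ so we are on the relevant range), substitute into $-\vS\vu + \vP\vv = \lambda\vu$, and left-multiply by $\vu^\top$ to obtain the scalar relation $\lambda\|\vu\|^2 + \vu^\top\vS\vu + \vu^\top\vP(\vQ+\lambda\vI)^{-1}\vP^\top\vu = 0$. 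Bounding $\vu^\top\vP(\vQ+\lambda\vI)^{-1}\vP^\top\vu \ge \tfrac{\underline\lambda(\vP\vP^\top)}{\overline\lambda(\vQ) + \lambda}\|\vu\|^2$ and $\vu^\top\vS\vu \ge 0$ (dropping it for the bound, or keeping it to absorb into $\overline\lambda(\vS,\vQ)$), then clearing denominators gives a quadratic inequality in $\lambda$ whose analysis produces exactly $\lambda < -\dfrac{\underline\lambda(\vQ)\,\underline\lambda(\vP\vP^\top)}{\underline\lambda(\vQ)\,\overline\lambda(\vS,\vQ) + \underline\lambda(\vP\vP^\top)}$; the replacement of $\overline\lambda(\vQ)$ in \cite{nagarajan2017gradient} by $\overline\lambda(\vS,\vQ) = \max(\overline\lambda(\vS), \overline\lambda(\vQ))$ is precisely what accommodates the extra $-\vS$ block. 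I expect the algebra of this quadratic — keeping track of which eigenvalue extremes appear where, and verifying the inequality direction at each clearing-of-denominators step — to be the fiddly part; everything else is a routine adaptation of the GAN stability lemma.
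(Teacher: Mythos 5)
Your overall architecture matches the paper's in the complex-eigenvalue case: taking real and imaginary parts of the quadratic-form identities gives $\mathrm{Re}(\lambda) = -\vu^*\vS\vu - \vv^*\vQ\vv$ and $\mathrm{Im}(\lambda)\,\|\vu\|^2 = \mathrm{Im}(\lambda)\,\|\vv\|^2$, hence the equal split $\|\vu\|^2=\|\vv\|^2=1/2$ and the bound $-\tfrac12(\lambda_\mathrm{min}(\vS)+\lambda_\mathrm{min}(\vQ))$, exactly as in the paper. In the real-eigenvalue case you take a genuinely different route: you eliminate $\vv$ via $\vv = -(\vQ+\lambda\vI)^{-1}\vP^\top\vu$ (a Schur-complement/resolvent substitution) and bound the resulting scalar identity, whereas the paper squares the two block equations, sums them, and invokes an auxiliary lemma (its Lemma~\ref{lemma:eigenvalue-bound}) that bounds $\va_1^\top\vS^\top\vS\va_1+\vb_1^\top\vQ^\top\vQ\vb_1$ by $\lambda_\mathrm{max}(\vS,\vQ)$ times $\va_1^\top\vS\va_1+\vb_1^\top\vQ\vb_1$; that lemma is the only place $\lambda_\mathrm{max}(\vS,\vQ)$ enters. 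Your route, carried out on its range of validity, actually yields the cleaner bound $\lambda < -\lambda_\mathrm{min}(\vP\vP^\top)/\lambda_\mathrm{max}(\vQ)$, which one checks is at most the stated bound, so it would prove the theorem (your claim that the cleared quadratic "produces exactly" the stated expression is not accurate, but harmless).

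Two concrete issues need repair. First, the step ``$\vQ+\lambda\vI$ is invertible for $\lambda>-\lambda_\mathrm{min}(\vQ)$, and we already know $\lambda<0$ so we are on the relevant range'' is wrong: $\lambda<0$ does not imply $\lambda>-\lambda_\mathrm{min}(\vQ)$, and for $\lambda\le-\lambda_\mathrm{min}(\vQ)$ the resolvent may fail to exist or fail to be positive definite, which kills your lower bound on $\vu^\top\vP(\vQ+\lambda\vI)^{-1}\vP^\top\vu$. You must split cases exactly as the paper does: if $\lambda\le-\lambda_\mathrm{min}(\vQ)$ the stated bound holds trivially, because its right-hand side is $-\lambda_\mathrm{min}(\vQ)$ times a factor in $(0,1)$ and hence exceeds $-\lambda_\mathrm{min}(\vQ)$; only on $-\lambda_\mathrm{min}(\vQ)<\lambda<0$ do you run the substitution (and there $\vu\neq\bm{0}$, since $\vu=\bm{0}$ would force $\vv=\bm{0}$, so dividing by $\|\vu\|^2$ is legitimate). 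Second, in the $\mathrm{Im}(\lambda)\neq 0$ case you infer $\vv=\bm{0}$ from $\vP\vv=\bm{0}$ ``by full column rank of $\vP^\top$''; that inference needs full column rank of $\vP$ (equivalently full row rank of $\vP^\top$), which is not assumed. Fortunately the step is unnecessary: once $\|\vu\|^2=\|\vv\|^2=1/2$ you have $\vv\neq\bm{0}$ and $\mathrm{Re}(\lambda)\le-\tfrac12(\lambda_\mathrm{min}(\vS)+\lambda_\mathrm{min}(\vQ))<0$ directly, and in the real case $\vv=\bm{0}$ would give $\vP^\top\vu=\bm{0}$ and hence $\vu=\bm{0}$ by the actual full-column-rank assumption on $\vP^\top$, which is the contradiction the paper uses. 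With these two repairs your argument goes through.
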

\begin{proof}
We prove this theorem in a similar way to the proof for Theorem~\ref{theorem:eigenvalue-bound}.
Consider the following generic eigenvector equation:
\begin{align*}
    \begin{bmatrix}
    - \vS & \vP\\
    - \vP^\top &- \vQ
    \end{bmatrix}
    \begin{bmatrix}
    \va_1 + i \va_2\\
    \vb_1 + i \vb_2
    \end{bmatrix}
    = (\lambda_1 + i \lambda_2)
    \begin{bmatrix}
    \va_1 + i \va_2\\
    \vb_1 + i \vb_2
    \end{bmatrix}
\end{align*}
where $\va_i, \vb_i, \lambda_i$ are all real valued and the vector is normalized, \emph{i.e.}, $\|\va_1\|^2 + \|\va_2\|^2 + \|\vb_1\|^2 + \|\vb_2\|^2 = 1$. The above equation can be rewritten as:
\begin{align*}
    \begin{bmatrix}
        - \vS \va_1 + \vP \vb_1 + i (- \vS \va_2 + \vP \vb_2)\\
        - \vP^\top \va_1 - \vQ \vb_1 + i (- \vP^\top \va_2 - \vQ \vb_2)
\end{bmatrix} = 
\begin{bmatrix}
\lambda_1 \va_1 - \lambda_2 \va_2 + i (\lambda_1 \va_2 + \lambda_2 \va_1)\\
\lambda_1 \vb_1 - \lambda_2 \vb_2 + i (\lambda_1 \vb_2 + \lambda_2 \vb_1)
\end{bmatrix}
\end{align*}
By equating the real and complex parts, we get:
\begin{align}
    - \vS \va_1 + \vP \vb_1 =& \lambda_1 \va_1 - \lambda_2 \va_2 \label{eq:bound-eq-1}\\
    - \vP^\top \va_1 - \vQ \vb_1 =& \lambda_1 \vb_1 - \lambda_2 \vb_2 \label{eq:bound-eq-2}\\
    - \vS \va_2 + \vP \vb_2 =& \lambda_1 \va_2 + \lambda_2 \va_1 \label{eq:bound-eq-3}\\
    - \vP^\top \va_2 - \vQ \vb_2 =& \lambda_1 \vb_2 + \lambda_2 \vb_1 \label{eq:bound-eq-4}
\end{align}
By multiplying the Equations~(\ref{eq:bound-eq-1}), (\ref{eq:bound-eq-2}), (\ref{eq:bound-eq-3}), (\ref{eq:bound-eq-4}) by $\va_1^\top, \vb_1^\top, \va_2^\top, \vb_2^\top$ and adding them together, we get:
\begin{align*}
    & - \va_1^\top \vS \va_1 + \va_1^\top \vP \vb_1 - \vb_1^\top \vP^\top \va_1 - \vb_1^\top \vQ \vb_1 - \va_2^\top \vS \va_2 + \va_2^\top \vP \vb_2 - \vb_2^\top \vP^\top \va_2 - \vb_2^\top \vQ \vb_2\\
    = & \lambda_1 \va_1^\top \va_1 - \lambda_2 \va_1^\top \va_2 + \lambda_1 \vb_1^\top \vb_1 - \lambda_2 \vb_1^\top \vb_2 + \lambda_1 \va_2^\top \va_2 + \lambda_2 \va_2^\top \va_1 + \lambda_1 \vb_2^\top \vb_2 + \lambda_2 \vb_2^\top \vb_1
\end{align*}
which simplifies to
\begin{align*}
    - \va_1^\top \vS \va_1 - \va_2^\top \vS \va_2 - \vb_1^\top \vQ \vb_1  - \vb_2^\top \vQ \vb_2 = \lambda_1 (\va_1^\top \va_1 + \vb_1^\top \vb_1 + \va_2^\top \va_2 + \vb_2^\top \vb_2) = \lambda_1
\end{align*}
Because $S \succeq \bm{0}, Q \succ \bm{0}$, $- \va_1^\top \vS \va_1 - \va_2^\top \vS \va_2 - \vb_1^\top \vQ \vb_1  - \vb_2^\top \vQ \vb_2 \leq 0$, where the equality holds only if $\vb_1 = \bm{0}$, $\vb_2 = \bm{0}$ (as well as $- \va_1^\top \vS \va_1 - \va_2^\top \vS \va_2 = 0$). Next we show that $\vb_1 = \bm{0}$, $\vb_2 = \bm{0}$ is contradictory with the condition that $\vP^\top$ is a full column rank matrix. First, when $- \va_1^\top \vS \va_1 - \va_2^\top \vS \va_2 - \vb_1^\top \vQ \vb_1  - \vb_2^\top \vQ \vb_2 = 0$, we have $\lambda_1 = 0$. Applying these to Equations~(\ref{eq:bound-eq-2}) and (\ref{eq:bound-eq-4}), we get $\vP^\top \va_1 = 0$ and $\vP^\top \va_2 = 0$. Since one of $\va_1$ and $\va_2$ is non-zero (otherwise the eigenvector is zero), this implies $\vP^\top$ is not a full column rank matrix, which is contradictory with the condition that $\vP^\top$ has full column rank. Consequently, $- \va_1^\top \vS \va_1 - \va_2^\top \vS \va_2 - \vb_1^\top \vQ \vb_1  - \vb_2^\top \vQ \vb_2 = \lambda_1 < 0$.

Now we proceed to get a tighter upper bound on the real part of the eigenvalue $\lambda_1$. By multiplying Equations~(\ref{eq:bound-eq-1}) and (\ref{eq:bound-eq-3}) by $- \va_2^\top$ and $\va_1^\top$ and adding them together, we get:
\begin{align*}
    \va_2^\top \vS \va_1 - \va_2^\top \vP \vb_1 - \va_1^\top \vS \va_2 + \va_1^\top \vP \vb_2 = - \lambda_1 \va_2^\top \va_1 + \lambda_2 \va_2^\top \va_2 + \lambda_1 \va_1^\top \va_2 + \lambda_2 \va_1^\top \va_1
\end{align*}
which simplifies to:
\begin{align}
     - \va_2^\top \vP \vb_1 + \va_1^\top \vP \vb_2 = \lambda_2 \va_2^\top \va_2 + \lambda_2 \va_1^\top \va_1 \label{eq:bound-eq-5}
\end{align}
Similarly, by multiplying Equations~(\ref{eq:bound-eq-2}) and (\ref{eq:bound-eq-4}) by $- \vb_2^\top$ and $\vb_1^\top$ and adding them together, we get:
\begin{align}
    \vb_2^\top \vP^\top \va_1 + \vb_2^\top \vQ \vb_1 - \vb_1^\top \vP^\top \va_2 - \vb_1^\top \vQ \vb_2 = - \lambda_1 \vb_2^\top \vb_1 + \lambda_2 \vb_2^\top \vb_2 + \lambda_1 \vb_1^\top \vb_2 + \lambda_2 \vb_1^\top \vb_1 
\end{align}
which simplifies to:
\begin{align}
    \vb_2^\top \vP^\top \va_1  - \vb_1^\top \vP^\top \va_2 = \lambda_2 \vb_2^\top \vb_2 + \lambda_2 \vb_1^\top \vb_1 \label{eq:bound-eq-6}
\end{align}
With Equation~(\ref{eq:bound-eq-5}) and Equation~(\ref{eq:bound-eq-6}), we have:
\begin{align*}
    \lambda_2 (\|\va_1\|^2 + \|\va_2\|^2) = \lambda_2 (\|\vb_1\|^2 + \|\vb_2\|^2)
\end{align*}
which implies that either $\|\va_1\|^2 + \|\va_2\|^2 = \|\vb_1\|^2 + \|\vb_2\|^2 = 1/2$ or $\lambda_2 = 0$.

In the first case ($\lambda_2 \neq 0$ and $\|\va_1\|^2 + \|\va_2\|^2 = \|\vb_1\|^2 + \|\vb_2\|^2 = 1/2$), from $- \va_1^\top \vS \va_1 - \va_2^\top \vS \va_2 - \vb_1^\top \vQ \vb_1  - \vb_2^\top \vQ \vb_2 = \lambda_1$, we get an upper bound:
\begin{align*}
    \lambda_1 \leq -\frac{\lambda_\mathrm{min}(\vS) + \lambda_\mathrm{min}(\vQ)}{2}
\end{align*}
This upper bound is strictly negative since $\lambda_\mathrm{min}(\vS) \geq 0$ and $\lambda_\mathrm{min}(\vQ) > 0$.

Now we introduce the following lemma which is useful for deriving the upper bound of $\lambda_1$ in the second case:
\begin{lemma}\label{lemma:eigenvalue-bound}
Let $\vS \succeq \bm{0}$ and $\vQ \succeq \bm{0}$ be two real symmetric matrices. If $\va^\top \vS \va + \vb^\top \vQ \vb = c$, then $\va^\top \vS^\top \vS \va + \vb^\top \vQ^\top \vQ \vb \in [c \cdot \min(\lambda_\mathrm{min}(\vS), \lambda_\mathrm{min}(\vQ)), c \cdot \max(\lambda_\mathrm{max}(\vS), \lambda_\mathrm{max}(\vQ))]$.
\end{lemma}
\begin{proof}
Let $\vS = \vU_\vS \bm{\Lambda}_\vS \vU_\vS^\top$ and $\vQ = \vU_\vQ \bm{\Lambda}_\vQ \vU_\vQ^\top$ be the eigenvalue decompositions of $\vS$ and $\vQ$. Then, we have
\begin{align*}
    c = \va^\top \vS \va + \vb^\top \vQ \vb =
    \va^\top \vU_\vS \bm{\Lambda}_\vS \vU_\vS^\top \va + \vb^\top \vU_\vQ \bm{\Lambda}_\vQ \vU_\vQ^\top \vb 
    = \vx^\top \bm{\Lambda}_\vS \vx + \vy^\top \bm{\Lambda}_\vQ \vy
\end{align*}
where $\vx = \vU_\vS^\top \va$ and $\vy = \vU_\vQ^\top \vb$. Therefore, we have:
\begin{align*}
    c = \sum_i x_i^2 \lambda_\vS^i + \sum_j y_j^2 \lambda_\vQ^j
\end{align*}
Similarly, we have:
\begin{align*}
    \va^\top \vS^\top \vS \va + \vb^\top \vQ^\top \vQ \vb =& \va^\top \vU_\vS \bm{\Lambda}_\vS \vU_\vS^\top \vU_\vS \bm{\Lambda}_\vS \vU_\vS^\top \va + 
    \vb^\top \vU_\vQ \bm{\Lambda}_\vQ \vU_\vQ^\top \vU_\vQ \bm{\Lambda}_\vQ \vU_\vQ^\top \vb\\
    =& \sum_i x_i^2 (\lambda_\vS^i)^2 + \sum_j y_j^2 (\lambda_\vQ^j)^2
\end{align*}
which differs from $c$ by a multiplicative factor within $[\min(\lambda_\mathrm{min}(\vS), \lambda_\mathrm{min}(\vQ)), \max(\lambda_\mathrm{max}(\vS), \lambda_\mathrm{max}(\vQ))]$.
\end{proof}

In the second case, the imaginary part of the eigenvalue is zero ($\lambda_2 = 0$), which implies the imaginary part of the eigenvector must also be zero, $\va_2 = \vb_2 = \bm{0}$. Applying this to Equations~(\ref{eq:bound-eq-1}), (\ref{eq:bound-eq-2}), (\ref{eq:bound-eq-3}), (\ref{eq:bound-eq-4}), we get:
\begin{align*}
    - \vS \va_1 + \vP \vb_1 =& \lambda_1 \va_1 \\
    - \vP^\top \va_1 - \vQ \vb_1 =& \lambda_1 \vb_1
\end{align*}
Rearranging the above equations, we get:
\begin{align*}
    \vP \vb_1 =& (\lambda_1 \vI + \vS) \va_1\\
    - \vP^\top \va_1 =& (\lambda_1 \vI + \vQ) \vb_1
\end{align*}
Squaring both sides of the equations, we get:
\begin{align*}
    \vb_1^\top \vP^\top \vP \vb_1 = \va_1^\top(\lambda_1^2 \vI + 2 \lambda_1 \vS + \vS^\top \vS) \va_1 = \lambda_1^2 \|\va_1\|^2 + 2\lambda_1 \va_1^\top \vS \va_1 + \va_1^\top \vS^\top \vS \va_1 \\
    \va_1^\top \vP \vP^\top \va_1 = \vb_1^\top (\lambda_1^2 \vI + 2 \lambda_1 \vQ + \vQ^\top \vQ) \vb_1 = \lambda_1^2 \|\vb_1\|^2 + 2\lambda_1 \vb_1^\top \vQ \vb_1 + \vb_1^\top \vQ^\top \vQ \vb_1
\end{align*}
Summing these two equations together, using the fact that $- \va_1^\top \vS \va_1 - \vb_1^\top \vQ \vb_1 = \lambda_1$ and $\|\va_1\|^2 + \|\vb_1\|^2 = 1$, we get:
\begin{align*}
    \vb_1^\top \vP^\top \vP \vb_1 + \va_1^\top \vP \vP^\top \va_1 =& \lambda_1^2 (\|\va_1\|^2 + \|\vb_1\|^2) - 2 \lambda_1^2 + \va_1^\top \vS^\top \vS \va_1 + \vb_1^\top \vQ^\top \vQ \vb_1 \\
    =& - \lambda_1^2 + \va_1^\top \vS^\top \vS \va_1 + \vb_1^\top \vQ^\top \vQ \vb_1
\end{align*}
Let $\lambda_\mathrm{max}(\vS, \vQ)$ denote $\max(\lambda_\mathrm{max}(\vS), \lambda_\mathrm{max}(\vQ))$. With Lemma~\ref{lemma:eigenvalue-bound}, we have:
\begin{align}\label{eq:bound-ineq-1}
    \vb_1^\top \vP^\top \vP \vb_1 + \va_1^\top \vP \vP^\top \va_1 \leq - \lambda_1^2 - \lambda_1 \lambda_\mathrm{max}(\vS, \vQ)
\end{align}
Furthermore, we have:
\begin{align*}
    - \lambda_1 = \va_1^\top \vS \va_1 + \vb_1^\top \vQ \vb_1 \geq \lambda_\mathrm{min}(\vQ)\|\vb_1\|^2 = \lambda_\mathrm{min}(\vQ)(1 - \|\va_1\|^2)
    \Longrightarrow \|\va_1\|^2 \geq 1 + \frac{\lambda_1}{\lambda_\mathrm{min}(\vQ)}
\end{align*}
Note that $\lambda_1$ can either satisfy $\lambda_1 \leq - \lambda_\mathrm{min}(\vQ)$ or $- \lambda_\mathrm{min}(\vQ) < \lambda_1 < 0$. In the first scenario, we already obtain an upper bound: $\lambda_1 \leq - \lambda_\mathrm{min}(\vQ)$. So we focus on deriving an upper bound for the second scenario. From Equation~(\ref{eq:bound-ineq-1}), we have:
\begin{align}
    & - \lambda_1^2 - \lambda_1 \lambda_\mathrm{max}(\vS, \vQ)
    \geq 
    \va_1^\top \vP \vP^\top \va_1
    \geq
    \lambda_\mathrm{min}(\vP \vP^\top)\|\va_1\|^2 
    \geq
    \lambda_\mathrm{min}(\vP \vP^\top)\left(1 + \frac{\lambda_1}{\lambda_\mathrm{min}(\vQ)}\right) \nonumber\\
    \Longrightarrow
    & - \lambda_1 \left(\lambda_1 + \lambda_\mathrm{max}(\vS, \vQ) +  \frac{\lambda_\mathrm{min}(\vP \vP^\top)}{\lambda_\mathrm{min}(\vQ)}\right) \geq \lambda_\mathrm{min}(\vP \vP^\top) \nonumber\\
    \Longrightarrow
    & - \lambda_1 \left(\lambda_\mathrm{max}(\vS, \vQ) +  \frac{\lambda_\mathrm{min}(\vP \vP^\top)}{\lambda_\mathrm{min}(\vQ)}\right) > \lambda_\mathrm{min}(\vP \vP^\top) \label{eq:bound-ineq-2}
\end{align}
where the last implication uses the fact that $\lambda_1 < 0$. From Equation~(\ref{eq:bound-ineq-2}), we get an upper bound for $\lambda_1$:
\begin{align*}
    \lambda_1 < - \lambda_\mathrm{min}(\vQ) \frac{ \lambda_\mathrm{min}(\vP \vP^\top)}{\lambda_\mathrm{min}(\vQ) \lambda_\mathrm{max}(\vS, \vQ) + \lambda_\mathrm{min}(\vP \vP^\top)}
\end{align*}
Since the fraction in above equation lies in $(0, 1)$, we will use this as the upper bound for the second case ($\lambda_2 = 0$). Also note that this upper bound is strictly negative, since $\lambda_\mathrm{max}(\vS, \vQ) \geq \lambda_\mathrm{min}(\vQ) > 0$ and $\lambda_\mathrm{min}(\vP \vP^\top) > 0$.

\end{proof}

\subsection{Notations and Setup}\label{sec:notation-setup}
First, we can reformulate the minimax game defined in Equation~(\ref{eq:minimax-f-EBM}) as:
\begin{align}
    \min_\vtheta \max_\vomega V(\vtheta, \vomega) = \min_\vtheta \max_\vomega \bb{E}_{p(\vx)} [A(H_\vomega(\vx) + E_\vtheta(\vx))] - \bb{E}_{q_\vtheta(\vx)}[B(H_\vomega(\vx) + E_\vtheta(\vx))]\label{eq:new-minimax-game}
\end{align}
where the functions $A(u)$ and $B(u)$ are defined as:
\begin{align}
    A(u) = f'(\exp(u)),~B(u) = f^*(f'(\exp(u))) - f^*(f'(1)) \label{app:AB-def}
\end{align}
with $B(0) = 0$. Since $f^*(f'(1))$ is a constant, and $\bb{E}_{q_\vtheta(\vx)}[B(H_\vomega(\vx) + E_\vtheta(\vx))] = \bb{E}_{q_\vtheta(\vx)}[f^*(f'(\exp(H_\vomega(\vx) + E_\vtheta(\vx))))] - f^*(f'(1))$, the above formulation is equivalent to the original $f$-EBM minimax game in Equation~(\ref{eq:minimax-f-EBM}) (up to a constant that does not depend on $\vtheta$ and $\vomega$).

We have the following theorem to characterize the properties of functions $A(u)$ and $B(u)$:
\begin{theorem}\label{theorem:AB}
For any $f$-divergence with closed, strictly convex and third-order differentiable generator function $f$, functions $A(u), B(u)$ defined as:
$$A(u) = f'(\exp(u)),~B(u) = f^*(f'(\exp(u))) - f^*(f'(1))$$
satisfy the following properties:
\begin{itemize}
    \item $A'(0) = B'(0) = f''(1) > 0$
    \item $A''(0) - B''(0) = - f''(1) < 0$
    \item $A''(0) - B''(0) + 2 B'(0) = f''(1) = B'(0) > 0$
\end{itemize}
where $A'$, $B'$ are first-order derivative of $A$, $B$; $A''$, $B''$ are second-order derivative of $A$, $B$.
\end{theorem}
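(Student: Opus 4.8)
The plan is to reduce everything to elementary differentiation once we record two standard facts about the Fenchel conjugate of a strictly convex, differentiable generator $f$. First, since $f$ is strictly convex, $f'$ is strictly increasing and hence invertible on its range, so the conjugate satisfies $(f^*)'(y) = (f')^{-1}(y)$; in particular $(f^*)'(f'(t)) = t$ for every $t$ in the domain. Equivalently, the supremum defining $f^*(f'(t)) = \sup_x\big(x f'(t) - f(x)\big)$ is attained at $x = t$, which yields the identity $f^*(f'(t)) = t\,f'(t) - f(t)$. Combined with the normalization $f(1) = 0$, these identities let us evaluate every quantity at $u = 0$ in terms of $f''(1)$ and $f'''(1)$ alone.

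Next I would compute the first derivatives. By the chain rule, $A'(u) = \frac{d}{du} f'(e^u) = e^u f''(e^u)$, so $A'(0) = f''(1)$, which is strictly positive because $f$ is strictly convex. For $B$, write $B(u) = f^*(f'(e^u)) - f^*(f'(1))$ and differentiate using $(f^*)'(f'(e^u)) = e^u$, giving $B'(u) = (f^*)'(f'(e^u)) \cdot f''(e^u) \cdot e^u = e^{2u} f''(e^u)$; equivalently, one may differentiate the closed form $B(u) = e^u f'(e^u) - f(e^u) - f'(1)$ and observe that the two $e^u f'(e^u)$-type terms cancel. Hence $B'(0) = f''(1) = A'(0) > 0$, which is the first bullet.

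Then I would differentiate once more. From $A'(u) = e^u f''(e^u)$ we get $A''(u) = e^u f''(e^u) + e^{2u} f'''(e^u)$, so $A''(0) = f''(1) + f'''(1)$; from $B'(u) = e^{2u} f''(e^u)$ we get $B''(u) = 2 e^{2u} f''(e^u) + e^{3u} f'''(e^u)$, so $B''(0) = 2 f''(1) + f'''(1)$. Subtracting, the $f'''(1)$ terms cancel and $A''(0) - B''(0) = -f''(1) < 0$, which is the second bullet; adding $2 B'(0) = 2 f''(1)$ gives $A''(0) - B''(0) + 2 B'(0) = f''(1) = B'(0) > 0$, the third bullet.

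The computation itself is routine; the only point requiring care is the justification of the conjugate identities $(f^*)'(f'(t)) = t$ and $f^*(f'(t)) = t\,f'(t) - f(t)$ on a neighborhood of $t = 1$ (equivalently, of $u = 0$ after the substitution $t = e^u$) — this is where strict convexity, which makes $f'$ a local bijection, and continuous differentiability are used, and it is also why the theorem strengthens the standing hypothesis on $f$ to third-order differentiability, so that $A''$, $B''$ and hence $f'''(1)$ are well-defined. I do not anticipate any other obstacle.
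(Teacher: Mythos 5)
Your proposal is correct and follows essentially the same route as the paper's proof: the key ingredient in both is the conjugate identity $(f^*)'(f'(t)) = t$ (the paper isolates this as a separate lemma via the subgradient inversion $y \in \partial f(x) \Leftrightarrow x \in \partial f^*(y)$), after which the chain-rule computations $A'(u) = e^u f''(e^u)$, $B'(u) = e^{2u} f''(e^u)$ and their derivatives evaluated at $u=0$ are identical to those in the paper. The closed-form alternative $f^*(f'(t)) = t f'(t) - f(t)$ you mention is a harmless variant of the same calculation, so no substantive difference remains.
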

\begin{proof}
First, we introduce the following lemma for convex conjugates and subgradients:
\begin{lemma}\label{lemma:conjugate-subgradient}
If $f$ is closed and convex, then we have:
$$y \in \partial f(x) \Longleftrightarrow x \in \partial f^*(y)$$
\end{lemma}
\begin{proof}
If $y \in \partial f(x)$, then $f^*(y) = \sup_u (yu - f(u)) = yx - f(x)$. Therefore, we have:
\begin{align*}
    f^*(v) =& \sup_u vu - f(u) \\
    \geq & vx - f(x)\\
    =& x(v - y) - f(x) + xy\\
    =& f^*(y) + x(v-y)
\end{align*}
Because this holds for all $v$, we have $x \in \partial f^*(y)$. The reverse implication $x \in \partial f^*(y) \Longrightarrow y \in \partial f(x)$ follows from $f^{**} = f$.
\end{proof}
Let us use $g(y)$ to denote the conjugate function, $g(y) = f^*(y)$. Next we have the following lemma for the gradient of conjugate:
\begin{lemma}\label{lemma:gradient-conjugate}
If $f$ is strictly convex and differentiable, then we have:
$g'(y) = \argmax_x~(yx - f(x)),~g'(f'(u)) = u$.
\end{lemma}
\begin{proof}
Since $f$ is strictly convex, $x$ maximizes $yx - f(x)$ if and only if $y \in \partial f(x)$. With Lemma~\ref{lemma:conjugate-subgradient}, we know that $$y \in \partial f(x) \Longleftrightarrow x \in \partial f^*(y) = \{g'(y)\}$$
Therefore $g'(y) = \argmax_x~(yx - f(x))$. Then we have:
\begin{align*}
    g'(f'(u)) = \argmax_x~f'(u) x - f(x) \defeq \argmax_x~h(x)
\end{align*}
Since $h'(x) = f'(u) - f'(x), h''(x) = - f''(x) < 0$, we have $\argmax_x~f'(u) x - f(x) = u$.
\end{proof}
Now we are ready to proof the properties of functions $A(u)$ and $B(u)$. Recall that $A(u) = f'(\exp(u)),~B(u) = f^*(f'(\exp(u))) - f^*(f'(1))$, with Lemma~\ref{lemma:gradient-conjugate}, we have:
\begin{align*}
    A'(u) =& f''(\exp(u))\exp(u)\\
    A''(u) =& f'''(\exp(u))\exp(2u) + f''(\exp(u))\exp(u)\\
    B'(u) =& g'(f'(\exp(u)))f''(\exp(u))\exp(u)\\
    =& f''(\exp(u))\exp(2u)\\
    B''(u) =& f'''(\exp(u))\exp(3u) + 2f''(\exp(u))\exp(2u)
\end{align*}
Then we have:
\begin{align*}
    A'(0) =& f''(1)\\
    A''(0) =& f'''(1) + f''(1)\\
    B'(0) =& f''(1)\\
    B''(0) =& f'''(1) + 2 f''(1)
\end{align*}
Since $f$ is strictly convex with $\forall x \in \mathrm{dom}(f), f''(x)>0$, we have:
\begin{align*}
    &A'(0) = B'(0) = f''(1) > 0\\
    &A''(0) - B''(0) = - f''(1) < 0\\
    &A''(0) - B''(0) + 2 B'(0) = f''(1) = B'(0) > 0
\end{align*}
\end{proof}
Some examples of Theorem~\ref{theorem:AB} can be found in Table~\ref{tab:f-divergences-AB}.

\begin{table}[h!bt]
\begin{center}
\scalebox{1.}{%
\begin{tabular}{lllll}
\toprule
Name & $A'(0)$ & $B'(0)$ & $A''(0)$ & $B''(0)$\\ \midrule
Kullback-Leibler
& $1$
& $1$
& $0$
& $1$\\
Reverse Kullback-Leibler
& $1$
& $1$
& $-1$
& $0$\\
Pearson $\chi^2$
& $2$
& $2$
& $2$
& $4$\\
Neyman $\chi^2$
& $2$
& $2$
& $-4$
& $-2$\\
Squared Hellinger
& $\frac{1}{2}$
& $\frac{1}{2}$
& $-\frac{1}{4}$
& $\frac{1}{4}$\\
Jensen-Shannon
& $\frac{1}{2}$
& $\frac{1}{2}$
& $-\frac{1}{4}$
& $\frac{1}{4}$\\
$\alpha$-divergence ($\alpha \notin \{0,1\}$)
& $1$
& $1$
& $\alpha - 1$
& $\alpha$
\\
\bottomrule
\end{tabular}
}%
\end{center}
\caption{Some examples of $f$-divergences and the corresponding first- and second-order gradient values of functions $A(u)$ and $B(u)$ (defined in Equation~(\ref{app:AB-def})) at the equilibrium point.
}
\label{tab:f-divergences-AB}
\end{table}

Let us use $(\vtheta^*, \vomega^*)$ to denote the equilibrium point and we have the following realizability assumption:
\begin{assumption}[Realizability]\label{assumption:realizability}
$\exists \otheta,\oomega$, such that $\forall \vx \in \mathrm{supp}(p)$, $q_{\vtheta^*}(\vx) = p(\vx)$ and $H_{\vomega^*}(\vx) = \log(p(\vx) Z_{\vtheta^*})$.
\end{assumption}
That is we assume the energy function and the variational functions are powerful enough such that at the equilibrium point, the model distribution $q_{\vtheta^*}$ matches the true data distribution $p$ and the variational function achieves the optimal form in Theorem~\ref{theorem:new-f-bound}. Note that this also implies:
\begin{align*}
    \forall \vx \in \mathrm{supp}(p), \Hw + \Et  &= \log(p(\vx) Z_{\vtheta^*}) - \log(\exp(-E_{\vtheta^*}(\vx)))\\
    &= \log (p(\vx)/q_{\vtheta^*}(\vx)) = 0
\end{align*}

Furthermore, we have the following assumption on the energy function:
\begin{assumption}\label{assumption:energy-function} $(\bb{E}_{p(\vx)}[\nabla_\vtheta \Et \nabla_\vtheta^\top \Et] - 2\bb{E}_{p(\vx)}[\nabla_\vtheta \Et] \bb{E}_{p(\vx)}[\nabla_\vtheta^\top \Et])\at{\otheta}$ is positive semi-definite.
\end{assumption}
In one-dimension case, this assumption is saying: $\bb{E}_{p(x)}[(\nabla_\theta E_\theta(x))^2] - 2(\bb{E}_{p(x)}[\nabla_\theta E_\theta(x)])^2 \geq 0$. Intuitively, this implies the sum of energies over the entire sample space changes smoothly such that the average change of energies is small compared to the sum of the squared change of energies. For a better understanding of this assumption and its rationality, in the following, we will introduce two simple examples that satisfy this assumption.
\begin{example}
Consider the following quadratic energy function:
\begin{align*}
    E_\vtheta(x) = \frac{(x-\mu)^2}{2\sigma^2},~\vtheta = [\mu, \sigma]^\top,~p=q_\vtheta = \frac{\exp(-\frac{(x-\mu)^2}{2\sigma^2})}{\sqrt{2 \pi} \sigma}
\end{align*}
which corresponds to a Gaussian distribution.

The first-order gradient is:
\begin{align*}
    \frac{\partial E_\vtheta(x)}{\partial \mu} = \frac{x-\mu}{\sigma^2},~\frac{\partial E_\vtheta(x)}{\partial \sigma} = -\frac{(x-\mu)^2}{\sigma^3}
\end{align*}
Then, we have:
\begin{align*}
    \int_{- \infty}^{+ \infty} p(x) \frac{\partial E_\vtheta(x)}{\partial \mu} dx =& \int_{- \infty}^{+ \infty} \frac{\exp(-\frac{(x-\mu)^2}{2\sigma^2})}{\sqrt{2 \pi} \sigma} \frac{x-\mu}{\sigma^2} d x = 0\\
    \int_{- \infty}^{+ \infty} p(x) \frac{\partial E_\vtheta(x)}{\partial \sigma} dx =& -\int_{- \infty}^{+ \infty} \frac{\exp(-\frac{(x-\mu)^2}{2\sigma^2})}{\sqrt{2 \pi} \sigma} \frac{(x-\mu)^2}{\sigma^3} d x\\
    =& \frac{1}{\sqrt{2\pi} \sigma^4} \int_{- \infty}^{+ \infty} \exp(-\frac{x^2}{2\sigma^2}) x^2 dx = \frac{1}{\sigma}
\end{align*}
\begin{align*}
    \int_{- \infty}^{+ \infty} p(x) \nabla_\vtheta E_\vtheta(x) dx = 
    \int_{- \infty}^{+ \infty} p(x)
    \begin{bmatrix}
    \frac{\partial E_\vtheta(x)}{\partial \mu}\\
    \frac{\partial E_\vtheta(x)}{\partial \sigma}
    \end{bmatrix} dx =
    \begin{bmatrix}
    0\\
    \frac{1}{\sigma}
    \end{bmatrix}
\end{align*}
\begin{align*}
    \int_{- \infty}^{+ \infty} p(x) \left(\frac{\partial E_\vtheta(x)}{\partial \mu}\right)^2 d x =& \int_{- \infty}^{+ \infty} \frac{\exp(-\frac{(x-\mu)^2}{2\sigma^2})}{\sqrt{2 \pi} \sigma} \frac{(x-\mu)^2}{\sigma^4} d x\\
    =& \frac{1}{\sqrt{2\pi} \sigma^5} \int_{- \infty}^{+ \infty} \exp(-\frac{x^2}{2\sigma^2}) x^2 dx = \frac{1}{\sigma^2}\\
    \int_{- \infty}^{+ \infty} p(x) \frac{\partial E_\vtheta(x)}{\partial \mu}\frac{\partial E_\vtheta(x)}{\partial \sigma} d x =& -\int_{- \infty}^{+ \infty} \frac{\exp(-\frac{(x-\mu)^2}{2\sigma^2})}{\sqrt{2 \pi} \sigma} \frac{(x-\mu)^3}{\sigma^5} d x = 0\\
    \int_{- \infty}^{+ \infty} p(x) \left(\frac{\partial E_\vtheta(x)}{\partial \sigma}\right)^2 d x =& \int_{- \infty}^{+ \infty} \frac{\exp(-\frac{(x-\mu)^2}{2\sigma^2})}{\sqrt{2 \pi} \sigma} \frac{(x-\mu)^4}{\sigma^6} d x = 0\\
    =& \frac{1}{\sqrt{2\pi} \sigma^7} \int_{- \infty}^{+ \infty} \exp(-\frac{x^2}{2\sigma^2}) x^4 dx = \frac{3}{\sigma^2}
\end{align*}
\begin{align*}
    \int_{- \infty}^{+ \infty} p(x) \nabla_\vtheta E_\vtheta(x) \nabla_\vtheta^\top E_\vtheta(x) dx = \int_{- \infty}^{+ \infty} p(x)
    \begin{bmatrix}
    \left(\frac{\partial E_\vtheta(x)}{\partial \mu}\right)^2 & \frac{\partial E_\vtheta(x)}{\partial \mu}\frac{\partial E_\vtheta(x)}{\partial \sigma}\\
    \frac{\partial E_\vtheta(x)}{\partial \mu}\frac{\partial E_\vtheta(x)}{\partial \sigma} & \left(\frac{\partial E_\vtheta(x)}{\partial \sigma}\right)^2
    \end{bmatrix} dx = 
    \begin{bmatrix}
    \frac{1}{\sigma^2} & 0\\
    0 & \frac{3}{\sigma^2}
    \end{bmatrix}
\end{align*}
Therefore, Assumption~\ref{assumption:energy-function} holds:
\begin{align*}
    \bb{E}_{p(\vx)}[\nabla_\vtheta \Et \nabla_\vtheta^\top \Et] - 2\bb{E}_{p(\vx)}[\nabla_\vtheta \Et] \bb{E}_{p(\vx)}[\nabla_\vtheta^\top \Et] = 
    \begin{bmatrix}
    \frac{1}{\sigma^2} & 0\\
    0 & \frac{1}{\sigma^2}
    \end{bmatrix} \succ \bm{0}
\end{align*}
\end{example}

\begin{example}
We consider a more powerful energy function:
\begin{align*}
    E_\vtheta(\vx) = - \log\left(\sum_{k=1}^K \pi_k \frac{\exp(-\frac{1}{2}(\vx - \vmu)^\top \Sigma_k^{-1}(\vx - \vmu_k))}{\sqrt{(2\pi)^n |\Sigma_k|}}\right),
    ~\sum_{k=1}^K \pi_k = 1, ~\vx \in \bb{R}^n,~\vmu_k \in \bb{R}^n,~\Sigma_k \in S_{++}^n
\end{align*}
where $\pi_1, \ldots, \pi_K,\vmu_1, \ldots, \vmu_K, \Sigma_1, \ldots, \Sigma_K$ are the learnable parameters $\vtheta$; $S_{++}^n$ denotes the space of symmetric positive definite $n \times n$ matrices.

The partition function is:
\begin{align*}
    Z_\vtheta = \int \exp(-E_\vtheta(\vx)) \mathrm{d} \vx = \int \sum_{k=1}^K \pi_k \frac{\exp(-\frac{1}{2}(\vx - \vmu)^\top \Sigma_k^{-1}(\vx - \vmu_k))}{\sqrt{(2\pi)^n |\Sigma_k|}} \mathrm{d} \vx = \sum_{k=1}^K \pi_k = 1
\end{align*}
Therefore, Assumption~\ref{assumption:energy-function} holds:
\begin{align*}
    &\bb{E}_{p(\vx)}[\nabla_\vtheta \Et \nabla_\vtheta^\top \Et] - 2\bb{E}_{p(\vx)}[\nabla_\vtheta \Et] \bb{E}_{p(\vx)}[\nabla_\vtheta^\top \Et]\\
    =& \bb{E}_{p(\vx)}[\nabla_\vtheta \Et \nabla_\vtheta^\top \Et] - 2\nabla_\vtheta \log Z_\vtheta \nabla_\vtheta^\top \log Z_\vtheta\\
    =& \bb{E}_{p(\vx)}[\nabla_\vtheta \Et \nabla_\vtheta^\top \Et]
\end{align*}
which is a positive semi-definite moment matrix. 

Note that the model distribution induced by the energy function correspond to a Gaussian mixture model (GMM). With the universal approximation theorem of GMM 
(i.e., any smooth density can be approximated with any specific nonzero amount of error by a GMM with enough components
\cite{Goodfellow-et-al-2016}), the above EBM can be used to fit any probability distribution when $K$ is large enough.
\end{example}
Finally, the last assumption we need is:
\begin{assumption}\label{assumption:full-rank}
$(\bb{E}_{p(\vx)} [\nabla_\vomega \Hw \nabla_\vomega^\top \Hw])\at{\oomega}$ and $(\bb{E}_{p(\vx)}[\nabla_\vomega \Hw] \bb{E}_{p(\vx)} [\nabla^\top_\vtheta \Et])\at{(\otheta, \oomega)}$ are full column rank.
\end{assumption}
This is similar to the assumption used in the local convergence analysis of GANs \cite{nagarajan2017gradient,mescheder2018training}. Alternatively, we can replace it with another assumption that the rank deficiencies of the above matrices, if any, correspond to equivalent equilibria. In the following, we will use Assumption~\ref{assumption:full-rank}.

\subsection{Local Convergence of Single-Step $f$-EBM}
In the following, we will establish a theoretical proof of the local convergence property of Single-Step $f$-EBM. More specifically, instead of assuming the variational function $H_\vomega$ is optimal at every update of the energy function, we consider a more realistic setting, where both the variational function $H_\vomega$ and energy function $E_\vtheta$ take simultaneous gradient steps, with time derivatives defined as:
\begin{align}
    \begin{pmatrix}
    \dot \vtheta\\
    \dot \vomega
    \end{pmatrix} = 
    \begin{pmatrix}
    - \nabla_\vtheta V(\vtheta, \vomega) \\
    \nabla_\vomega V(\vtheta, \vomega)
    \end{pmatrix} \label{eq:theta-omega-updates}
\end{align}
That is we use gradient descent to update $\vtheta$ and gradient ascent to update $\vomega$ with respect to $V(\vtheta, \vomega)$ at the same frequency. 
In practice we can also update $\vtheta$ and $\vomega$ alternatively as presented in Algorithm~{\ref{alg:f-ebm}}. 
Note that our theoretical analysis also holds for the alternative gradient methods, as the ordinary differential equations of both simultaneous gradient methods and alternative gradient methods have the same Jacobians at the equilibrium point.

Throughout this paper we will use the notation $\nabla^\top(\cdot)$ to denote the row vector corresponding to the gradient that is being compute. First, we derive the Jacobian at equilibrium.
\begin{theorem}\label{theorem:jacobian}
For the dynamical system defined in Equation~(\ref{eq:new-minimax-game}) and the updates defined in Equation~(\ref{eq:theta-omega-updates}), under Assumption~\ref{assumption:realizability}, the Jacobian at an equilibrium point ($\otheta,\oomega$) is:
\begin{align*}
    \vJ =  
    \begin{bmatrix}
    -\nabla_\vtheta^2 V & -\nabla_{\vomega} \nabla_{\vtheta} V\\
    \nabla_{\vtheta} \nabla_{\vomega} V & \nabla_{\vomega}^2 V
    \end{bmatrix} = 
    \begin{bmatrix}
    - f''(1) \vK_{EE} & f''(1) \vK_{EH}\\
    - f''(1) \vK_{EH}^\top & -f''(1) \vK_{HH}
    \end{bmatrix}
\end{align*}
where
\begin{align*}
    \vK_{EE} \defeq & (\bb{E}_{p(\vx)}[\nabla_\vtheta \Et \nabla_\vtheta^\top \Et] - 2\bb{E}_{p(\vx)}[\nabla_\vtheta \Et] \bb{E}_{p(\vx)}[\nabla_\vtheta^\top \Et])\at{\otheta}\\
    \vK_{EH} \defeq & (\bb{E}_{p(\vx)} [\nabla_\vtheta \Et]\bb{E}_{p(\vx)}[\nabla_\vomega^\top \Hw])\at{(\otheta, \oomega)}\\
    \vK_{HH} \defeq & (\bb{E}_{p(\vx)}[\nabla_\vomega \Hw \nabla_\vomega^\top \Hw])\at{\oomega}
\end{align*}

\end{theorem}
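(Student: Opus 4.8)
The plan is to compute the four second-derivative blocks of the generalized objective
\[
V(\vtheta,\vomega) = \bb{E}_{p(\vx)}\big[A(u_{\vtheta,\vomega}(\vx))\big] - \bb{E}_{q_\vtheta(\vx)}\big[B(u_{\vtheta,\vomega}(\vx))\big],\qquad u_{\vtheta,\vomega}(\vx)\defeq H_\vomega(\vx)+E_\vtheta(\vx),
\]
at the equilibrium $(\otheta,\oomega)$ and then read off $\vJ$. Since the update field is $v=(-\nabla_\vtheta V,\ \nabla_\vomega V)^\top$, its Jacobian is exactly the block matrix displayed in the theorem, so it suffices to evaluate $\nabla_\vtheta^2 V$, $\nabla_\vomega^2 V$, and the mixed block $\nabla_\vomega\nabla_\vtheta V$ (the $(2,1)$ entry $\nabla_\vtheta\nabla_\vomega V$ then equals $(\nabla_\vomega\nabla_\vtheta V)^\top$ by symmetry of the Hessian). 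Throughout I would use the three facts that make the equilibrium special: by Assumption~\ref{assumption:realizability}, $q_{\otheta}=p$ and $u_{\otheta,\oomega}\equiv 0$ on $\mathrm{supp}(p)$; by definition $B(0)=0$; and by Theorem~\ref{theorem:AB}, $A'(0)=B'(0)=f''(1)$, $A''(0)-B''(0)=-f''(1)$, and $A''(0)-B''(0)+2B'(0)=f''(1)$.

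The blocks coming from $\bb{E}_p[A(u)]$ are routine: since $p$ depends on neither parameter, one differentiates under the integral sign twice, uses $\nabla_\vtheta u=\nabla_\vtheta E_\vtheta$ and $\nabla_\vomega u=\nabla_\vomega H_\vomega$, and specializes at $u=0$, obtaining $A''(0)\bb{E}_p[\nabla_\vtheta E\nabla_\vtheta^\top E]+A'(0)\bb{E}_p[\nabla_\vtheta^2 E]$, $A''(0)\bb{E}_p[\nabla_\vtheta E\nabla_\vomega^\top H]$, and $A''(0)\bb{E}_p[\nabla_\vomega H\nabla_\vomega^\top H]+A'(0)\bb{E}_p[\nabla_\vomega^2 H]$ for the $\vtheta\vtheta$, $\vtheta\vomega$, $\vomega\vomega$ blocks. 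Similarly, since $q_\vtheta$ is independent of $\vomega$, the $\vomega$-only derivatives of $\bb{E}_{q_\vtheta}[B(u)]$ are just $B''(0)\bb{E}_p[\nabla_\vomega H\nabla_\vomega^\top H]+B'(0)\bb{E}_p[\nabla_\vomega^2 H]$ (again replacing $q_{\otheta}$ by $p$).

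The genuine work — and the step I expect to be the main obstacle — is the second $\vtheta$-derivative of $\bb{E}_{q_\vtheta(\vx)}[B(u_{\vtheta,\vomega})]$, where the measure itself depends on $\vtheta$ through the partition function. I would start from the first-order score-function identity underlying Theorem~\ref{theorem:theta-gradient},
\[
\nabla_\vtheta\bb{E}_{q_\vtheta}[B(u)] = \bb{E}_{q_\vtheta}[\nabla_\vtheta B(u)] - \bb{E}_{q_\vtheta}[B(u)\nabla_\vtheta E_\vtheta] + \bb{E}_{q_\vtheta}[B(u)]\,\bb{E}_{q_\vtheta}[\nabla_\vtheta E_\vtheta],
\]
obtained via $\nabla_\vtheta\log q_\vtheta=-\nabla_\vtheta E_\vtheta+\bb{E}_{q_\vtheta}[\nabla_\vtheta E_\vtheta]$, and then differentiate each of the three terms once more, re-inserting the score wherever $\nabla_\vtheta$ lands on $q_\vtheta$. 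This generates roughly a dozen terms, but evaluating at $(\otheta,\oomega)$ collapses them: every term carrying a bare factor $B(u)=B(0)=0$ vanishes, $q_\vtheta$ becomes $p$, $\nabla_\vtheta B(u)$ becomes $B'(0)\nabla_\vtheta E$, and $\nabla_\vtheta^2 B(u)$ becomes $B''(0)\nabla_\vtheta E\nabla_\vtheta^\top E+B'(0)\nabla_\vtheta^2 E$. The survivors assemble to $(B''(0)-2B'(0))\bb{E}_p[\nabla_\vtheta E\nabla_\vtheta^\top E]+B'(0)\bb{E}_p[\nabla_\vtheta^2 E]+2B'(0)\bb{E}_p[\nabla_\vtheta E]\bb{E}_p[\nabla_\vtheta^\top E]$, and an analogous but shorter computation gives $(B''(0)-B'(0))\bb{E}_p[\nabla_\vtheta E\nabla_\vomega^\top H]+B'(0)\bb{E}_p[\nabla_\vtheta E]\bb{E}_p[\nabla_\vomega^\top H]$ for $\nabla_\vomega\nabla_\vtheta\bb{E}_{q_\vtheta}[B(u)]$. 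The two points requiring care are (i) legitimizing the interchange of differentiation and integration at each stage, which is the role of continuity hypotheses of the kind in Assumptions~\ref{assumption:continuous-1}--\ref{assumption:continuous-2}, and (ii) not miscounting or double-counting terms in the twice-iterated expansion.

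Finally I would subtract the second-term blocks from the first-term blocks and invoke Theorem~\ref{theorem:AB}. The Hessian-of-$E$ and Hessian-of-$H$ contributions cancel because their coefficient is $A'(0)-B'(0)=0$; the $\bb{E}_p[\nabla_\vtheta E\nabla_\vtheta^\top E]$ coefficient becomes $A''(0)-B''(0)+2B'(0)=f''(1)$ while $2B'(0)=2f''(1)$, yielding $\nabla_\vtheta^2 V|_{\mathrm{eq}}=f''(1)\vK_{EE}$; in the mixed block the coefficient $A''(0)-B''(0)+B'(0)=0$ annihilates the $\bb{E}_p[\nabla_\vtheta E\nabla_\vomega^\top H]$ term and leaves $\nabla_\vomega\nabla_\vtheta V|_{\mathrm{eq}}=-f''(1)\vK_{EH}$; and $\nabla_\vomega^2 V|_{\mathrm{eq}}=(A''(0)-B''(0))\vK_{HH}=-f''(1)\vK_{HH}$. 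Substituting $-\nabla_\vtheta^2 V$, $-\nabla_\vomega\nabla_\vtheta V$, $\nabla_\vtheta\nabla_\vomega V=(\nabla_\vomega\nabla_\vtheta V)^\top$, and $\nabla_\vomega^2 V$ into the block matrix gives precisely the claimed form of $\vJ$.
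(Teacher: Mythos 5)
Your proposal is correct and follows essentially the same route as the paper: compute the four second-derivative blocks of $V$ at $(\otheta,\oomega)$, exploit $B(0)=0$, the identities $A'(0)=B'(0)=f''(1)$, $A''(0)-B''(0)=-f''(1)$, $A''(0)-B''(0)+2B'(0)=f''(1)$, and the realizability condition $q_{\otheta}=p$, $H_{\oomega}+E_{\otheta}=0$, and your intermediate coefficients (e.g.\ the $(B''(0)-2B'(0))$, $2B'(0)$, and $A''(0)-B''(0)+B'(0)=0$ cancellations) all match the paper's. The only cosmetic differences are that you rewrite $\nabla_\vtheta\bb{E}_{q_\vtheta}[B(u)]$ via the score-function identity before differentiating again (the paper keeps everything under one integral and substitutes $\nabla_\vtheta q_\vtheta=q_\vtheta(-\nabla_\vtheta E_\vtheta-\nabla_\vtheta\log Z_\vtheta)$ at the end) and that you obtain the $(2,1)$ block by Hessian symmetry rather than by direct computation.
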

\begin{proof}
First, let us derive the first- and second-order derivatives of $V(\vtheta, \vomega)$ with respect to $\vtheta$:
\begin{align*}
    \nabla_\vtheta V(\vtheta, \vomega) = & \int p(\vx) A'(\Hw + \Et) \nabla_\vtheta \Et \mathrm{d} \vx - \\
    &\int B(\Hw + \Et) \nabla_\vtheta q_\vtheta(\vx) \mathrm{d} \vx - \\ 
    &\int q_\vtheta(\vx) B'(\Hw + \Et) \nabla_\vtheta \Et \mathrm{d} \vx
\end{align*}
\begin{align*}
    \nabla^2_\vtheta V(\vtheta, \vomega) = & \int p(\vx) A''(\Hw + \Et) \nabla_\vtheta \Et \nabla_\vtheta^\top \Et \mathrm{d} \vx + \\
    & \int p(\vx) A'(\Hw + \Et) \nabla_\vtheta^2 \Et \mathrm{d} \vx - \\
    & \int B'(\Hw + \Et) \nabla_\vtheta q_\vtheta(\vx) \nabla_\vtheta^\top \Et \mathrm{d} \vx - \\
    & \int B(\Hw + \Et) \nabla_\vtheta^2 q_\vtheta(\vx) \mathrm{d} \vx - \\
    & \int B'(\HE) \nabla_\vtheta \Et \nabla_\vtheta^\top q_\vtheta(\vx) \mathrm{d} \vx - \\
    & \int q_\vtheta(\vx) B''(\HE) \nabla_\vtheta \Et \nabla_\vtheta^\top \Et \mathrm{d} \vx - \\
    & \int q_\vtheta(\vx) B'(\HE) \nabla_\vtheta^2 \Et \mathrm{d} \vx
\end{align*}
At the equilibrium point $(\vtheta^*, \vomega^*)$, we have:
\begin{align*}
    \nabla_\vtheta^2 V(\vtheta, \vomega)\at{(\vtheta^*, \vomega^*)} = & \left(\int p(\vx) A''(0) \nabla_\vtheta \Et \nabla_\vtheta^\top \Et \mathrm{d} \vx + \right.\\
    & ~~\int p(\vx) A'(0) \nabla_\vtheta^2 \Et \mathrm{d} \vx - \\
    & ~~\int B'(0) \nabla_\vtheta q_\vtheta(\vx) \nabla_\vtheta^\top \Et \mathrm{d} \vx - \\
    & ~~\int B(0) \nabla_\vtheta^2 q_\vtheta(\vx) \mathrm{d} \vx - \\
    & ~~\int B'(0) \nabla_\vtheta \Et \nabla_\vtheta^\top q_\vtheta(\vx) \mathrm{d} \vx - \\
    & ~~\int q_\vtheta(\vx) B''(0) \nabla_\vtheta \Et \nabla_\vtheta^\top \Et \mathrm{d} \vx - \\
    &~~\left. \int q_\vtheta(\vx) B'(0) \nabla_\vtheta^2 \Et \mathrm{d} \vx\right)\at[\Bigg]{\otheta}
\end{align*}
With Theorem~\ref{theorem:AB} and Assumption~\ref{assumption:realizability}, as well as by definition of function $B(u)$ ($B(0) = 0$) we have:
\begin{align}
    \nabla_\vtheta^2 V(\vtheta, \vomega)\at{(\vtheta^*, \vomega^*)} = & \left(\int p(\vx) A''(0) \nabla_\vtheta \Et \nabla_\vtheta^\top \Et \mathrm{d} \vx - \right.\nonumber\\
    & ~~\int B'(0) \nabla_\vtheta q_\vtheta(\vx) \nabla_\vtheta^\top \Et \mathrm{d} \vx - \nonumber\\
    & ~~\int B'(0) \nabla_\vtheta \Et \nabla_\vtheta^\top q_\vtheta(\vx) \mathrm{d} \vx - \nonumber\\
    &~~\left. \int q_\vtheta(\vx) B''(0) \nabla_\vtheta \Et \nabla_\vtheta^\top \Et \mathrm{d} \vx \right)\at[\Bigg]{\otheta}\label{eq:theta-hessian}
\end{align}

Next,  for a $\vtheta$-parametrized energy based model $q_\vtheta(\vx) = \frac{\exp(-E_\vtheta(\vx))}{Z_\vtheta}$, we observe that
\begin{align}
    \nabla_\vtheta q_\vtheta(\vx) 
    &= q_\vtheta(\vx) \nabla_\vtheta \log q_\vtheta(\vx) 
    = q_\vtheta(\vx) (-\nabla_\vtheta E_\vtheta(\vx) - \nabla_\vtheta \log Z_\vtheta)\\
    \nabla_\vtheta \log Z_\vtheta &= \frac{\int \exp(-\Et) (- \nabla_\vtheta \Et) \mathrm{d} \vx}{Z_\vtheta} = - \int q_\vtheta(\vx) \nabla_\vtheta \Et \mathrm{d} \vx
\end{align}

With Assumption~\ref{assumption:realizability} ($q_{\vtheta^*} = p$) and above observation, Equation~(\ref{eq:theta-hessian}) can be written as:
\begin{align*}
    \nabla_\vtheta^2 V(\vtheta, \vomega)\at{(\vtheta^*, \vomega^*)} = & \left(\int p(\vx) (A''(0) - B''(0)) \nabla_\vtheta \Et \nabla_\vtheta^\top \Et \mathrm{d} \vx - \right.\nonumber\\
    & ~~\int B'(0) \nabla_\vtheta q_\vtheta(\vx) \nabla_\vtheta^\top \Et \mathrm{d} \vx - \nonumber\\
    &~~\left. \int B'(0) \nabla_\vtheta \Et \nabla_\vtheta^\top q_\vtheta(\vx) \mathrm{d} \vx \right)\at[\Bigg]{\otheta}\\
    = & \left((A''(0) - B''(0)) \int p(\vx) \nabla_\vtheta \Et \nabla_\vtheta^\top \Et \mathrm{d} \vx - \right.\nonumber\\
    & ~~\int B'(0) q_\vtheta(\vx) (-\nabla_\vtheta \Et - \nabla_\vtheta \log Z_\vtheta) \nabla_\vtheta^\top \Et \mathrm{d} \vx - \nonumber\\
    &~~\left. \int B'(0) q_\vtheta(\vx) \nabla_\vtheta \Et (-\nabla^\top_\vtheta \Et - \nabla^\top_\vtheta \log Z_\vtheta) \mathrm{d} \vx \right)\at[\Bigg]{\otheta}\\
    = & \left((A''(0) - B''(0) + 2B'(0)) \int p(\vx) \nabla_\vtheta \Et \nabla_\vtheta^\top \Et \mathrm{d} \vx + \right.\nonumber\\
    &~~~ B'(0) \nabla_\vtheta \log Z_\vtheta \cdot \int q_\vtheta(\vx) \nabla_\vtheta^\top \Et \mathrm{d} \vx + \\
    &~~\left. B'(0) \int q_\vtheta(\vx) \nabla_\vtheta \Et \mathrm{d} \vx \cdot \nabla_\vtheta^\top \log Z_\vtheta \right)\at[\Bigg]{\otheta}\\
    = & \left((A''(0) - B''(0) + 2B'(0)) \int p(\vx) \nabla_\vtheta \Et \nabla_\vtheta^\top \Et \mathrm{d} \vx - \right.\nonumber\\
    &~~\left. 2B'(0) \nabla_\vtheta \log Z_\vtheta \nabla_\vtheta^\top \log Z_\vtheta \right)\at[\Bigg]{\otheta}
\end{align*}

With Theorem~\ref{theorem:AB}, we have:
\begin{align*}
    \nabla_\vtheta^2 V(\vtheta, \vomega)\at{(\vtheta^*, \vomega^*)} 
    =& f''(1) \left(\int p(\vx) \nabla_\vtheta \Et \nabla_\vtheta^\top \Et \mathrm{d} \vx - 2\nabla_\vtheta \log Z_\vtheta \nabla_\vtheta^\top \log Z_\vtheta\right)\at[\Bigg]{\otheta}\\
    =& f''(1) \left(\int p(\vx) \nabla_\vtheta \Et \nabla_\vtheta^\top \Et \mathrm{d} \vx - 2\int p(\vx) \nabla_\vtheta \Et \mathrm{d} \vx  \int p(\vx) \nabla_\vtheta^\top \Et \mathrm{d} \vx \right)\at[\Bigg]{\otheta}\\
    =& f''(1)(\bb{E}_{p(\vx)}[\nabla_\vtheta \Et \nabla_\vtheta^\top \Et] - 2\bb{E}_{p(\vx)}[\nabla_\vtheta \Et] \bb{E}_{p(\vx)}[\nabla_\vtheta^\top \Et])\at{\otheta}
\end{align*}

Now let us derive the first- and second-order derivatives of $V(\vtheta, \vomega)$ with respect to $\vomega$:
\begin{align*}
    \nabla_\vomega V(\vtheta, \vomega) = & \int p(\vx) A'(\Hw + \Et) \nabla_\vomega \Hw \mathrm{d} \vx - \int q_\vtheta(\vx) B'(\Hw + \Et) \nabla_\vomega \Hw \mathrm{d} \vx
\end{align*}
\begin{align*}
    \nabla_\vomega^2 V(\vtheta, \vomega) = & \int p(\vx) A''(\HE) \nabla_\vomega \Hw \nabla_\vomega^\top \Hw \mathrm{d} \vx + \\
    & \int p(\vx) A'(\HE) \nabla_\vomega^2 \Hw \mathrm{d} \vx - \\
    & \int q_\vtheta(\vx) B''(\HE) \nabla_\vomega \Hw \nabla_\vomega^\top \Hw \mathrm{d} \vx - \\
    & \int q_\vtheta(\vx) B'(\HE) \nabla_\vomega^2 \Hw \mathrm{d} \vx
\end{align*}

At the equilibrium point $(\otheta, \oomega)$, under Assumption~\ref{assumption:realizability}, we have:
\begin{align*}
    \nabla_\vomega^2 V(\vtheta, \vomega)\at{(\vtheta^*, \vomega^*)}  = \left((A''(0) - B''(0)) \int p(\vx)  \nabla_\vomega \Hw \nabla_\vomega^\top \Hw \mathrm{d} \vx + 
    (A'(0) - B'(0)) \int p(\vx)  \nabla_\vomega^2 \Hw \mathrm{d} \vx\right)\at[\Bigg]{\oomega}
\end{align*}
With Theorem~\ref{theorem:AB}, we have:
\begin{align*}
    \nabla_\vomega^2 V(\vtheta, \vomega)\at{(\vtheta^*, \vomega^*)}  =& - f''(1) \left(\int p(\vx)  \nabla_\vomega \Hw \nabla_\vomega^\top \Hw \mathrm{d} \vx\right)\at[\Bigg]{\oomega}\\
    =& -f''(1) (\bb{E}_{p(\vx)}[\nabla_\vomega \Hw \nabla_\vomega^\top \Hw])\at{\oomega}
\end{align*}

Finally, let us derive $\nabla_\vomega \nabla_\vtheta V(\vtheta, \vomega)$:
\begin{align*}
    \nabla_\vomega \nabla_\vtheta V(\vtheta, \vomega) =& \int p(\vx) A''(\HE) \nabla_\vtheta \Et \nabla_\vomega^\top \Hw \mathrm{d} \vx - \\
    & \int B'(\HE) \nabla_\vtheta q_\vtheta(\vx) \nabla_\vomega^\top \Hw \mathrm{d} \vx - \\
    & \int q_\vtheta(\vx) B''(\HE) \nabla_\vtheta \Et \nabla_\vomega^\top \Hw \mathrm{d} \vx
\end{align*}

At the equilibrium point $(\otheta, \oomega)$, under Assumption~\ref{assumption:realizability}, we have:
\begin{align*}
    \nabla_\vomega \nabla_\vtheta V(\vtheta, \vomega)\at{(\vtheta^*, \vomega^*)} =& \left((A''(0) - B''(0))\int p(\vx) \nabla_\vtheta \Et \nabla_\vomega^\top \Hw \mathrm{d} \vx - \right.\\
    &~~ \left. B'(0) \int  q_\vtheta(\vx)(- \nabla_\vtheta \Et - \nabla_\vtheta \log Z_\vtheta) \nabla_\vomega^\top \Hw \mathrm{d} \vx\right)\at[\Bigg]{(\otheta, \oomega)} \\
    =& \left((A''(0) - B''(0) + B'(0))\int p(\vx) \nabla_\vtheta \Et \nabla_\vomega^\top \Hw \mathrm{d} \vx + \right.\\
    &~~ \left. B'(0) \nabla_\vtheta \log Z_\vtheta \int  p(\vx) \nabla_\vomega^\top \Hw \mathrm{d} \vx\right)\at[\Bigg]{(\otheta, \oomega)} \\
    =& \left((A''(0) - B''(0) + B'(0))\int p(\vx) \nabla_\vtheta \Et \nabla_\vomega^\top \Hw \mathrm{d} \vx - \right.\\
    &~~ \left. B'(0) \int p(\vx) \nabla_\vtheta \Et \mathrm{d} \vx \int  p(\vx) \nabla_\vomega^\top \Hw \mathrm{d} \vx\right)\at[\Bigg]{(\otheta, \oomega)}
\end{align*}
With Theorem~\ref{theorem:AB}, we have:
\begin{align*}
    \nabla_\vomega \nabla_\vtheta V(\vtheta, \vomega)\at{(\vtheta^*, \vomega^*)} = - f''(1) (\bb{E}_{p(\vx)} [\nabla_\vtheta \Et]\bb{E}_{p(\vx)}[\nabla_\vomega^\top \Hw])\at{(\otheta, \oomega)}
\end{align*}
Similarly, for $\nabla_\vtheta \nabla_\vomega V(\vtheta, \vomega)$, we have:
\begin{align*}
    \nabla_\vtheta \nabla_\vomega V(\vtheta, \vomega)\at{(\vtheta^*, \vomega^*)} = - f''(1) (\bb{E}_{p(\vx)} [\nabla_\vomega \Hw]\bb{E}_{p(\vx)}[\nabla_\vtheta^\top \Et])\at{(\otheta, \oomega)}
\end{align*}
\end{proof}

Now we are ready to present the main theorem:
\begin{theorem}
The dynamical system defined in Equation~(\ref{eq:new-minimax-game}) and the updates defined in Equation~(\ref{eq:theta-omega-updates}) is locally exponentially stable with respect to an equilibrium point ($\otheta, \oomega$) when the Assumptions~\ref{assumption:realizability},~\ref{assumption:energy-function},~\ref{assumption:full-rank} hold for ($\otheta, \oomega$). Let $\lambda_\mathrm{max}(\cdot)$ and $\lambda_\mathrm{min}(\cdot)$ denote the largest and smallest eigenvalues of a non-zero positive semi-definite matrix. The rate of convergence is governed only by the eigenvalues $\lambda$ of the Jacobian $\vJ$ of the system at the equilibrium point, with a strictly negative real part upper bounded as:
\begin{itemize}
    \item When $\mathrm{Im}(\lambda) = 0$, 
    \begin{align*}
    \mathrm{Re}(\lambda) < -f''(1) \frac{\lambda_\mathrm{min}(\vK_{HH}) \lambda_\mathrm{min}(\vK_{EH} \vK_{EH}^\top)}{\lambda_\mathrm{min}(\vK_{HH}) \lambda_\mathrm{max}(\vK_{EE}, \vK_{HH}) + \lambda_\mathrm{min}(\vK_{EH} \vK_{EH}^\top)} < 0
    \end{align*}
    where $\lambda_\mathrm{max}(\vK_{EE}, \vK_{HH}) = \max(\lambda_\mathrm{max}(\vK_{EE}), \lambda_\mathrm{max}(\vK_{HH}))$.
    \item When $\mathrm{Im}(\lambda) \neq 0$,
    \begin{align*}
    \mathrm{Re}(\lambda) \leq -\frac{f''(1)}{2} (\lambda_\mathrm{min}(\vK_{EE}) + \lambda_\mathrm{min}(\vK_{HH})) < 0
    \end{align*}
\end{itemize}
\end{theorem}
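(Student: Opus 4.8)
The plan is to present the statement as an immediate corollary of three ingredients already established: the explicit equilibrium Jacobian from Theorem~\ref{theorem:jacobian}, the generalized spectral bound of Theorem~\ref{theorem:new-eigenvalue-bound}, and the linearization theorem (Theorem~\ref{theorem:equilibrium-hurwitz}). First I would use that $f$ is strictly convex, so $f''(1) > 0$ (as recorded in Theorem~\ref{theorem:AB}), and factor the Jacobian of Theorem~\ref{theorem:jacobian} as $\vJ = f''(1)\,\vJ'$ with
\begin{align*}
    \vJ' = \begin{bmatrix} -\vK_{EE} & \vK_{EH}\\ -\vK_{EH}^\top & -\vK_{HH}\end{bmatrix}.
\end{align*}
Since multiplication by the positive constant $f''(1)$ scales every eigenvalue of $\vJ'$ by $f''(1)$, it suffices to show that $\vJ'$ is Hurwitz with the claimed (unscaled) real-part bounds, and then multiply through.

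Next I would check that $\vJ'$ fits the template of Theorem~\ref{theorem:new-eigenvalue-bound} with $\vS = \vK_{EE}$, $\vQ = \vK_{HH}$, $\vP = \vK_{EH}$. Symmetry of $\vK_{EE}$ and $\vK_{HH}$ is immediate from their definitions. Positive semidefiniteness of $\vS = \vK_{EE}$ is exactly Assumption~\ref{assumption:energy-function}. For $\vQ = \vK_{HH} = (\bb{E}_{p(\vx)}[\nabla_\vomega \Hw \nabla_\vomega^\top \Hw])\at{\oomega}$: as a moment (Gram) matrix it is positive semidefinite, and Assumption~\ref{assumption:full-rank} (full column rank of this matrix) promotes it to positive definite. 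Finally $\vP^\top = \vK_{EH}^\top = (\bb{E}_{p(\vx)}[\nabla_\vomega \Hw]\,\bb{E}_{p(\vx)}[\nabla_\vtheta^\top \Et])\at{(\otheta,\oomega)}$ has full column rank, again by Assumption~\ref{assumption:full-rank}. Applying Theorem~\ref{theorem:new-eigenvalue-bound} then yields $\mathrm{Re}(\lambda') < 0$ for every eigenvalue $\lambda'$ of $\vJ'$, together with the $\mathrm{Im} = 0$ bound involving $\lambda_\mathrm{min}(\vK_{HH})$, $\lambda_\mathrm{min}(\vK_{EH}\vK_{EH}^\top)$ and $\lambda_\mathrm{max}(\vK_{EE},\vK_{HH})$, and the $\mathrm{Im} \neq 0$ bound $-\tfrac12(\lambda_\mathrm{min}(\vK_{EE}) + \lambda_\mathrm{min}(\vK_{HH}))$; multiplying these by $f''(1)$ recovers exactly the two displayed inequalities.

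Finally I would conclude with Theorem~\ref{theorem:equilibrium-hurwitz}: the vector field $v(\vtheta,\vomega) = (-\nabla_\vtheta V,\, \nabla_\vomega V)$ is continuously differentiable near the equilibrium (smoothness of $E_\vtheta$ and $H_\vomega$), and its Jacobian $\vJ = f''(1)\vJ'$ at $(\otheta,\oomega)$ is Hurwitz, so $(\otheta,\oomega)$ is locally exponentially (hence asymptotically) stable, with decay rate controlled by $\max_\lambda \mathrm{Re}(\lambda)$, which the bounds above quantify. I would then note, as already observed before Theorem~\ref{theorem:jacobian}, that the alternating-update scheme of Algorithm~\ref{alg:f-ebm} has the same Jacobian at the equilibrium, so the identical conclusion applies to it. The only place requiring care is the hypothesis-checking step: confirming that Assumption~\ref{assumption:full-rank} simultaneously supplies positive definiteness of $\vK_{HH}$ and full column rank of $\vK_{EH}^\top$, and tracking the $f''(1)$ factor so the scaled bounds match the statement verbatim. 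All the heavy lifting — the equilibrium Jacobian computation (relying on Theorem~\ref{theorem:AB} and the realizability Assumption~\ref{assumption:realizability}) and the extension of the Nagarajan--Kolter spectral lemma to the $-\vS$ block — has been absorbed into the earlier theorems, so this final argument is essentially an assembly of those results.
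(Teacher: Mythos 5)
Your proposal is correct and follows essentially the same route as the paper's own proof: identify the equilibrium Jacobian from Theorem~\ref{theorem:jacobian}, verify via Assumptions~\ref{assumption:energy-function} and \ref{assumption:full-rank} (plus $f''(1)>0$ from strict convexity) that the blocks satisfy the hypotheses of the generalized spectral bound in Theorem~\ref{theorem:new-eigenvalue-bound}, and conclude local exponential stability via the linearization result in Theorem~\ref{theorem:equilibrium-hurwitz}. Your explicit factoring of $f''(1)$ out of $\vJ$ before applying the spectral lemma is only a cosmetic reorganization of the same argument.
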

\begin{proof}
In Theorem~\ref{theorem:jacobian}, for any $f$-divergences with strictly convex and differentiable generator function $f$, we derived the Jacobian of the system $\vJ$ under Assumption~\ref{assumption:realizability}. 
With Assumptions~\ref{assumption:energy-function} and \ref{assumption:full-rank}, and the strict convexity of the function $f$, we know that $f''(1) \vK_{EE}$ is positive semi-definite, $f''(1) \vK_{HH}$ is positive definite (a full rank moment matrix with a positive multiplicative factor), and $f''(1)\vK_{EH}^\top$ is full column rank.
Therefore, with Theorem~\ref{theorem:new-eigenvalue-bound}, we know that the Jacobian $\vJ$ is a Hurwitz matrix, (\emph{i.e.}, all the eigenvalues of $\vJ$ have strictly negative real parts). Furthermore, with Theorem~\ref{theorem:new-eigenvalue-bound}, we can obtain an upper bound of the real parts of the eigenvalues. Finally, with Theorem~\ref{theorem:equilibrium-hurwitz}, we can conclude that the system is locally exponentially stable.
\end{proof}

\section{Implementation Details}
\subsection{Implementation of $f$-EBM Algorithm in PyTorch}\label{app:implementation}
\begin{Verbatim}[numbers=left, xleftmargin=5mm]
def update_H(real_x, fake_x, model_h, model_e, optim_h, grad_exp, conjugate_grad_exp):
    // Step 6-7 in Algorithm 1. Update the parameter of the variational function.
    // - real_x and fake_x are samples from the data distribution and EBM respectively.
    // - model_h is the neural network for the variational function $H_\vomega$.
    // - model_e is the neural network for the energy function $E_\vtheta$.
    // - optim_h is the optimizer for model_h, e.g. torch.optim.SGD(model_h.parameters())
    // - grad_exp and conjugate_grad_exp are functions defined by the used f-divergence.
    real_e, fake_e = model_e(real_x), model_e(fake_x)
    real_h, fake_h = model_h(real_x), model_h(fake_x)
    loss_h = -(grad_exp(real_h+real_e) - conjugate_grad_exp(fake_h+fake_e)).mean()
    optim_h.zero_grad()
    loss_h.backward()
    optim_h.step()

def update_E(real_x, fake_x, model_h, model_e, optim_e, grad_exp, conjugate_grad_exp):
    // Step 8-9 in Algorithm 1. Update the parameter of the energy function.
    // - optim_e is the optimizer for model_e, e.g. torch.optim.SGD(model_e.parameters())
    real_e, fake_e = model_e(real_x), model_e(fake_x)
    real_h, fake_h = model_h(real_x), model_h(fake_x)
    loss_e = torch.mean(grad_exp(real_h+real_e)) + \
             torch.mean(conjugate_grad_exp(fake_h+fake_e).detach() * fake_e) - \
             torch.mean(conjugate_grad_exp(fake_h+fake_e)) - \
             torch.mean(fake_e) * torch.mean(conjugate_grad_exp(fake_h+fake_e)).detach()
    optim_e.zero_grad()
    loss_e.backward()
    optim_e.step()
\end{Verbatim}
Note that according to Lemma~\ref{lemma:product-of-expectations}, technically we should use two independent batches of data to estimate the two expectations in the product $\bb{E}_{q_\vtheta(\vx)}[\nabla_\vtheta E_\vtheta(\vx)] \cdot \bb{E}_{q_\vtheta(\vx)}[f^*(f'(\exp(E_\vtheta(\vx) + H_\vomega(\vx))))]$. Empirically we found that using the same set of samples also works well, since it is an asymptotically consistent estimator.

To implement $f$-EBM in Tensorflow \cite{abadi2016tensorflow}, the main change is to use $\texttt{tf.stop\_gradient(X)}$ to replace \texttt{X.detach()}. Functions \texttt{grad\_exp} and \texttt{conjugate\_grad\_exp} correspond to $f'(\exp(u))$ and $f^*(f'(\exp(u)))$, which can be derived based on the definitions of $f$-divergences (see Table 5 and Table 6 in \cite{nowozin2016f} for reference). Some examples of $f'(\exp(u))$ and $f^*(f'(\exp(u)))$ can be found in Table~\ref{tab:f-divergences}.

\begin{table}[h!bt]
\begin{center}
\scalebox{0.79}{%
\begin{tabular}{llll}
\toprule
Name & $D_f(P\|Q)$ & $f'(\exp(u))$ & $f^*(f'(\exp(u)))$\\ \midrule
Kullback-Leibler
& $\int p(\vx) \log \frac{p(\vx)}{q(\vx)} \, \mathrm{d} \vx$
& $1 + u$
& $\exp(u)$\\
Reverse Kullback-Leibler
& $\int q(\vx) \log \frac{q(\vx)}{p(\vx)} \, \mathrm{d} \vx$
& $-\exp(- u)$
& $-1 + u$\\
Pearson $\chi^2$
& $\int \frac{(q(\vx)-p(\vx))^2}{p(\vx)}\, \mathrm{d} \vx$
& $2\exp(u) - 2$
& $\exp(2u) - 1$\\
Neyman $\chi^2$
& $\int \frac{(p(\vx) - q(\vx))^2}{q(\vx)} \, \mathrm{d} \vx$
& $1 - \exp(-2u)$
& $2 - 2 \exp(-u)$\\
Squared Hellinger
& $\int\left(\sqrt{p(\vx)} - \sqrt{q(\vx)}\right)^2 \, \mathrm{d} \vx$
& $1 - \exp(- \frac{u}{2})$
& $\exp(\frac{u}{2}) - 1$\\
Jensen-Shannon
& $\frac{1}{2} \int p(\vx) \log \frac{2 p(\vx)}{p(\vx)+q(\vx)}
  + q(\vx) \log \frac{2 q(\vx)}{p(\vx) + q(\vx)}\, \mathrm{d} \vx$
& $\log(2) + u - \log(1 + \exp(u))$
& $- \log(2) + \log(1 + \exp(u))$\\
$\alpha$-divergence ($\alpha \notin \{0,1\}$)
& $\frac{1}{\alpha (\alpha-1)} \int
  \left(p(\vx) \left[\left(\frac{q(\vx)}{p(\vx)}\right)^{\alpha}-1\right] - \alpha(q(\vx)-p(\vx))\right) \, \mathrm{d} \vx$
& $\frac{1}{\alpha - 1}(\exp((\alpha - 1) u) - 1)$
& $\frac{1}{\alpha} (\exp(\alpha u) - 1)$
\\
\bottomrule
\end{tabular}
}%
\end{center}
\vspace{-5pt}
\caption{Some examples of $f$-divergences and corresponding $f'(\exp(u))$ and $f^*(f'(\exp(u)))$ functions.
}
\vspace{-5pt}
\label{tab:f-divergences}
\end{table}

\subsection{Discussion on Differentiating Through Langevin Dynamics}\label{app:differetiate-langevin}
In this section, we provide a more detailed discussion on gradient reparametrization that was initially introduced in Section~\ref{sec:challenge-f-gan}. Specifically, we will discuss the possibility of directly extending the $f$-GANs framework by differentiating through the Langevin dynamics. As discussed before, we typically need hundreds of Langevin steps to produce a single sample while we cannot use a sample replay buffer to reduce the number of transition steps (because the initial distribution of the Markov chain cannot depend on the model parameters). During gradient backpropagation, we need the same number of backward steps, where each backward step further involves computing Hessian matrices that are proportional to the parameter and data dimension. This will lead to hundreds of times more memory consumption compared to only using Langevin dynamics for producing samples. More specifically, we provide the following implementation to differentiate through Langevin dynamics in PyTorch:
\vspace{-5pt}
\begin{Verbatim}[numbers=left, xleftmargin=5mm]
def train_energy(model_e, model_dis, optim_e, conjugate_fn, device='cuda'):
    // Initialize Langevin dynamics with random uniform distribution.
    fake_x = torch.rand(batch_size, channel_num, img_size, img_size, device=device)
    fake_x.requires_grad = True
    gaussian_noise = torch.randn(batch_size, 3, 32, 32, device=device)

    for k in range(num_langevin_steps):
        energy_value = model_e(fake_x)
        fake_x_grad = torch.autograd.grad(energy_value.sum(), fake_x, 
                                          create_graph=True)[0]
        gaussian_noise.normal_(0, gaussian_noise_std)
        fake_x = fake_x - step_size * fake_x_grad + gaussian_noise
        fake_x.data.clamp_(0, 1)
        
    fake_dis = model_dis(fake_x)
    energy_loss = - torch.mean(conjugate_fn(fake_dis))
    optim_e.zero_grad()
    energy_loss.backward()
    optim_e.step()
\end{Verbatim}
\vspace{-5pt}
Note that in Line~10, we need to set $\texttt{create\_graph=True}$ in order to compute the second-order derivatives when backpropagating through Langevin dynamics later, which will store all the computation graphs along the Langevin dynamics. By contrast, in $f$-EBMs we only use Langevin dynamics to produce samples and once we get the value of the gradient $\nabla_\vx E_\vtheta(\vx)$, we can discard the computational graph immediately after each transition step ($\texttt{create\_graph=False}$). As a result, with gradient reparametrization, the memory consumption grows linearly as the number of transitions. With modern GPU such as NVIDIA GeForce RTX 2080 Ti and the model architecture in Figure~\ref{fig:architecture}, we will run out of memory when the number of Langevin steps is larger than $5$. Since gradient backpropagation through Langevin dynamics involves computing many Hessian matrices, this approach is also computationally less efficient compared to $f$-EBMs. In our experiments, we set $\texttt{num\_langevin\_steps}=5$ and we observed that this approach cannot produce reasonable images.

\section{Additional Experimental Results for Fitting Univariate Mixture of Gaussians}
\subsection{Parameter Learning Results}\label{app:gaussian-parameter-learning}
\begin{table}[thb]
\begin{center}
\begin{tabular}{lcccc}
\toprule
Objective & $\mu^*$ & $\hat{\mu}$ & $\sigma^*$ & $\hat{\sigma}$\\ \midrule
Contrastive Divergence
& $1.01065$
& $1.01204$
& $1.82895$
& $1.82907$\\ \midrule
Kullback-Leibler
& $1.01065$
& $1.01536$
& $1.82895$
& $1.83024$\\
Reverse KL
& $1.58454$
& $1.58523$
& $1.63106$
& $1.63453$\\
Squared Hellinger
& $1.32024$
& $1.32274$
& $1.73089$
& $1.74710$\\
Jensen Shannon
& $1.30322$
& $1.31669$
& $1.76716$
& $1.76041$\\
Pearson $\chi^2$
& $0.57581$
& $0.56563$
& $1.92172$
& $1.93461$\\
Neyman $\chi^2$
& $1.83037$
& $1.82676$
& $1.51508$
& $1.51598$\\
$\alpha$-divergence ($\alpha=-0.5$)
& $1.74642$
& $1.74332$
& $1.55569$
& $1.55209$
\\
$\alpha$-divergence ($\alpha=-1$)
& $1.82923$
& $1.81979$
& $1.51844$
& $1.52513$\\
$\alpha$-divergence ($\alpha=0.9$)
& $1.07056$
& $1.07237$
& $1.81091$
& $1.81852$\\
\bottomrule
\end{tabular}
\end{center}
\caption{Fitting a quadratic EBM to mixtures of Gaussians. $\mu^*, \sigma^*$ represent the desired optimal solution under a certain discrepancy measure, and $\hat{\mu}, \hat{\sigma}$ represent the learned parameters. The first row is for contrastive divergence (with KL divergence being the underlying objective). The other rows are for $f$-EBMs with various discrepancy measures as the training objectives. 
}
\label{tab:gaussian-result}
\end{table}

\subsection{Density Ratio Estimation Results}\label{app:density-ratio}
\begin{figure*}[h!tb]
\centering
\subfigure[KL]{
\includegraphics[height=1.8in]{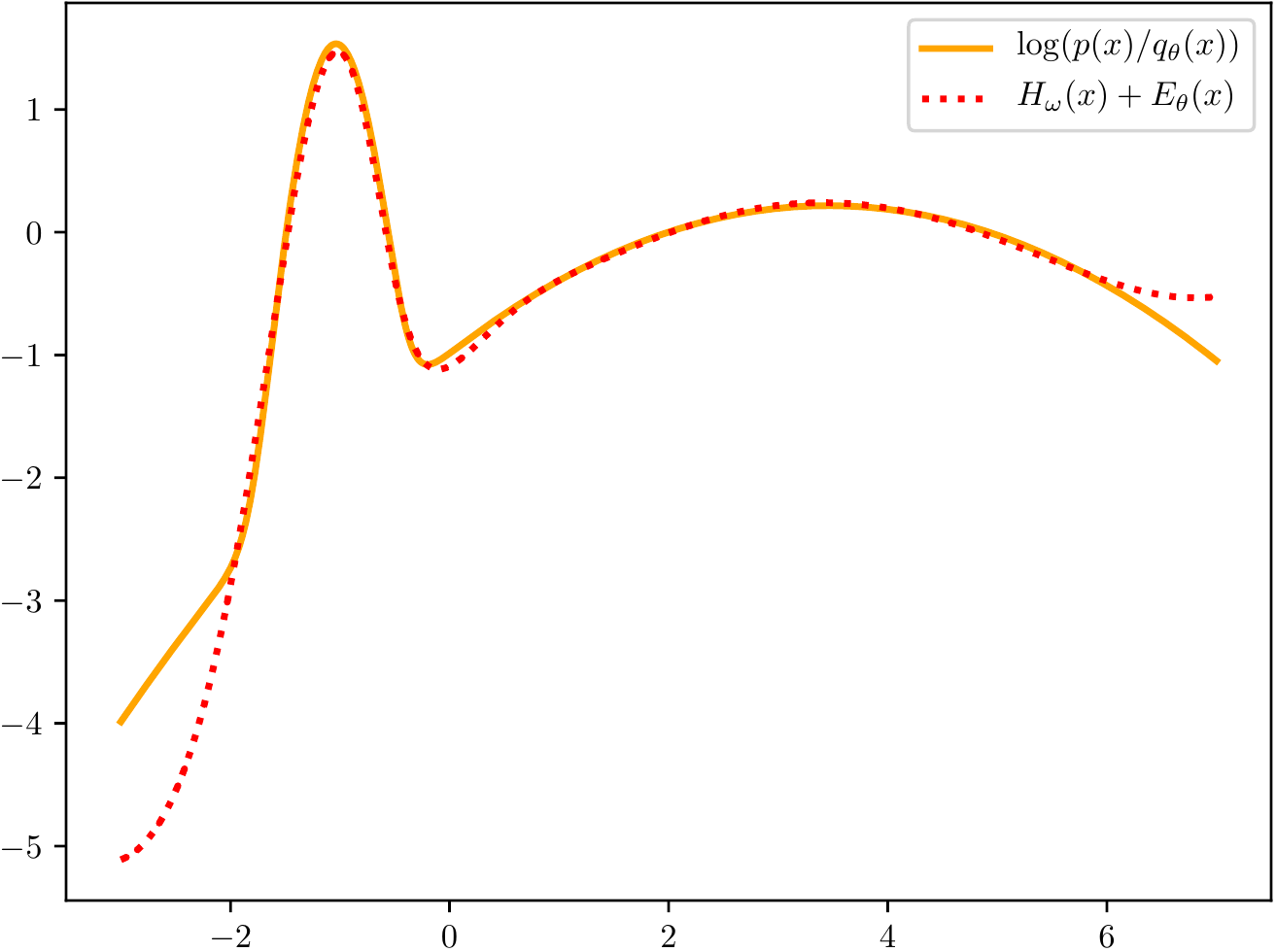}\label{fig:kl_dr}}
\subfigure[Reverse KL]{
\includegraphics[height=1.8in]{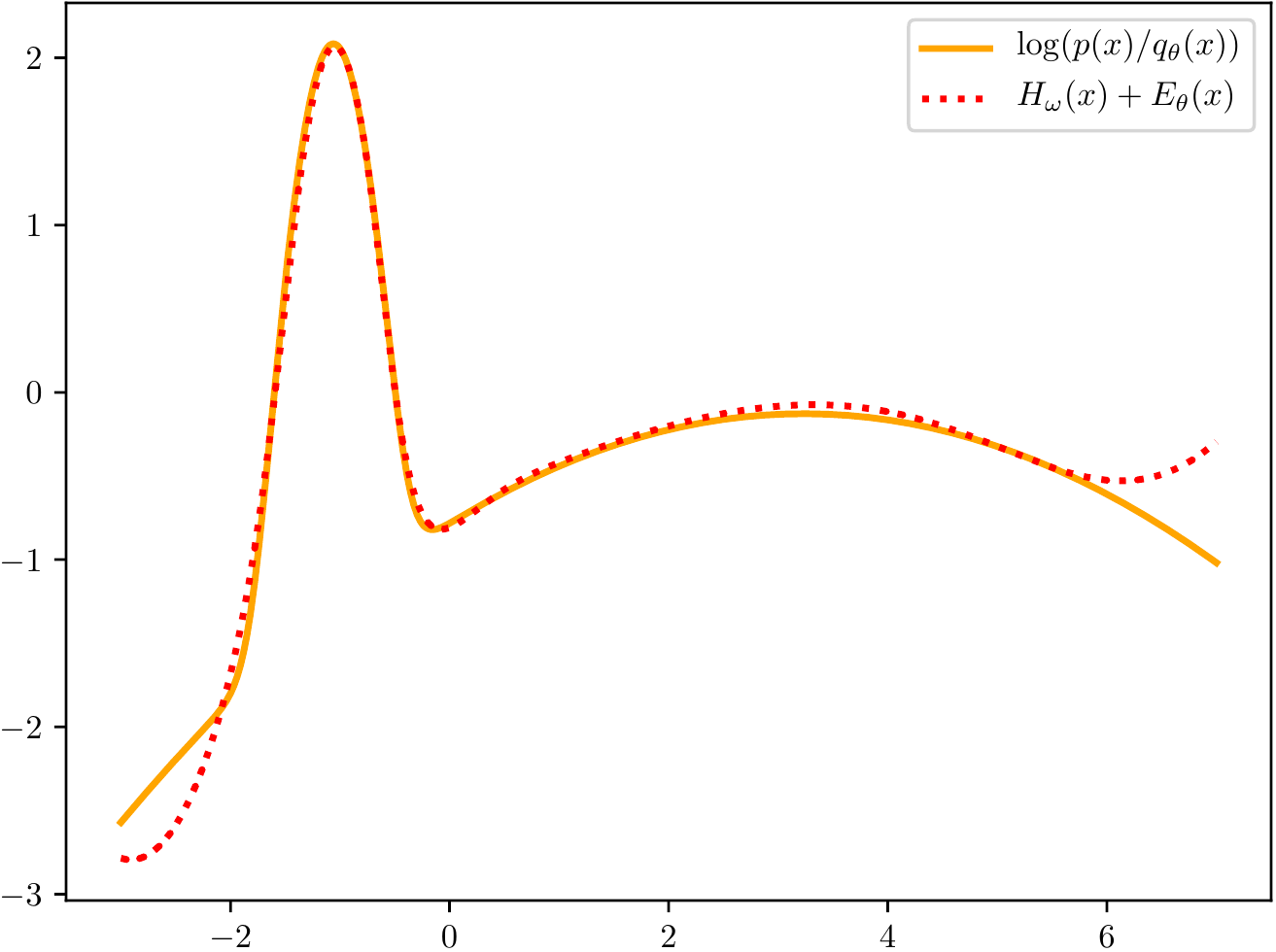}\label{fig:rev-kl_dr}}
\subfigure[Jensen-Shannon]{
\includegraphics[height=1.8in]{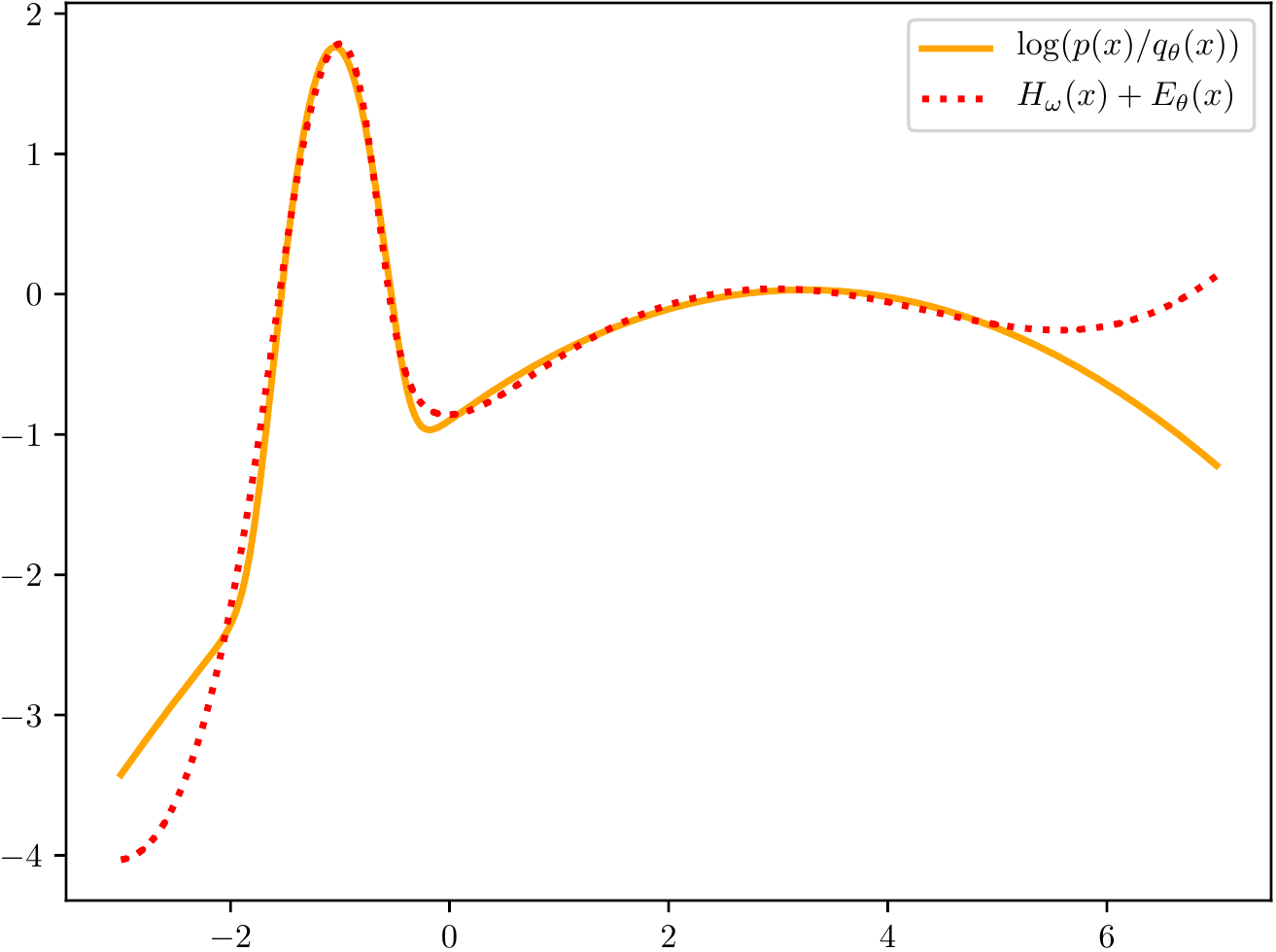}\label{fig:jensen-shannon_dr}}
\subfigure[$\alpha$-Divergence ($\alpha=-1$)]{
\includegraphics[height=1.8in]{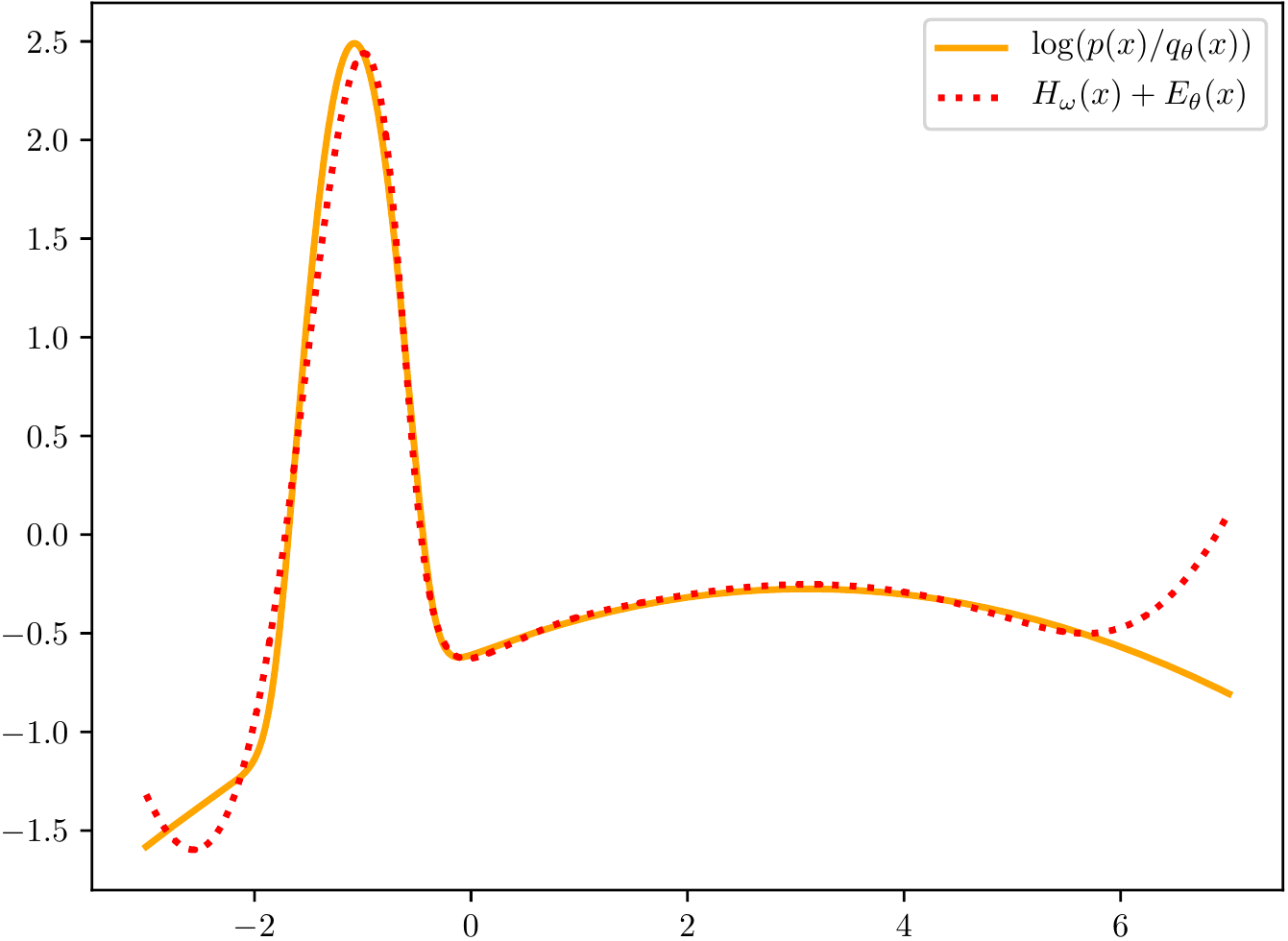}\label{fig:alpha-div_dr}}
\caption{Density ratio estimation results for different $f$-divergences. The orange solid line represents the ground truth log-scale density ratio ($\log(p(x)/q_\theta(x))$) under a certain divergence; The red dashed line represents the estimated log-scale density ratio ($H_\omega(x) + E_\theta(x)$) learned by $f$-EBM. Note that the estimated density ratio is accurate in most areas except in the low density regimes (\emph{e.g.} $(-\infty, -2]$ and $[6, +\infty)$) where very few training data comes from this region.}
\label{fig:divergences_dr}
\end{figure*}

\newpage
\subsection{Optimization Trajectories of Single-Step $f$-EBM}\label{app:optimization-trajectory}
\FloatBarrier
\vfill
\begin{figure*}[h!tb]
\centering
\subfigure[KL]{
\includegraphics[height=2.5in]{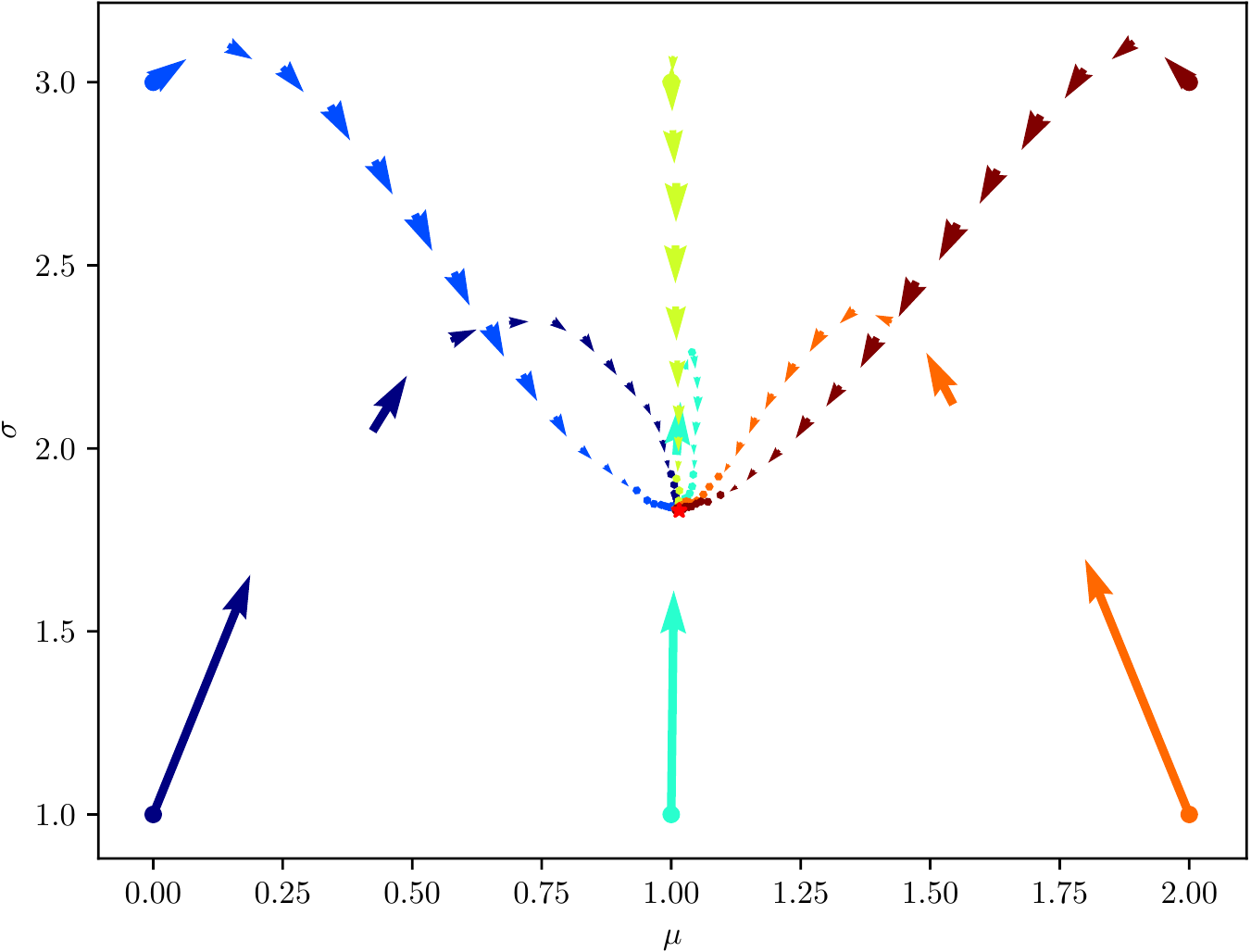}\label{fig:kl_grad}}
\subfigure[Reverse KL]{
\includegraphics[height=2.5in]{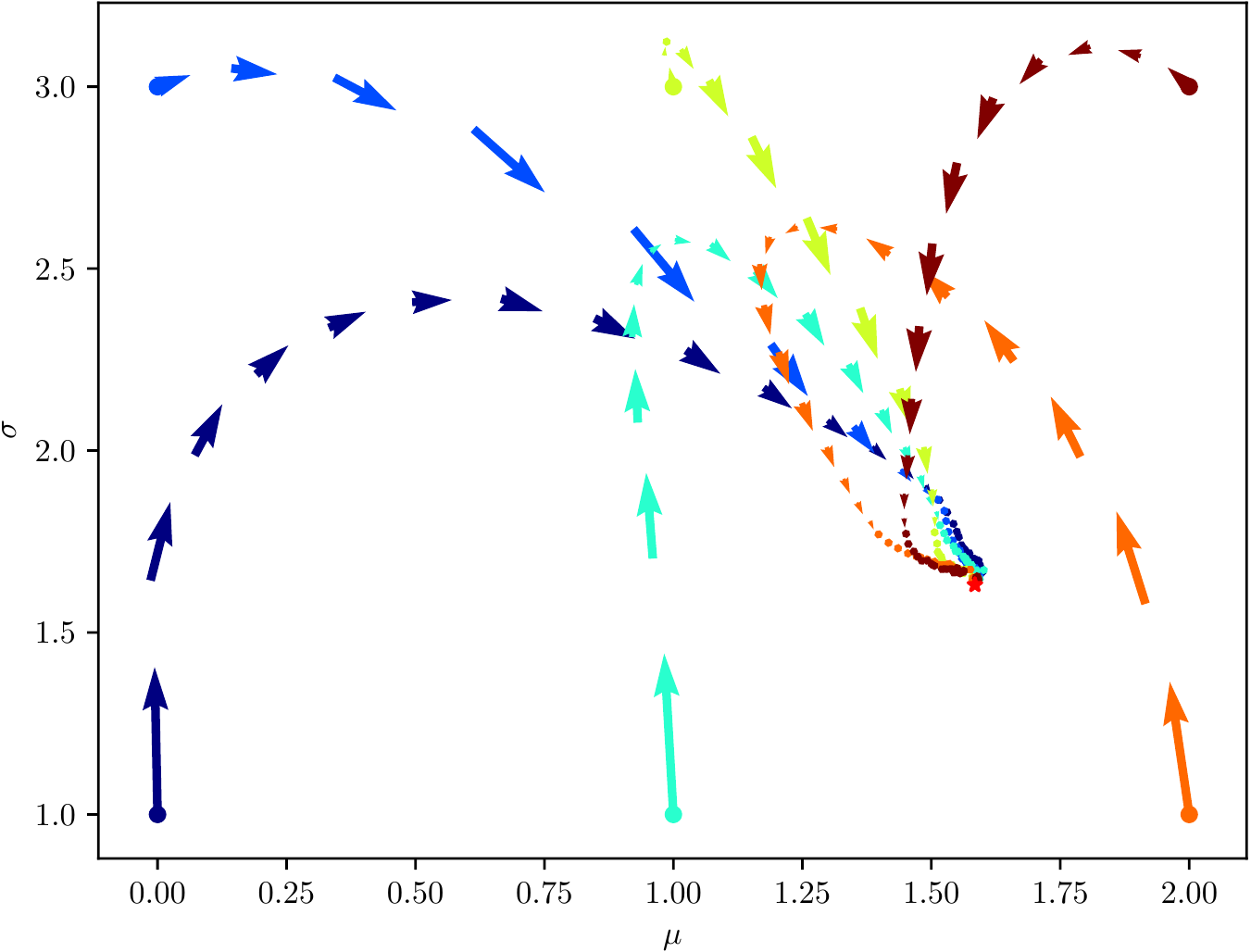}\label{fig:rev-kl_grad}}
\subfigure[Jensen-Shannon]{
\includegraphics[height=2.5in]{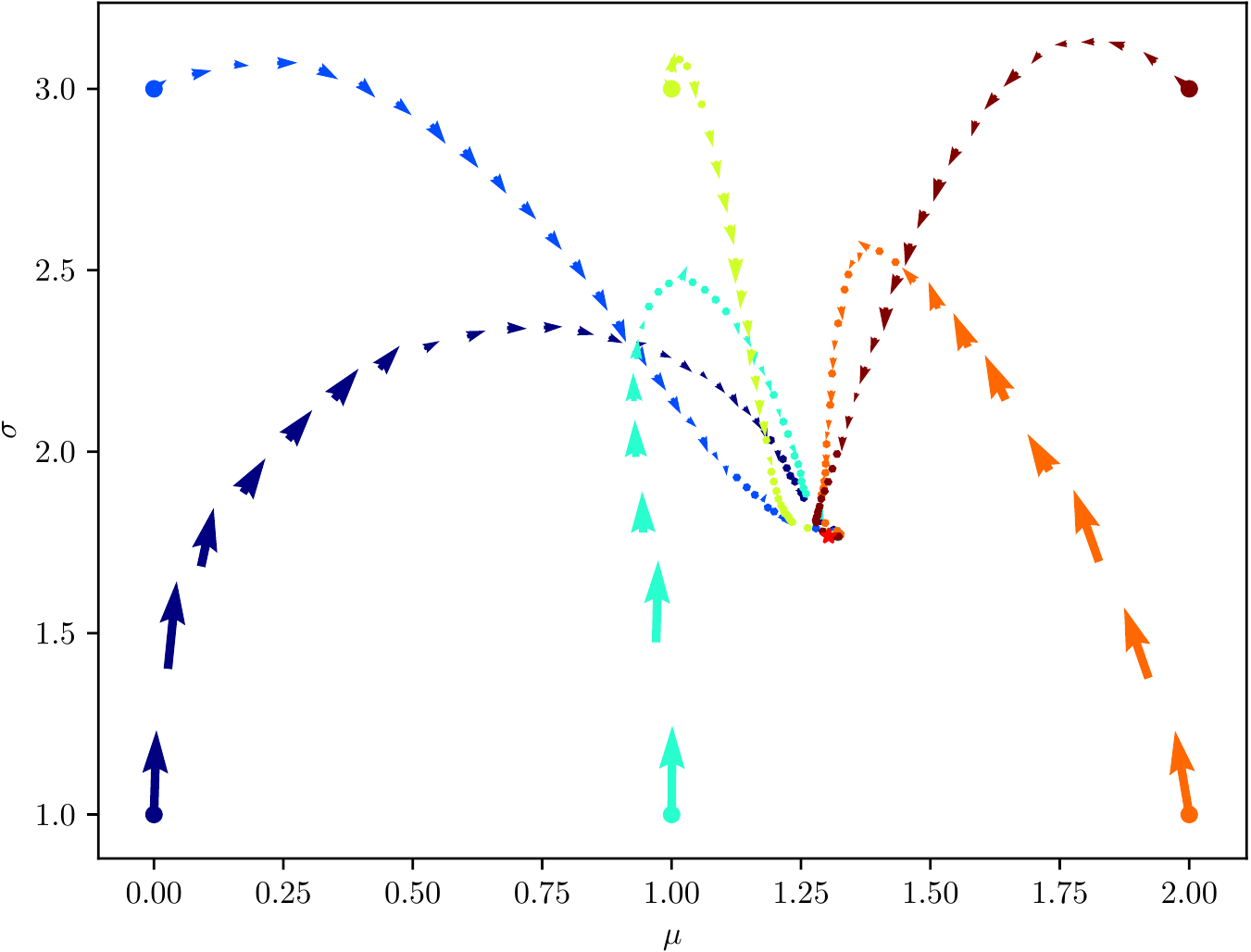}\label{fig:jensen-shannon_grad}}
\subfigure[Squared-Hellinger]{
\includegraphics[height=2.5in]{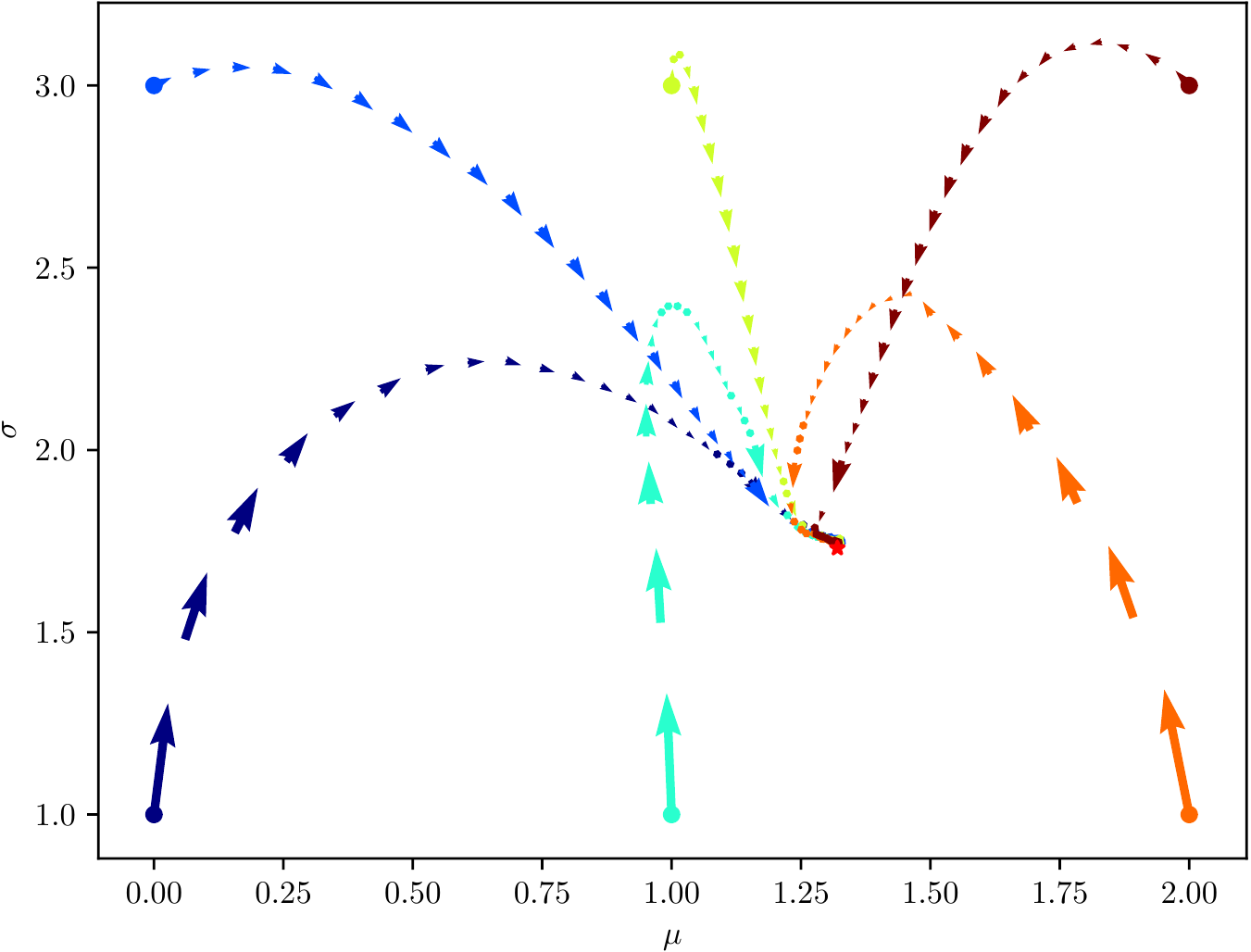}\label{fig:squared-hellinger_grad}}
\caption{Convergence of Single-Step $f$-EBM algorithm for different $f$-divergences. Optimization trajectories in different colors start from different initializations. The length and direction of the arrows represent the scale and the direction of the gradient at a certain point. The red stars that the trajectories converge to represent the desired optimal solutions under corresponding $f$-divergences.}
\label{fig:divergences_grad}
\end{figure*}
\FloatBarrier
\vfill

\newpage
\section{Additional Experimental Details for Modeling Natural Images}
\subsection{Samples from The Baseline Method in Section~\ref{sec:challenges}}\label{app:baseline-samples}
\begin{figure*}[h]
\centering
\subfigure[Reverse KL]{
\includegraphics[height=2in]{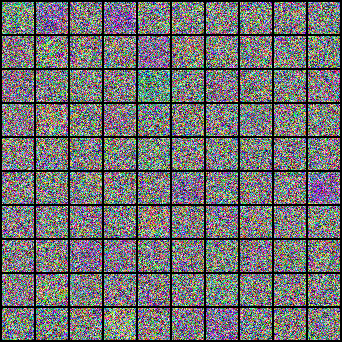}\label{fig:revkl_baseline}}
\subfigure[Jensen Shannon]{
\includegraphics[height=2in]{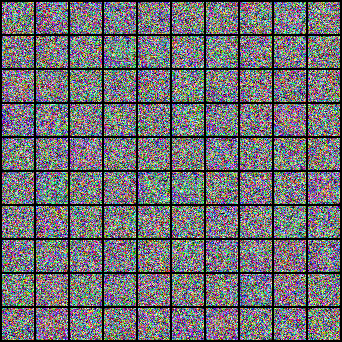}\label{fig:jensen-shannon_baseline}}
\subfigure[Squared Hellinger]{
\includegraphics[height=2in]{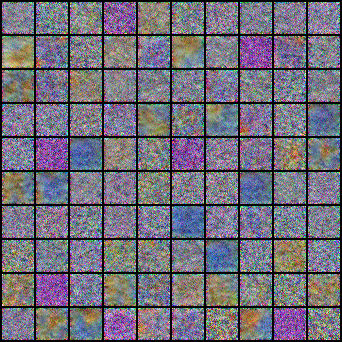}\label{fig:squared_hellinger_baseline}}
\vspace{-10pt}
\caption{Uncurated samples from EBMs trained by the baseline approach described in Section~\ref{sec:challenges} on CIFAR-10 dataset.}
\label{fig:divergences_baseline}
\end{figure*}

\subsection{Uncurated CIFAR-10 Samples from $f$-EBM}\label{app:cifar10-samples}
\afterpage{%
\begin{figure*}[htb]
\centering
\includegraphics[height=4in]{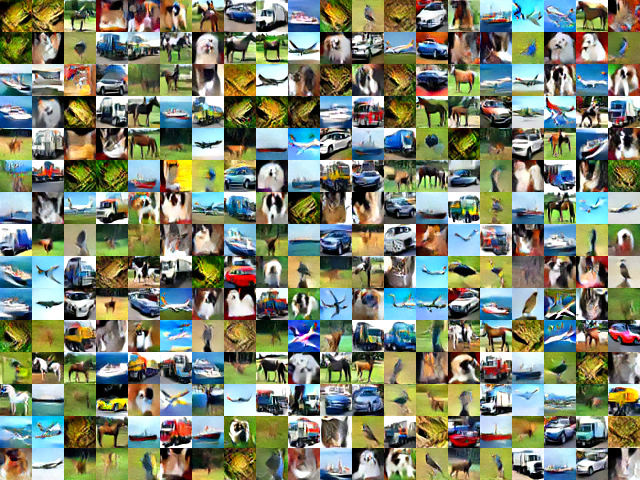}\label{fig:sampling_reverse_kl}
\vspace{-10pt}
\caption{Uncurated CIFAR-10 samples from $f$-EBM under the guidance of Reverse KL.}
\end{figure*}
\clearpage
}

\begin{figure*}[htb]
\centering
\includegraphics[height=4in]{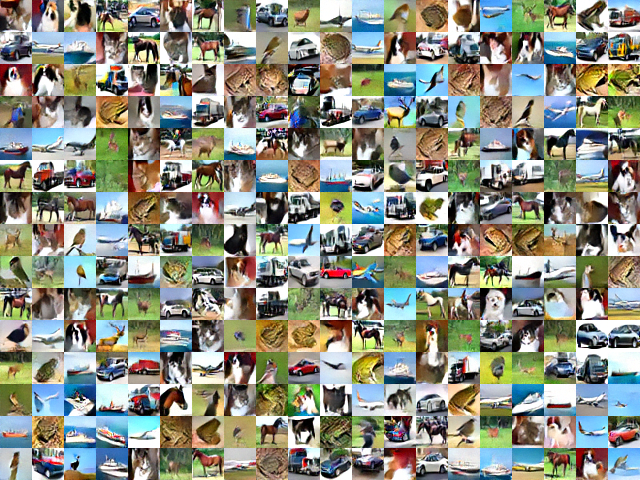}\label{fig:sampling_js}
\vspace{-10pt}
\caption{Uncurated CIFAR-10 samples from $f$-EBM under the guidance of Jensen Shannon.}
\end{figure*}
\begin{figure*}[htb]
\centering
\includegraphics[height=4in]{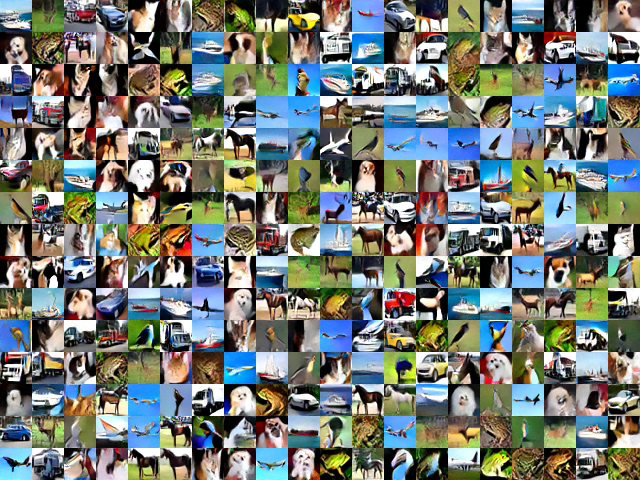}\label{fig:sampling_sh}
\vspace{-10pt}
\caption{Uncurated CIFAR-10 samples from $f$-EBM under the guidance of Squared Hellinger.}
\end{figure*}

\newpage
\subsection{Image Inpainting Results}\label{app:inpainting}
\begin{figure*}[h!]
\centering
\subfigure[Reverse KL]{
\includegraphics[height=2.4in]{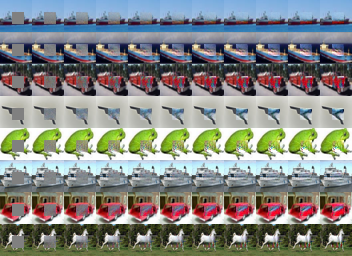}\label{fig:boxcorrupt_reverse_kl}}
~~~~~~
\subfigure[Jensen Shannon]{
\includegraphics[height=2.4in]{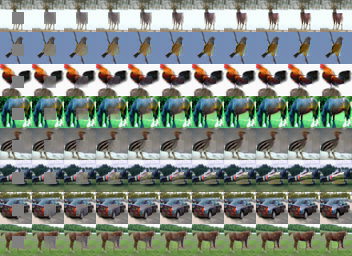}\label{fig:boxcorrupt_js}}
~~~~~~
\subfigure[Squared Hellinger]{
\includegraphics[height=2.4in]{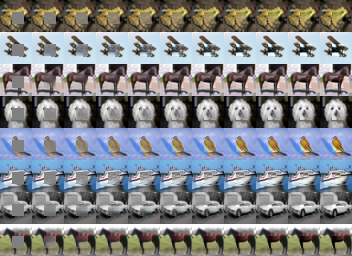}\label{fig:boxcorrupt_squared_hellinger}}
\vspace{-5pt}
\caption{Image inpainting results for $f$-EBM trained with different $f$-divergences on CIFAR-10. We use Langevin dynamics sampling to restore the images which are corrupted by empty boxes.}
\label{fig:image_inpainting}
\end{figure*}

\newpage
\subsection{Image Denoising Results}\label{app:denoising}
\begin{figure*}[h!]
\centering
\subfigure[Reverse KL]{
\includegraphics[height=7.6in]{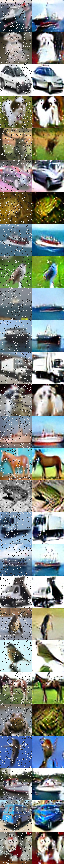}\label{fig:anticorrupt_reverse_kl}}
~~~~~~~~~~~~
\subfigure[Jensen Shannon]{
\includegraphics[height=7.6in]{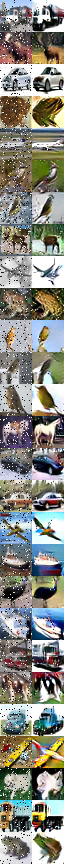}\label{fig:anticorrupt_js}}
~~~~~~~~~~~~
\subfigure[Squared Hellinger]{
\includegraphics[height=7.6in]{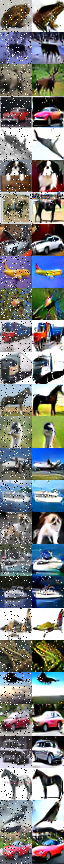}\label{fig:anticorrupt_new_squared_hellinger}}
\caption{Image denoising results for $f$-EBM trained with different $f$-divergences on CIFAR-10. We apply $10\%$ ``salt and pepper'' noise to the images in the test set and use Langevin dynamics sampling to restore the images.}
\label{fig:image_denoising}
\end{figure*}

\newpage
\subsection{Nearest Neighbor Images}\label{app:nearest-neighbor}
\begin{figure*}[h!]
\centering
\subfigure[Reverse KL]{
\includegraphics[height=2.4in]{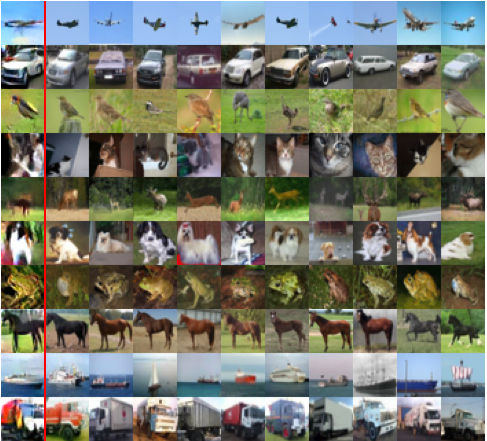}\label{fig:nearest_reverse_kl}}
\\
\subfigure[Jensen Shannon]{
\includegraphics[height=2.4in]{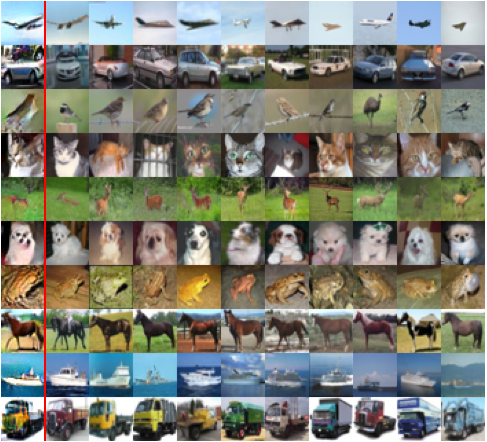}\label{fig:nearest_new_js}}
\\
\subfigure[Squared Hellinger]{
\includegraphics[height=2.4in]{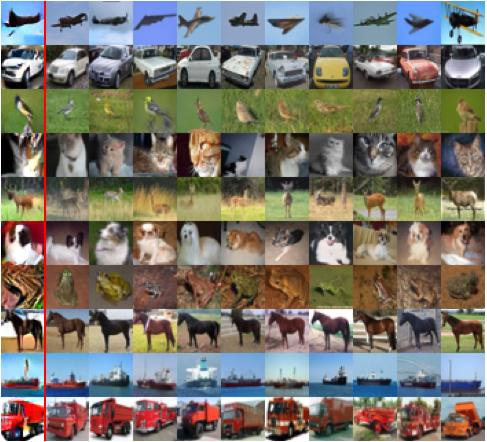}\label{fig:nearest_new_sh}}
\vspace{-5pt}
\caption{Nearest neighbor images according to $l_2$ distance between images. Different rows are for different classes. In each row, the leftmost image (\emph{i.e.}, on the left of the right vertical line) is generated by $f$-EBM, and the other images are nearest neighbors of the generated image in the training set.}
\label{fig:nearest}
\end{figure*}

\subsection{Intermediate Samples of Langevin Dynamics on CIFAR-10}\label{app:langevin-cifar10}
\vspace{10pt}

\begin{figure*}[h!]
\centering
\subfigure[Reverse KL]{
\includegraphics[height=1.2in]{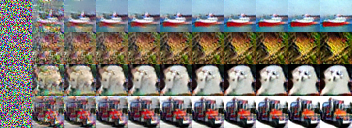}\label{fig:intermediate_sampling_reverse_kl}}
\\
\subfigure[Jensen Shannon]{
\includegraphics[height=1.2in]{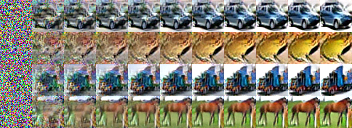}\label{fig:intermediate_sampling_js}}
\\
\subfigure[Squared Hellinger]{
\includegraphics[height=1.2in]{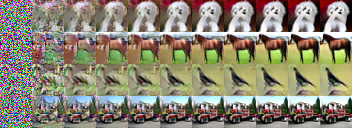}\label{fig:intermediate_sampling_sh}}
\vspace{-5pt}
\caption{Intermediate samples during Langevin dynamics sampling process for $f$-EBM.}
\label{fig:intermediate-samples}
\end{figure*}

\newpage
\subsection{Uncurated CelebA Samples from $f$-EBM}\label{app:celeba-samples}
\begin{figure*}[h!]
\centering
\subfigure[Reverse KL]{
\includegraphics[height=2.5in]{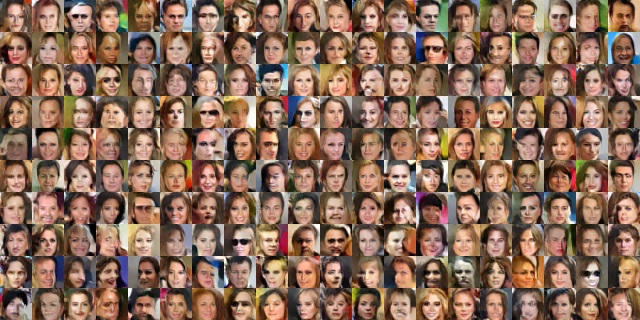}}
\\
\subfigure[Jensen Shannon]{
\includegraphics[height=2.5in]{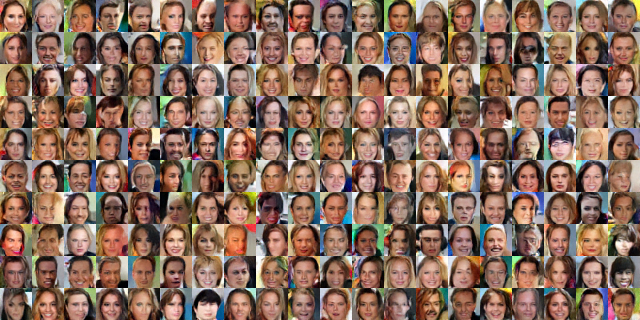}}
\\
\subfigure[Squared Hellinger]{
\includegraphics[height=2.5in]{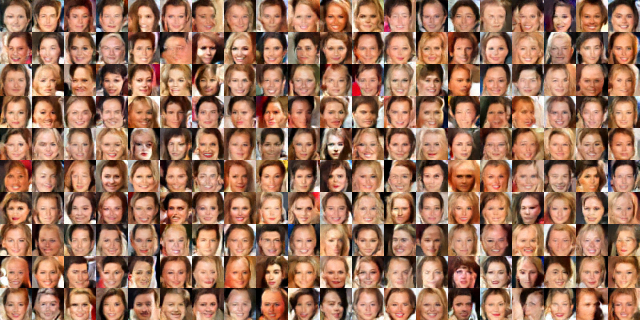}}
\vspace{-5pt}
\caption{Uncurated CelebA samples from $f$-EBM.}
\label{fig:celeba_samples}
\end{figure*}

\subsection{Intermediate Samples of Langevin Dynamics on CelebA}\label{app:langevin-celeba}
\begin{figure*}[h!]
\centering
\subfigure[Reverse KL]{
\includegraphics[height=2.5in]{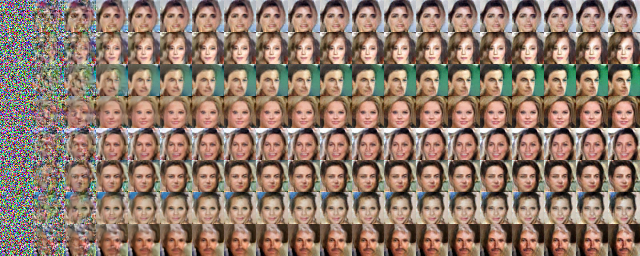}}
\\
\subfigure[Jensen Shannon]{
\includegraphics[height=2.5in]{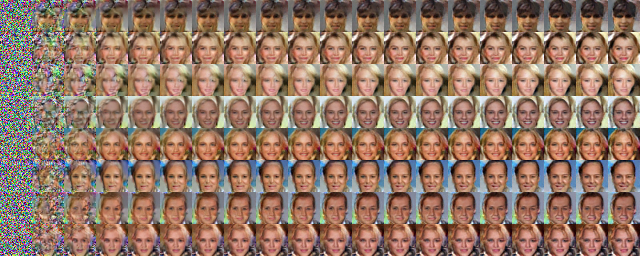}}
\\
\subfigure[Squared Hellinger]{
\includegraphics[height=2.5in]{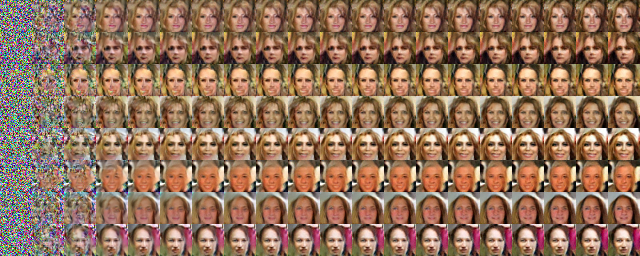}}
\vspace{-5pt}
\caption{Intermediate samples during Langevin dynamics sampling process for $f$-EBM.}
\label{fig:celeba_intermediate}
\end{figure*}

\newpage
\subsection{Architectures and Training Hyperparameters}\label{app:training-details}
\begin{wrapfigure}{r}{0.2\textwidth}
\begin{center}
\vspace{-18pt}
\includegraphics[width=\linewidth]{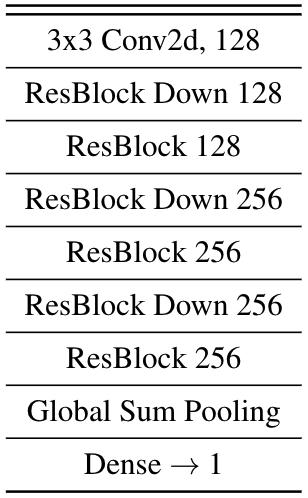}
\vspace{-25pt}
\caption{ResNet architecture for implementing the energy function and the variational function.} 
\label{fig:architecture}
\end{center}
\end{wrapfigure}

For all the experiments on natural images (conditional generation for CIFAR-10 and unconditional generation for CelebA), we use the residual network architecture \cite{he2016deep} in Figure~\ref{fig:architecture} (same as the conditional CIFAR-10 model in \cite{du2019implicit}) to implement both the energy function and the variational function of $f$-EBM. For the contrastive divergence baseline, we also use the same model architecture for fair comparisons. We note that the model architecture is highly relevant to the model performance and we leave the investigation of better architectures to future works.

For the CelebA dataset, we first center-crop the images to $140 \times 140$, then resize them to $32 \times 32$. For both CelebA and CIFAR-10, we rescale the images to $[0,1]$.
Following \cite{du2019implicit}, we apply spectral normalization and $L_2$ regularization (on the outputs of the models) with coefficient 1.0 to improve the stability. We use 60 steps Langevin dynamics together with a sample replay buffer of size 10000 to produce samples in the training phase. In each Langevin step, we use a step size of 10.0 and a random noise with standard deviation of 0.005. We use Adam optimizer with $\beta_1=0.0, \beta_2=0.999$ and learning rate of $3 \times 10^{-4}$ to optimize the parameters of both the energy function and the variational function. In each training iteration, we use a batch of 128 positive images and negative images. These training hyperparameters are used for both $f$-EBMs and the contrastive divergence baseline.

\subsection{Time Complexity and Convergence Speed}
In the experiments we use the same batch size, learning rate and MCMC step number as in CD \cite{du2019implicit}. Since we use single-step minimax optimization for both the energy function and the variational function, each iteration needs the same time for MCMC as \cite{du2019implicit} and twice time for stochastic gradient descent parameter updates. Overall, the time complexity per iteration of $f$-EBM is comparable to that of CD (within a factor of 2). The convergence speed of $f$-EBM is also similar to CD in terms of number of iterations needed (CD: 75K iterations; $f$-EBM with Jensen Shannon: 50K iterations; $f$-EBM with Squared Hellinger: 80K iterations; $f$-EBM with Reverse KL: 70K iterations; $f$-EBM with KL: 75K iterations).

\end{document}